\DeclareMathOperator*{\argmin}{argmin}
\theoremstyle{plain}
\newtheorem{theorem}{Theorem}
\newtheorem{corollary}{Corollary}
\newtheorem{lemma}{Lemma}
\newtheorem{remark}{Remark}
\newtheorem{proposition}{Proposition}
\newtheorem{conjecture}{Conjecture}
\newtheorem{definition}{Definition}
\newcommand{\norm}[1]{\left\lVert#1\right\rVert}
\definecolor{myred}{RGB}{215,48,39}
\definecolor{mygreen}{RGB}{26,152,80}
\definecolor{myblack}{RGB}{0,0,0}
\definecolor{myyellow}{RGB}{226,192,100}
\definecolor{mygray}{gray}{0.5}
\newcommand{\mycmark}{\textcolor{mygreen}{\ding{51}}}
\newcommand{\myxmark}{\textcolor{myred}{\ding{55}}}
\icmltitlerunning{Mechanistic Mode Connectivity}
\begin{document}

\twocolumn[
\icmltitle{Mechanistic Mode Connectivity}

\icmlsetsymbol{equal}{*}

\begin{icmlauthorlist}
\icmlauthor{Ekdeep Singh Lubana}{1,2,4}
\icmlauthor{Eric J.\ Bigelow}{2}
\icmlauthor{Robert P.\ Dick}{1}
\icmlauthor{David Krueger}{equal,3}
\icmlauthor{Hidenori Tanaka}{equal,2,4}
\end{icmlauthorlist}

\icmlaffiliation{1}{EECS Department, University of Michigan, Ann Arbor, MI, USA}
\icmlaffiliation{2}{Center for Brain Science, Harvard University, Cambridge, MA, USA}
\icmlaffiliation{3}{University of Cambridge, UK}
\icmlaffiliation{4}{Physics \& Informatics Laboratories, NTT Research, Inc., Sunnyvale, CA, USA}

\icmlcorrespondingauthor{Ekdeep Singh Lubana}{eslubana@umich.edu}

\icmlkeywords{Machine Learning, ICML}

\vskip 0.3in
]



\printAffiliationsAndNotice{\icmlEqualAdvising} 

\begin{abstract}
We study neural network loss landscapes through the lens of mode connectivity, the observation that minimizers of neural networks retrieved via training on a dataset are connected via simple paths of low loss. Specifically, we ask the following question: \textit{are minimizers that rely on different mechanisms for making their predictions connected via simple paths of low loss?} We provide a definition of \textit{mechanistic similarity} as shared invariances to input transformations and demonstrate that lack of linear connectivity between two models implies they use dissimilar mechanisms for making their predictions. Relevant to practice, this result helps us demonstrate that na\"{i}ve fine-tuning on a downstream dataset can fail to alter a model's mechanisms, e.g., fine-tuning can fail to eliminate a model's reliance on spurious attributes. Our analysis also motivates a method for targeted alteration of a model's mechanisms, named \emph{connectivity-based fine-tuning} (CBFT), which we analyze using several synthetic datasets for the task of reducing a model's reliance on spurious attributes. Code is available at: \url{https://github.com/EkdeepSLubana/MMC}.
\end{abstract}


\vspace{-20pt}
\section{Introduction}
\label{sec:intro}

\begin{figure}
  \vspace{-2pt}
  \begin{center}
    \includegraphics[width=0.9\columnwidth]{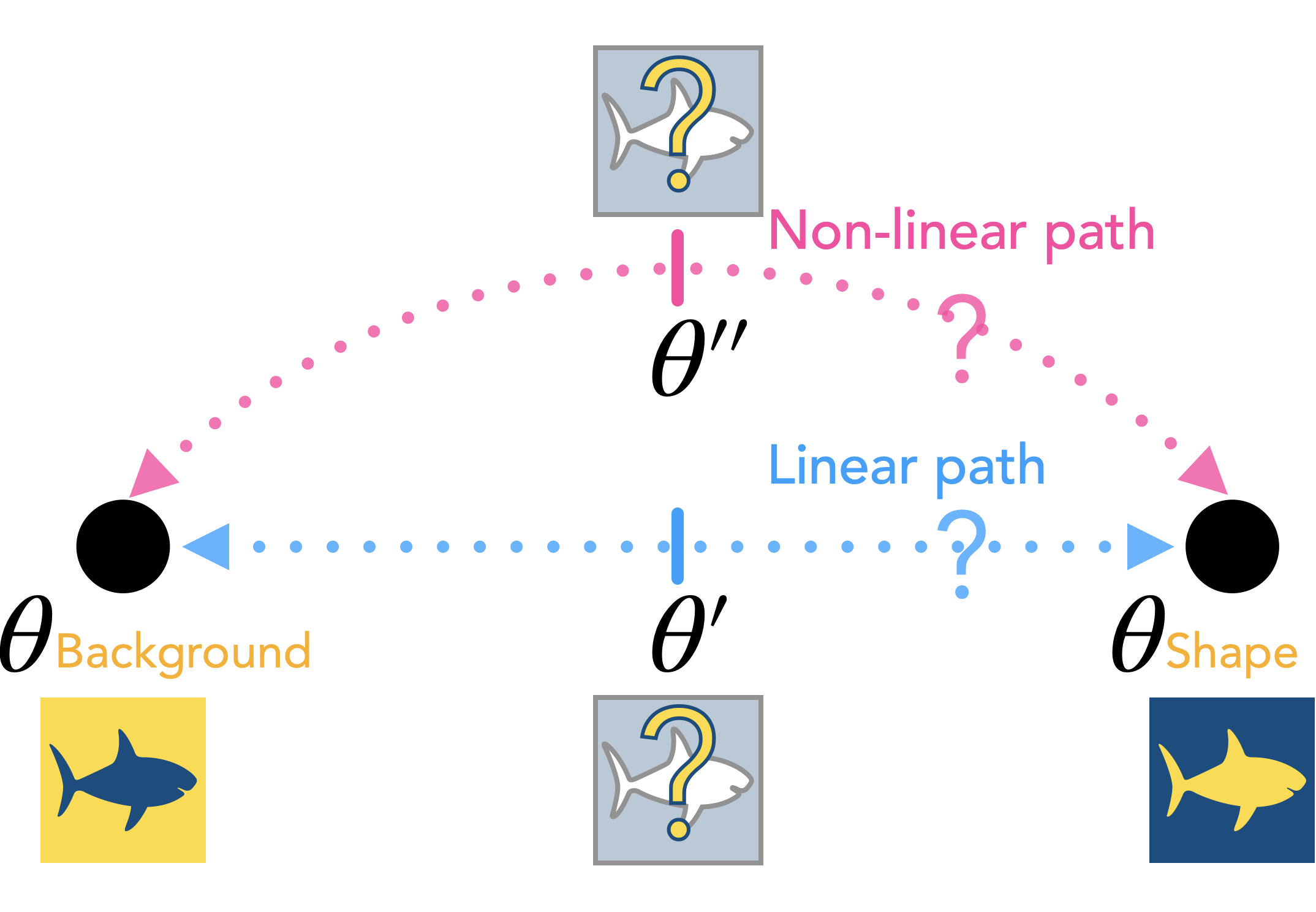}
  \end{center}
  \vspace{-1em}
  \caption{\label{fig:intro} \textbf{Mechanistic Lens on Mode connectivity. }
  Consider two sets of parameters that minimize loss using background $\theta_\mathrm{Background}$ and object shape $\theta_\mathrm{Shape}$ as the input attributes for prediction, respectively.
  Are such \textit{mechanistically dissimilar} minimizers connected via paths of low loss in the landscape?
  Does the dissimilarity of these mechanisms affect the simplicity of their connectivity paths? 
  Can we exploit this connectivity to switch between minimizers that use our desired mechanisms?}
  \vspace{-10pt}
\end{figure}

Loss landscapes of modern deep neural networks (DNNs) have been shown to contain infinitely many global minimizers that are equally reachable via standard gradient-based optimization techniques~\citep{kawaguchi2016deep, du2018gradient, du2019gradient, arora2018optimization, nguyen2017loss, nguyen2018optimization}. Recent work finds intriguing geometrical constraints relating these minimizers~\citep{simsek2021geometry, freeman2016topology, nguyen2018loss, nguyen2019connected, kuditipudi2019explaining, nguyen2021solutions}, showing them to be \textbf{connected} via a single, continuous manifold that emerges as a result of overparameterization. The existence of such connected sets of solutions has been heavily corroborated in literature on \textbf{mode connectivity}~\citep{garipov2018loss, draxler2018essentially, frankle2020linear, entezari2021role, ainsworth2022}, which, quite surprisingly, shows that the paths connecting global minimizers obtained via standard training pipelines are \textit{relatively simple} (e.g., linear or quadratic). In parallel, several papers recently demonstrated that different models trained on a task can perform radically differently at test time~\citep{d2020underspecification, hermann2020shapes, hendrycks2021many}. This behavior can be partially ascribed to models learning to utilize rather dissimilar attributes of an input for making their predictions~\citep{hermann2020origins, islam2021shape, scimeca2021shortcut, taori2020measuring}. For example, in most vision datasets, backgrounds are correlated with object categories---a sampling bias~\citep{beery2018recognition, xiao2020noise}. Consequently, a model can infer the correct label of an object by learning \textbf{mechanisms} to identify either its background or its shape; however, only models that rely on shape are likely to generalize robustly~\citep{geirhos2018imagenet, geirhos2020shortcut, ritter2017cognitive}. Thus, despite models of both types being equally performant on a given dataset, the exact mechanisms they use for making their predictions disallows for us to consider them equally useful.

\textbf{This work:} We argue prior literature analyzing connectivity properties in DNN loss landscapes has ignored the influence of the exact mechanisms a model implements for performing a task (see Fig.~\ref{fig:intro}). In fact, due to inherent tendencies in the training pipelines of modern DNNs towards learning simple functions~\citep{kalimeris2019sgd, valle2018deep, rahaman2019spectral, shah2020pitfalls, mangalam2019deep}, minimizers identified via training on the same dataset often exhibit similar biases~\citep{shah2020pitfalls, nanda2022measuring}. Such \textbf{similarity} in the models' prediction mechanisms may influence the identifiability of simple connectivity patterns in the loss landscape, such as the ones observed in prior work. Importantly, it has remained unclear if \textit{\textbf{mechanistically dissimilar} models, e.g., ones that rely on background and ones that rely on shape, exhibit connectivity at all.} Beyond a better scientific understanding of DNN loss landscapes, knowledge of such geometric properties relating mechanistically dissimilar minimizers can possibly lead to practical insights for designing post-hoc, sample-efficient \textit{fine-tuning} strategies that allow switching to minimizers that follow our desired predictions mechanisms. Motivated by questions above, we make the following contributions.
\begin{itemize}[itemsep=2pt,topsep=-4pt,leftmargin=9pt, parsep=0pt,partopsep=0pt]
    \item \textbf{Defining a notion of \emph{mechanistic similarity} (\S\ref{sec:funcsim}).} We characterize mechanistic similarity of two models via systematic interventions on the data-generating process, claiming similarity if the models are \textit{invariant} to the same set of interventions. Our definition is motivated to account for the specific attributes of an input (e.g., shape vs.\ background) a model relies on for making predictions. When analyzed in the context of \textit{spurious} attributes, our definition leads to a characterization of DNN loss landscapes that is relevant to challenges of robustness~\citep{d2020underspecification, teney2022predicting, jacobsen2018excessive}.

    \item \textbf{Characterizing connectivity properties of mechanistically (dis)similar models (\S\ref{sec: mech-connect}).} Our analysis shows that \textit{if two models lack linear connectivity in the landscape (up to architectural symmetries), they must be mechanistically dissimilar}; that is, existence of loss barriers on the linear path between two models implies they have learned different invariances (see Fig.~\ref{fig:smc},~\ref{fig:lmc}). Our results especially hold implications for na\"{i}ve fine-tuning of a pretrained network, which often yields models linearly connected with the original pretraining solution~\citep{neyshabur2020being}. Specifically, if a model has learned to rely on spurious attributes during pretraining, our results imply mere fine-tuning on some ``clean'' dataset may not improve its robustness. We augment these first steps towards a mechanistic characterization of loss landscapes with extensive empirical verification over a broad variety of settings, including different datasets, architectures, connectivity paths, and training strategies.

    \item \textbf{Exploiting lack of linear connectivity to efficiently alter a model's mechanisms (\S\ref{sec:mechanistic fine tuning}).} Based on our analysis, we propose a method, \textit{Connectivity-Based Fine-Tuning (CBFT)}, that exploits lack of linear connectivity between mechanistically dissimilar models to induce models that differ in specific prediction mechanisms (\S\ref{sec:mechanistic fine tuning}). Extensive experiments on synthetic datasets show CBFT is more effective than recent methods~\citep{flkirichenko2022, kirichenko2022last, kumar2022fine} at reducing a model's tendency to rely on spurious attributes for making its predictions.
\end{itemize} 
\vspace{-5pt}

\section{Preliminaries: Mode Connectivity}
Intuitively, mode connectivity along a path implies moving along that path does not witness \textit{barriers} in error or loss. We formalize this below, in line with prior work~\citep{frankle2020linear, garipov2018loss, draxler2018essentially, entezari2021role, benton2021loss, pittorino2022deep}. Consider a neural network $f: \mathbb{R}^{n} \times \mathbb{R}^{d} \to [K]$ that takes $n$-dimensional inputs $x \in \mathcal{X} \subset \mathbb{R}^{n}$, has parameters $\theta \in \mathbb{R}^{d}$, and produces an output $f(x; \theta) \in [K]$, where $[K]$ denotes the set $\{1, 2, \dots, K\}$. We say $\theta$ ``induces the model'' $f(.; \theta)$. A model's loss on a dataset $\mathcal{D} \in \mathcal{X} \times [K]$ for set of parameters $\theta$ is denoted using $\mathcal{L}(f(\mathcal{D}; \theta))$; $\theta$ is called a minimizer of the loss on that dataset if $\mathcal{L}(f(\mathcal{D}; \theta)) < \epsilon$, where $\epsilon$ is some small scalar. Note that we primarily focus on minimizers obtained using SGD. We denote a continuous path between two sets of parameters $\theta_1, \theta_2$ as $\gamma_{\theta_1 \to \theta_2}(t)$, where $\gamma_{\theta_1 \to \theta_2}(0) = \theta_1$ and $\gamma_{\theta_1 \to \theta_2}(1) = \theta_2$.

\begin{definition} 
\label{def:mode connectivity}
\textbf{(Mode Connectivity.)} Minimizers $\theta_1, \theta_2$ corresponding to a dataset $\mathcal{D}$ are called \textit{mode connected} along the path $\gamma_{\theta_1 \to \theta_2}(t)$ if moving along the path never yields barriers. Formally, $\forall\, t \in [0, 1]$, $\mathcal{L}(f(\mathcal{D}, \gamma_{\theta_1 \to \theta_2}(t))) \leq t \cdot \mathcal{L}(f(\mathcal{D}; \theta_0)) + (1-t) \cdot \mathcal{L}(f(\mathcal{D}; \theta_1))$.
\vspace{-6pt}
\end{definition}
As mentioned in \S\ref{sec:intro}, prior work shows mode connectivity is exhibited in modern DNNs' loss landscapes along rather simple paths. We focus on the following two families:
\vspace{-5pt}
\begin{equation*}
\label{eq:paths}
\begin{split}
&\text{(i) \textbf{Linear: }} \gamma_{\theta_1 \to \theta_2}(t) = (1-t) \theta_1 + t \theta_2 \quad\text{and} \\
&\text{(ii) \textbf{Quadratic: }} \gamma_{\theta_1 \to \theta_2}(t) = (1-t)^2 \theta_1 + 2 t (1-t) \theta_{12} + t^2 \theta_2.
\end{split}
\vspace{-5pt}
\end{equation*}
In the above, $\theta_{12}$ denotes a set of parameters that is explicitly optimized to identify a quadratic path connecting $\theta_1$ and $\theta_2$; notably, then, quadratic paths are a function of the data used for identifying them (see App.~\ref{app:quad_paths} for further discussion). 
 
\citet{entezari2021role} recently hypothesized that accounting for permutation symmetry\footnote{Note that DNNs exhibit several architectural symmetries and a more general statement would account for all such symmetries, as done by \citep{pittorino2022deep}. However, symmetries beyond permutations are unlikely to play a critical role in analysis of mode connectivity of SGD based minimizers (see App.~\ref{app:othersyms} for details).} of DNN architectures~\citep{hecht1990algebraic} in fact leads to observance of linear connectivity between any two linearly \textit{disconnected} minimizers obtained using SGD; \citet{entezari2021role, singh2020model, ainsworth2022} extensively probe and corroborate this claim empirically. To demonstrate the robustness of our results, we also assess whether accounting for permutation symmetry leads to linear connectivity between mechanistically dissimilar models. Specifically, we follow the ``activation matching'' algorithm used by \citet{ainsworth2022} and call these paths \textit{Linear (permuted)}. \vspace{-3pt}

\section{Defining Mechanistic Similarity}
\label{sec:funcsim}
To analyze whether models that rely on different mechanisms for making their predictions exhibit mode connectivity, we must first define a notion of mechanistic similarity between two models. For this purpose, we argue two models are mechanistically similar if they utilize the same attributes of an input to make their predictions (e.g., shape or background). This can be assessed by transforming an input to alter some attribute of interest and thereafter checking if the two models under consideration make the same predictions on these transformed inputs. By using transformations that embody task-relevant vulnerabilities, this definition can be made practically well-motivated. For example, by choosing background randomization as an input transformation, we can assess whether two models rely on the (often) spurious attribute of background to make their predictions (Fig.~\ref{fig:intro}). 

To formalize the intuition above, we describe a generative model of data that can represent input transformations in a general manner. Specifically, we follow prior literature on disentanglement~\citep{locatello2019challenging, locatello2020weakly, gresele2020incomplete, gresele2021independent, von2021self} and Independent Component Analysis (ICA)~\citep{hyvarinen2016unsupervised, hyvarinen2017nonlinear, khemakhem2020variational, khemakhem2021causal}, and assume that there is a latent space $\mathcal{Z} \subset \mathbb{R}^{m}$, with $z$ sampled from a factorizable distribution, $P(z) = \prod_i P(z_i)$, such that each $z$ uniquely maps to samples in the dataset via a generative process $\mathcal{G}:\mathcal{Z} \to \mathcal{X} \times [K]$, i.e., $(x, y) := \mathcal{G}(z)$. If $\mathcal{G}_X$, $\mathcal{G}_Y$ define the components of $\mathcal{G}$ producing $x$ and $y$, the uniqueness of $z$ amounts to assuming invertibility of $\mathcal{G}_X(.)$, denoted as $\mathcal{G}_X^{-1}:\mathcal{X} \to \mathcal{Z}$. Using the notations above, we can model input transformations as counterfactuals generated via systematic interventions on the data-generating process, similar to~\citet{besserve2018counterfactuals, besserve2018group}.

\begin{definition} 
\label{defn:intervene}
\textbf{(Unit Interventions and Counterfactuals.)} A unit intervention $\mathcal{A}_i^{\alpha_i}: \mathcal{Z}_i \times \mathcal{Z}_i \to \mathcal{Z}_i$ on the data-generating process $\mathcal{G}$ is the alteration of the $i^{\text{th}}$ dimension of a latent vector $z$ by setting it to a predefined scalar $\alpha_i \in \mathcal{Z}_i$. Meanwhile, a counterfactual process $\mathcal{E}: \mathcal{X} \times \mathcal{Z}_m \times \dots \times \mathcal{Z}_1 \to \mathcal{X}$ transforms a sample $x$ by changing its corresponding latent vector $z = \mathcal{G}_X^{-1}(x)$ via a set of unit interventions $\widehat{\mathcal{A}} := \{\mathcal{A}_{i}^{{\alpha_i}}\}_{i=1}^{m}$ and mapping it back to the input space, i.e., $\mathcal{E}(x; \widehat{\mathcal{A}}) = \mathcal{G}_X \circ \mathcal{A}_{m}^{\alpha_m} \circ \dots \circ \mathcal{A}_{1}^{\alpha_1} \circ \mathcal{G}_X^{-1}(x)$.\footnote{We slightly abuse notation and assume that unit interventions corresponding to all latent dimensions need \textit{not} be mentioned in $\widehat{\mathcal{A}}$: if a dimension is unmentioned, then its value is unmodified.}
\vspace{-5pt}
\end{definition}
Broadly, unit interventions describe systematic manipulations of the latent space of a generative process, while counterfactuals describe mapping of these manipulations to the observable data space. Note that due to independence of latent dimensions, our definition of unit interventions easily composes and can model other notions of interventions~\citep{scholkopf2021towards, peters2017elements}. Combined with counterfactuals, unit interventions are thus sufficient to model any general input transformations in a formal manner and can be used to characterize the input attributes a network is sensitive to, as shown next. 

\begin{figure}
  \begin{center}
    \includegraphics[width=\linewidth]{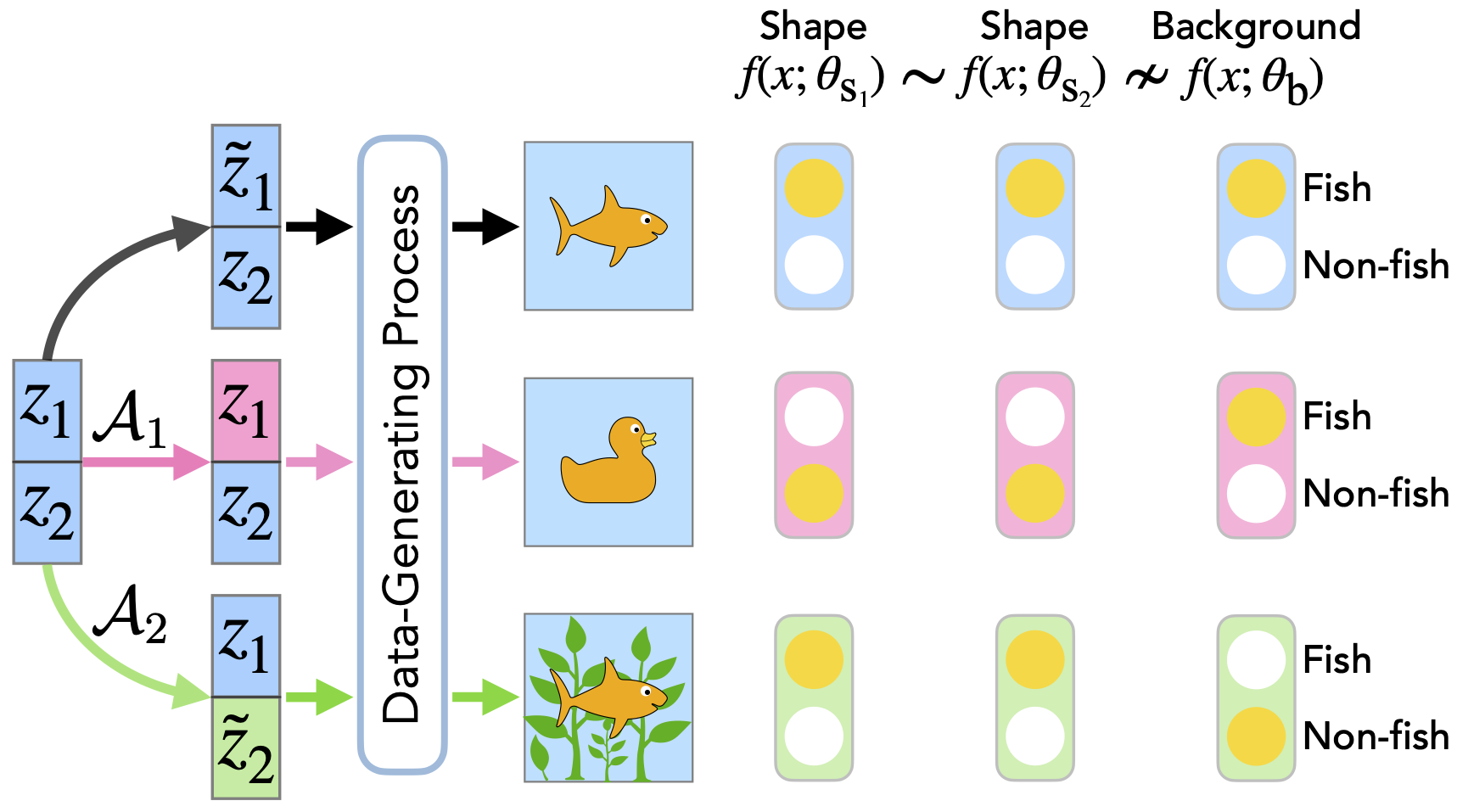}
  \end{center}
  \vspace{-5pt}
  \caption{\label{fig:mechsim}\textbf{Mechanistic Similarity:} We define mechanistic similarity of two models based on how they respond to unit interventions on the data-generating process, i.e., interventions on specific dimensions of the latent vector $z$; e.g., $\mathcal{A}_1$ (shape) and $\mathcal{A}_2$ (background) in the figure. Here, yellow circles represent the prediction of a given model (column) on a counterfactual image (row). Models whose predictions are invariant to the same set of interventions (denoted $\theta_1 \sim \theta_2$) are termed mechanistically similar.}
  \vspace{-15pt}
\end{figure}

\begin{definition} \textbf{(Invariance.)} We say $f(.; \theta)$ is invariant to unit intervention $\mathcal{A}_{i}$ if counterfactuals generated by $\mathcal{A}_{i}$ do not increase its loss, i.e., $\mathcal{L}(f(\mathcal{D}; \theta)) = \mathbb{E}_{\alpha \in \mathcal{Z}_{i}}\mathcal{L}(f(\mathcal{E}(\mathcal{D}; \mathcal{A}_{i}^{\alpha}); \theta))$.
\end{definition}

\begin{proposition}
\label{lem1}
\textbf{(Exhaustiveness of Unit Interventions.)}
If $f(.; \theta)$ is invariant to unit interventions $\mathcal{A}_{i}$ and $\mathcal{A}_{j}$, it must be invariant to their composition. Further, lack of invariance to $\mathcal{A}_{i}$ or $\mathcal{A}_{j}$ precludes invariance to their composition. 
\vspace{-5pt}
\end{proposition}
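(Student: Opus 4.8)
The plan is to route both halves of the proposition through one reformulation: at a minimizer, invariance to $\mathcal{A}_i$ is equivalent to the prediction map $F:=f(\,\cdot\,;\theta)\circ\mathcal{G}_X$ being (almost everywhere) independent of the $i$-th latent coordinate on the counterfactual support. Once invariance is rephrased as ``$F$ does not depend on coordinate $i$,'' the statement reduces to the elementary measure-theoretic fact that, under a product law, a function is a.e.\ independent of both $z_i$ and $z_j$ if and only if it is a.e.\ independent of the pair $(z_i,z_j)$; reading this equivalence in its two directions yields the two claims.

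First I would reduce the expectation-based definition of invariance to this pointwise form. Let $\delta_i:=\mathbb{E}_{\alpha\in\mathcal{Z}_i}\,\mathcal{L}(f(\mathcal{E}(\mathcal{D};\mathcal{A}_i^{\alpha});\theta))-\mathcal{L}(f(\mathcal{D};\theta))$ denote the excess loss, so that invariance to $\mathcal{A}_i$ is exactly $\delta_i=0$. The key sublemma is nonnegativity of the excess, $\delta_i\ge 0$: because $\theta$ is a minimizer and the loss is nonnegative, the clean loss already sits at its floor, so replacing an input coordinate while holding the label fixed can only add loss. Writing $\delta_i$ as an average over $\alpha$ of nonnegative per-value excesses, the equality $\delta_i=0$ forces each to vanish for a.e.\ $\alpha$, and then (idealizing the minimizer to zero clean loss, with a loss that vanishes exactly on correct predictions) forces $F(z[\alpha])=\mathcal{G}_Y(z)$ for a.e.\ sample $z$ and intervention value $\alpha$ --- i.e.\ $F$ is a.e.\ independent of coordinate $i$, as claimed.

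Next I would combine coordinates. Because unit interventions on distinct dimensions modify disjoint entries of $z$ and hence commute, and because $P(z)=\prod_k P(z_k)$ factorizes, resampling coordinate $i$ from its marginal leaves the joint law of the remaining coordinates --- in particular coordinate $j$ and its relationship to the label --- intact, so the counterfactuals stay in the product support where $F$ is controlled. A Fubini-type argument then upgrades ``$F$ a.e.\ independent of $z_i$'' and ``$F$ a.e.\ independent of $z_j$'' to ``$F$ a.e.\ independent of $(z_i,z_j)$,'' which by the reduction above is precisely invariance to $\mathcal{A}_i\circ\mathcal{A}_j$; this is the first claim. For the second claim I use the converse, trivial direction: a.e.\ independence of $(z_i,z_j)$ immediately marginalizes to a.e.\ independence of $z_i$, so contrapositively a genuine (positive-measure) dependence of $F$ on $z_i$ --- i.e.\ failure of invariance to $\mathcal{A}_i$ --- rules out invariance to the composition, and the symmetric argument handles ``$\mathcal{A}_i$ or $\mathcal{A}_j$.''

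The main obstacle is the first reduction, from the averaged definition to the ``a.e.\ independent'' characterization, together with the non-interaction of the two interventions. Both rest on the same two ingredients: nonnegativity of the excess loss, which requires the minimizer property and some care with the $<\epsilon$ slack in the definition of a minimizer (cleanest when idealized to $\epsilon=0$); and the product structure of $P(z)$, without which coordinatewise independence need not combine into joint independence and an interaction term between $i$ and $j$ could survive (note that the excess does \emph{not} in general decompose additively as $\delta_{ij}=\delta_i+\delta_j$, e.g.\ for an $\mathrm{XOR}$-type coupling, so the constancy argument rather than naive additivity is essential). The residual measure-theoretic bookkeeping --- taking the intervention distribution on $\mathcal{Z}_i$ to be the marginal $P(z_i)$ so counterfactuals remain in support, and evaluating an a.e.-constant slice at a fixed coordinate value --- is routine but is precisely where a fully rigorous version must be spelled out.
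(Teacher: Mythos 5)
Your proposal is correct (modulo the idealizations you explicitly flag), but it takes a genuinely different route from the paper's. The paper's proof is a short algebraic computation in input space: since $\mathcal{G}_X^{-1}\circ\mathcal{G}_X=\mathrm{id}$, the counterfactual of a composed intervention factorizes as a composition of counterfactuals, $\mathcal{E}(x;\{\mathcal{A}_i,\mathcal{A}_j\})=\mathcal{E}(\mathcal{E}(x;\mathcal{A}_j);\mathcal{A}_i)$; reading invariance as a pointwise identity on predictions (applied in particular at the counterfactual input $\mathcal{E}(x;\mathcal{A}_j)$), invariance to $\mathcal{A}_i$ collapses this to $f(\mathcal{E}(x;\mathcal{A}_j);\theta)$, and both claims follow by applying, or negating, invariance to $\mathcal{A}_j$. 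You instead work in latent space: you derive the passage from the loss-based definition of invariance to a.e.\ prediction-invariance from the minimizer property and nonnegativity of the loss, then combine coordinates by Fubini under the product law, and obtain the converse by marginalization. Each approach buys something. The paper's is shorter and needs no measure theory, but it silently strengthens the expectation-based invariance definition into an exact functional identity valid at off-sample points---precisely the step you make explicit and justify rather than assume. Conversely, your marginalization argument proves the converse in full strength: composition-invariance implies invariance to each of $\mathcal{A}_i$ and $\mathcal{A}_j$, whose contrapositive is exactly the stated ``or''. The paper's case analysis only treats ``invariant to $\mathcal{A}_i$ but not to $\mathcal{A}_j$'' (and its mirror image), leaving unaddressed the case where both individual invariances fail; under your zero-loss idealization that case is handled automatically, because nonnegativity of the loss rules out the cancellation that a purely expectation-based reading might otherwise permit. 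Finally, both arguments lean on the same structural facts you identify---unit interventions on distinct coordinates commute, and the product law keeps resampled counterfactuals in support---which the paper acknowledges only in the informal remark closing its proof.
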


\begin{figure*}
\centering
\begin{subfigure}{0.22\textwidth}
  \centering
  \centerline{\includegraphics[width=0.85\columnwidth]{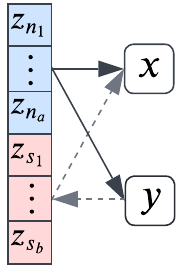}}
\end{subfigure}%
\begin{subfigure}{0.7\textwidth}
  \centering
  \centerline{\includegraphics[width=0.85\columnwidth]{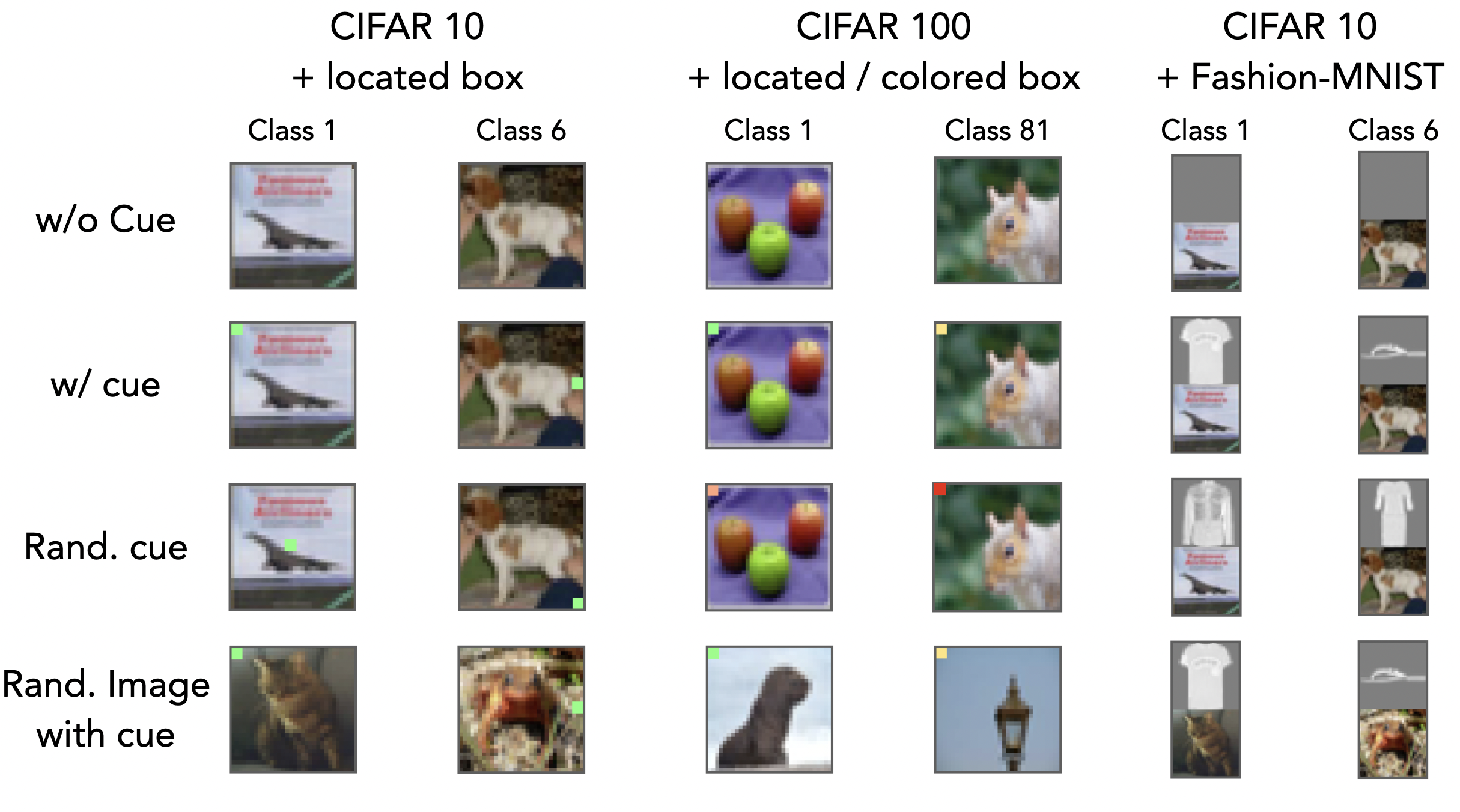}}
\end{subfigure}
  \caption{\label{fig:dgp}
  \textbf{Data-Generating Process (left).} We augment the natural latents $\{z_{n}\}$ of a data-generating process with a set of synthetic latents $\{z_{s}\}$. The attributes induced in the input by these synthetic latents are called \textit{cues}. Conditioning (grey, dotted line) the value of a synthetic latent on the target label ($y$), we can induce correlation between its corresponding cue and the desired model output. If the cue is made easily separable, a DNN will preferentially learn mechanisms to use the cue for making its predictions~\citep{shah2020pitfalls} (see also training curves in App.~\ref{app:setup}). 
  \textbf{Synthetic Datasets (right).} Following the protocol above, we embed synthetic cues in three existing datasets: (1) CIFAR-10 with $3 \times 3$ box cues whose locations depend on the target label; (2) CIFAR-100 with $3 \times 3$ box cues colored according to the first digit of the object label, and located according to the second digit; and (3) Dominoes \citep{shah2020pitfalls}, where CIFAR-10 images are concatenated with Fashion-MNIST images of the same class. We analyze counterfactual datasets that involve removing the cue (\textit{w/o Cue}), keeping it (\textit{w/ cue}), randomizing it (\textit{Rand. cue}), or randomizing the natural image (denoted \textit{Rand.\ image}). These counterfactuals help us ascertain the extent to which a model's prediction relies on natural vs.\ spurious attributes.
 }
  \vspace{-10pt}
\end{figure*}

The above statement shows that studying a model's response to individual unit interventions is sufficient to characterize which attributes of the data a model is using for making predictions: if a model is invariant to a set of unit interventions, it must be invariant to their composition too; similarly, lack of invariance to a unit intervention is sufficient to preclude invariance to all counterfactuals produced by the composition of that intervention and a set of invariant interventions. This result thus helps us circumvent the need for assessing a model's sensitivity to all possible combinations of interventions to fully characterize it. We are now ready to define mechanistic similarity.

\begin{definition}\label{def:mechsim}
\textbf{(Mechanistic Similarity.)} Consider a set of unit interventions $\widehat{\mathcal{A}} := \{\mathcal{A}_{i}\}$, where $i \in [m]$. For parameters $\theta$, denote the subset of interventions that $f(.; \theta)$ is invariant to as $\mathcal{I}(\theta) \subset \widehat{\mathcal{A}}$. Then, $f(.; \theta_1)$ and $f(.; \theta_2)$ are called mechanistically similar if $\mathcal{I}(\theta_1) = \mathcal{I}(\theta_2)$.
\vspace{-5pt}
\end{definition}

Fig.~\ref{fig:mechsim} illustrates mechanistic similarity in an intuitive manner. Formally, given a set of independent transformations (instantiated by use of unit interventions), we say two models are mechanistically similar if they exhibit invariance to the same set of interventions. Our definition shares motivation with the idea of \textit{prediction mismatch}, which involves assessing the number of distinct examples two models produce different predictions on, and has been used in prior work to analyze properties such as calibration and catastrophic interference~\citep{hooker2019compressed, mania2019model, toneva2018empirical, maini2022characterizing}. In contrast, mechanistic similarity is based on assessment of the number of distinct interventions on the data-generating process to which two models are simultaneously invariant. This makes mechanistic similarity more appropriate for problems involving distribution shifts and robustness, where modeling the data-generating process is of crucial importance~\citep{kaur2022modeling}. We next extend the definition of mode connectivity to account for mechanistic similarity of two models.

\begin{definition} \label{def:mechconnect}
\textbf{(Mechanistic Connectivity.)} 
Consider two minimizers $\theta_1$ and $\theta_2$ of loss $\mathcal{L}(f(\mathcal{D}; \theta))$ on a dataset $\mathcal{D}$. Let $\mathcal{E}(\mathcal{D}) := \{\mathcal{E}(\mathcal{D}; \mathcal{A}_{i}^{\alpha_i\sim\mathcal{Z}_i})\}_{i=1}^{m}$ denote a set of counterfactual datasets designed by applying unit interventions $\mathcal{A}_i$ to all points in dataset $\mathcal{D}$, where intervention assignments $\alpha_i$ are chosen uniformly from the respective range of values $\mathcal{Z}_i$. Then, $\theta_1$ and $\theta_2$ are called mechanistically connected along the path $\gamma_{\theta_1 \to \theta_2}(t)$ if, for all counterfactual datasets, they are minimizers that exhibit mode connectivity.
\vspace{-5pt}
\end{definition}
Essentially, if two minimizers exhibit mechanistic connectivity, then there exists a path such that moving along it does not yield increase in loss on the counterfactual dataset described by any pre-defined intervention; that is, all points on the path induce mechanistically similar models. Meanwhile, if two minimizers induce mechanistically dissimilar models, moving along any path between them will necessarily involve a change in the mechanisms used for making predictions. If this change yields increase in loss on an intermediate point on the path between two minimizers, then it is harmful for the distribution shift described by the corresponding intervention. Mechanistic connectivity is defined to succinctly capture this behavior and characterize the connectivities of mechanistically (dis)similar models.
\vspace{-6pt}

\begin{figure*}
\centering
\begin{subfigure}{\textwidth}
  \centering
  \centerline{\includegraphics[width=\textwidth]{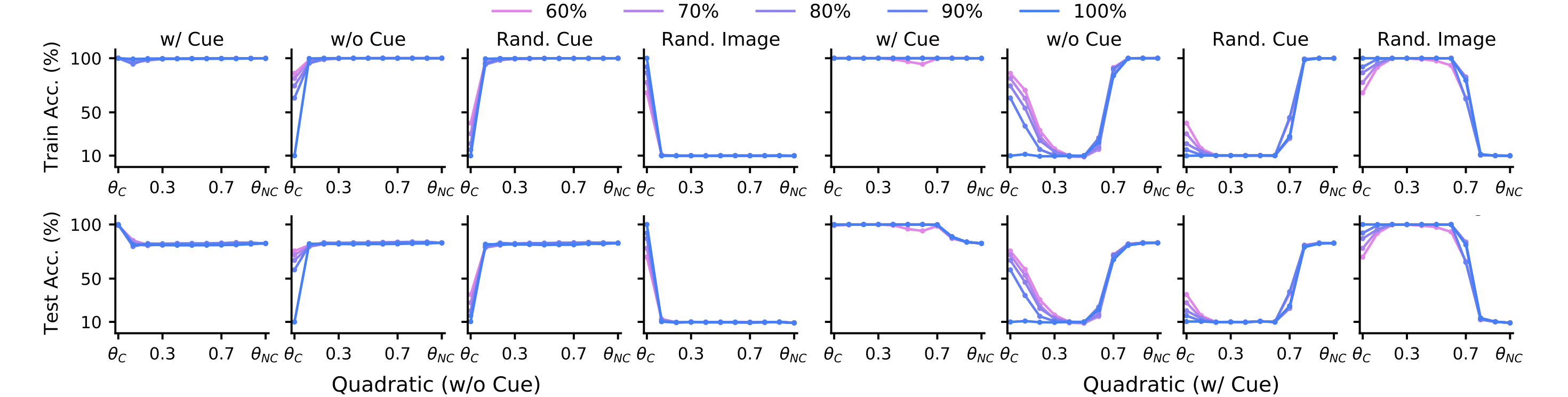}}
  \vspace{0pt}
  \label{fig:1smc_vgg}
\end{subfigure}
\begin{subfigure}{\textwidth}
  \centering
  \centerline{\includegraphics[trim={0 0 0 0 pt}, clip, width=\textwidth]{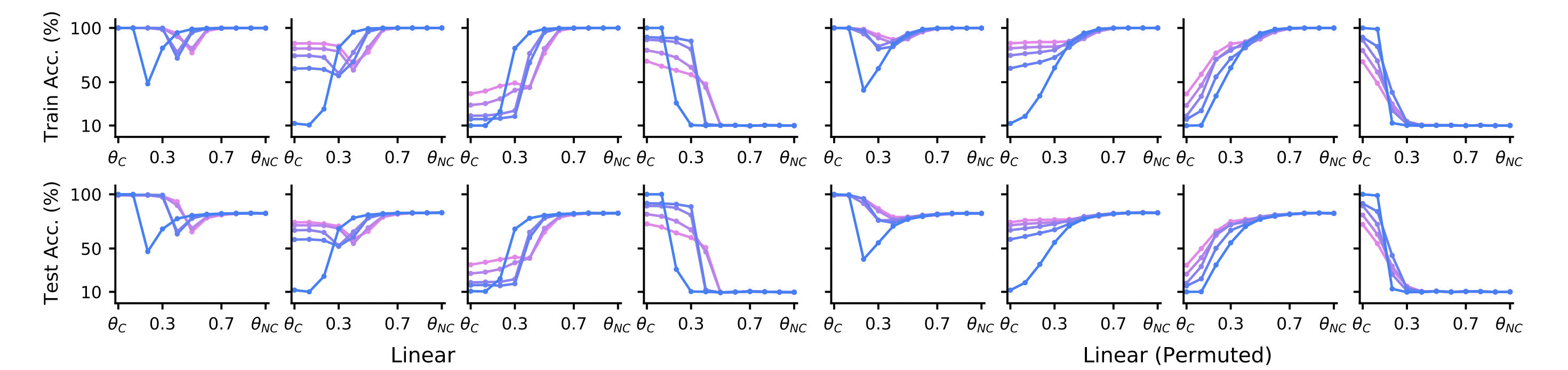}}
  \label{fig:1smc_res18}
\end{subfigure}
\vspace{-14pt}
\caption{\label{fig:smc}\textbf{Non-Linear Mode Connectivity of Mechanistically Dissimilar Models.} We train ResNet-18 models on our synthetic CIFAR-10 datasets with and without box-cues (denoted $\theta_{\text{C}}$ and $\theta_{\text{NC}}$, respectively). We evaluate quadratic and linear connectivity paths; quadratic paths identified using both data with and w/o cues are analyzed. Line colors denote proportion of the training data with synthetic cues. Plot titles denote evaluation data (see Fig.~\ref{fig:dgp}), including data where either the cue is present (w/ Cue), absent (w/o Cue), randomized (Rand.\ Cue), or the underlying image is randomized (Rand.\ Image). As shown, $\theta_{\text{NC}}$ yields the same performance upon randomization of the cue, while the performance of $\theta_{\text{C}}$ decreases substantially; i.e., the two minimizers induce mechanistically dissimilar models. We see: (i) quadratic paths can be easily identified to mode connect mechanistically dissimilar models; (ii) linear paths cannot be identified, even after permutations; and (iii) mechanistic connectivity is unfounded. See App.~\ref{app:smc_results} for similar results on other settings and loss curves.
}
\vspace{-14pt}
\end{figure*}

\vspace{-2pt}
\section{Setup for a Mechanistic Evaluation}
\label{sec:dgpsetup}
Before proceeding further, we discuss how we construct mechanistically dissimilar models and assess mechanistic connectivity between them. This allows us to interleave our formal results with experimental verification and demonstrate the validity of our claims in context. 

\paragraph{Designing mechanistically dissimilar models.} To design models that use different mechanisms for making predictions, we design easily manipulable synthetic datasets that contain multiple viable discriminative attributes. Specifically, our data-generating process is illustrated in Fig.~\ref{fig:dgp} and involves augmenting the natural generative process with synthetic latent variables that are conditioned on the target label. We refer to the attributes induced in the input by such latents as \textit{cues}. By intentionally designing cues that are easily separable, we can exploit the \textit{simplicity bias} of modern DNNs and force our models to preferentially utilize these cues over natural attributes for making their predictions~\citep{shah2020pitfalls}. Training curves for different models are shown in App.~\ref{app:setup} and clearly demonstrate that the process above yields mechanistically dissimilar models: models trained with high correlation between cue and target label rely only on the cue for making predictions, showing invariance to natural attributes; models trained without cues are invariant to them. Importantly, such low-complexity cues can be viewed as stand-ins for spurious or shortcut attributes that are commonplace in realistic settings~\citep{beery2018recognition, geirhos2020shortcut}, allowing us to determine whether minimizers that induce models reliant on spurious vs.\ non-spurious attributes are connected in the landscape. \vspace{-5pt}

\paragraph{Generating counterfactuals for analyzing mechanistic connectivity.} A primary need for our mechanistic analysis of mode connectivity is the ability to generate counterfactuals via unit interventions. To that end, we highlight that the data-generating process defined above is easy to unit-intervene on. Specifically, since the natural attributes and the synthetically embedded cue are controlled by independent latents, the following counterfactual datasets can be generated via valid unit intereventions: (i) \textit{w/ Cue}: identity intervention that does not alter the cue; (ii) \textit{w/o Cue}: removes the cue from the image; (iii) \textit{Rand.\ Cue}: randomizes the cue to break its correlation with the target label (e.g., uniformly changing location of the box in the CIFAR-10 with box cue dataset); and (iv) \textit{Rand.\ Image}: randomizes the natural attributes by altering the underlying image, while keeping the cue intact (e.g., replacing plane with cat). These counterfactuals are especially interesting since they allow us to assess how much a model relies on natural attributes found in the source image vs.\ our synthetically embedded, spurious cues for making its predictions (see Fig.~\ref{fig:dgp}). 
\vspace{-5pt}

\section{Mechanistic Analysis of Mode Connectivity}
\label{sec: mech-connect}
We now demonstrate how mechanistic similarity of two models affects their connectivity patterns in the landscape. We start with the following proposition, which is implied by the results of \citet{nguyen2019connected, simsek2021geometry}, and shows mechanistically dissimilar models can indeed be mode connected.\vspace{-5pt}

\begin{figure*}
\centering
\begin{subfigure}{0.5\textwidth}
  \centering
  \centerline{\includegraphics[width=\columnwidth]{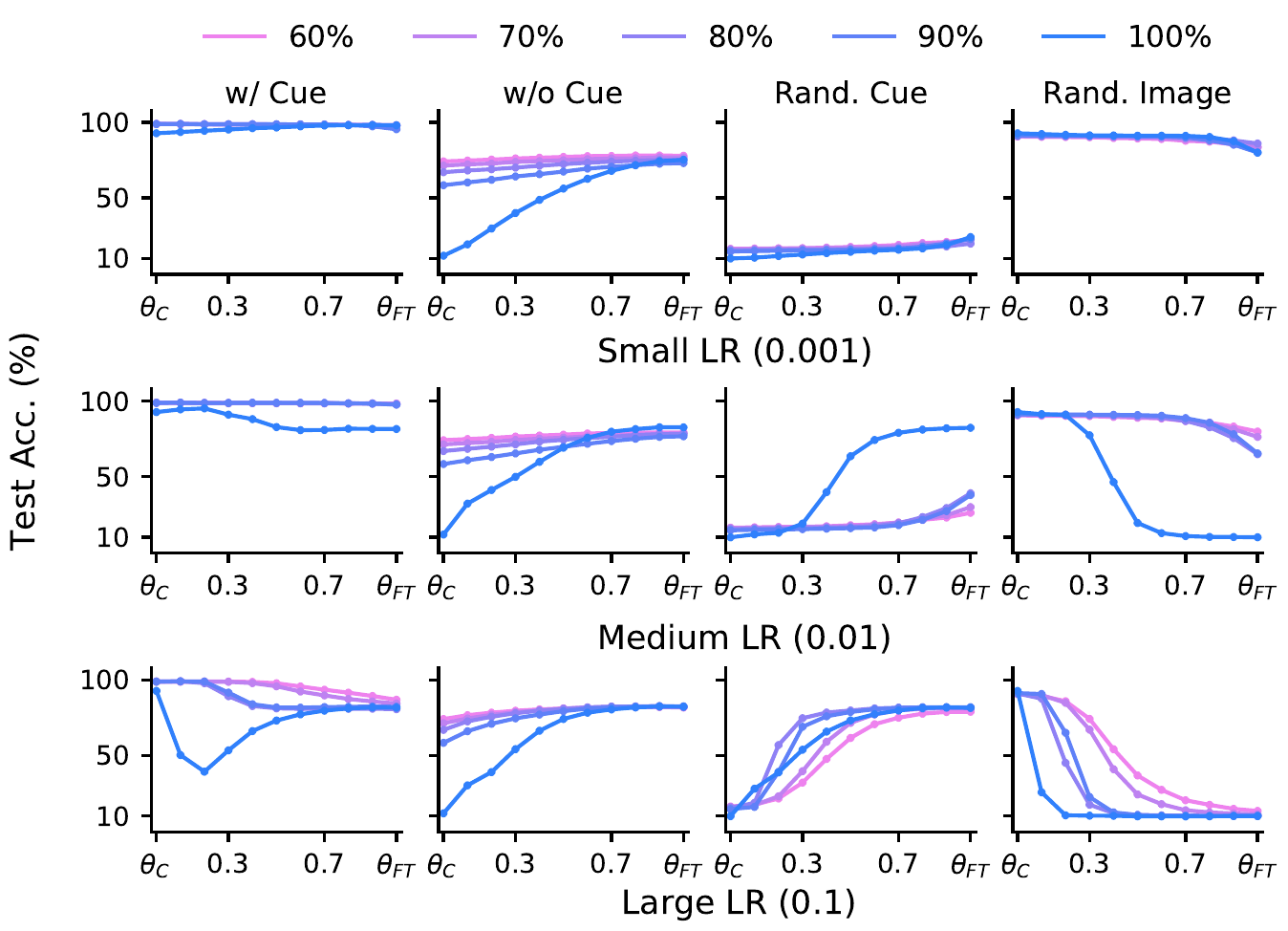}}
  \caption{VGG.}
\end{subfigure}%
\begin{subfigure}{0.5\textwidth}
  \centering
  \centerline{\includegraphics[width=\columnwidth]{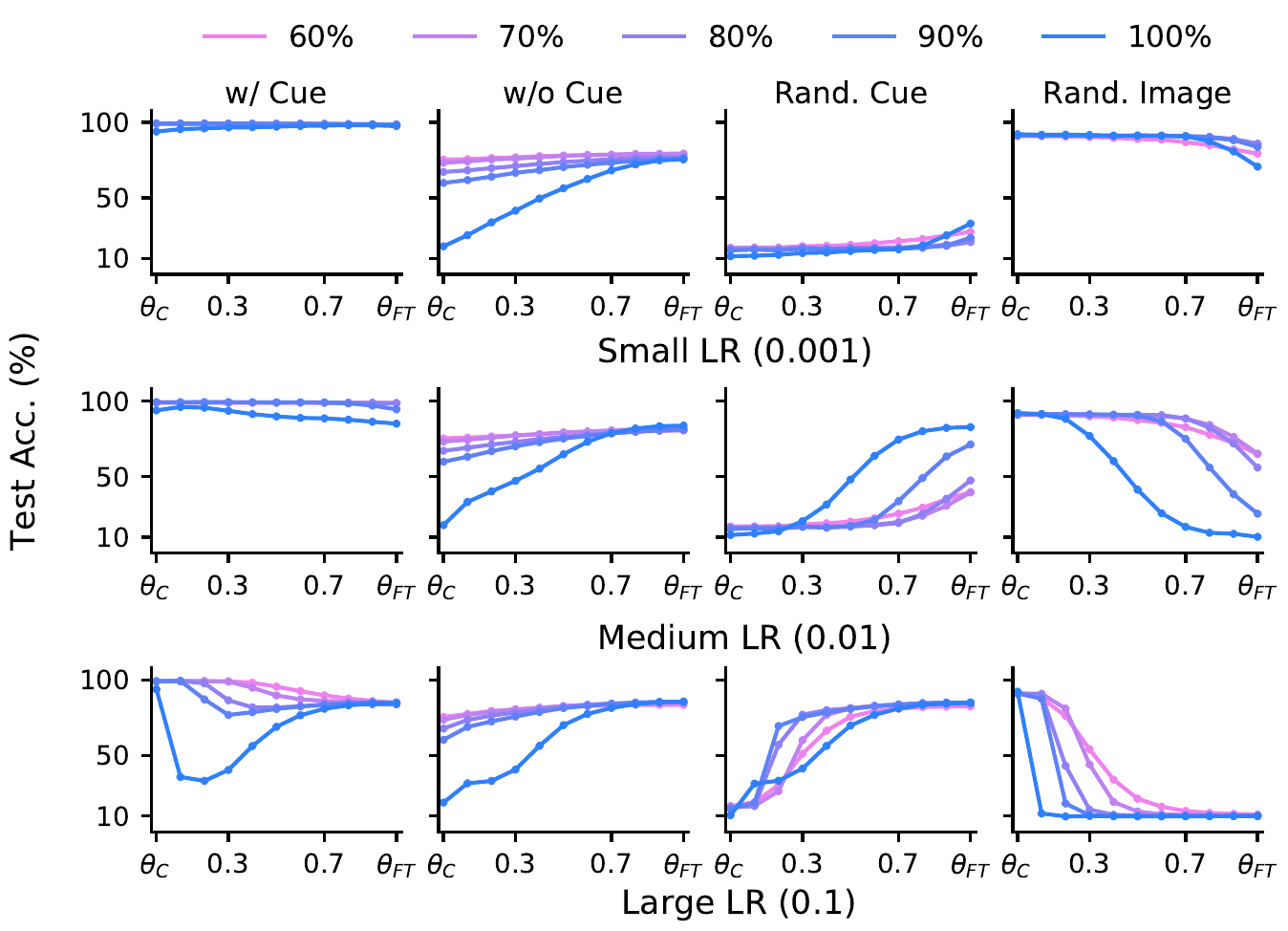}}
  \caption{ResNet18.}
\end{subfigure}
\vspace{-10pt}
\caption{\label{fig:lmc}\textbf{Analyzing Pre-trained vs.\ Fine-Tuned Models: Lack of Linear Connectivity implies Mechanistic Dissimilarity.} 
We train VGG-13 and ResNet-18 models on our synthetic CIFAR-10 dataset with box-cues and perform na\"{i}ve fine-tuning on data without cues for 100 epochs using different initial learning rates (LR) and a step-decay schedule. Corresponding models are denoted $\theta_{\text{C}}$ and $\theta_{\text{FT}}$; line colors denote proportion of dataset with synthetic cues; titles denote evaluation datasets, similar to Fig.~\ref{fig:smc}. We plot test accuracy as a function of location on the linear paths (after permutation). Using a large learning rate or enforcing perfect correlation between the cue and label induces loss barriers along the linear path, i.e., linear mode connectivity does not hold. Simultaneously, the models respond differently to counterfactuals, i.e, they are mechanistically dissimilar and not connected. For a small/medium learning rate, we notice $\theta_{\text{FT}}$ remains linear mode connectivity $\theta_{\text{C}}$ on data with cues. Simultaneously, we see the corresponding models responding similarly on counterfactuals and are mechanistically similar. See App.~\ref{app:lmc_results} for similar results on other datasets, models, and loss curves.}
\vspace{-10pt}
\end{figure*}

\begin{proposition}
\label{claim:all_mimima_connect} 
\textbf{(Mode Connectivity under Mechanistic Dissimilarity.)}
Assume $\theta_1, \theta_2$ are minimizers of the loss on a dataset $\mathcal{D}$ and induce mechanistically dissimilar models. Given sufficient overparameterization, there exists a continuous path along which the minimizers are mode connected.
\end{proposition}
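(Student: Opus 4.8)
The plan is to observe that the proposition is really a statement about the original dataset $\mathcal{D}$ alone, and then to reduce it to existing connected-sublevel-set results. First I would stress the key conceptual point: mode connectivity (Definition~\ref{def:mode connectivity}) constrains only the loss $\mathcal{L}(f(\mathcal{D}; \cdot))$ measured on $\mathcal{D}$, whereas mechanistic (dis)similarity (Definition~\ref{def:mechsim}) is a property of the models' behavior on the counterfactual datasets $\mathcal{E}(\mathcal{D})$. These are distinct datasets, so the hypothesis that $\theta_1$ and $\theta_2$ induce mechanistically dissimilar models places \emph{no} constraint on the loss landscape over $\mathcal{D}$ itself. Consequently, to establish a barrier-free path it suffices to show that \emph{any} two minimizers of $\mathcal{L}(f(\mathcal{D}; \cdot))$ are mode connected under overparameterization; the mechanistic content of the hypothesis can simply be set aside.

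Second, I would invoke the overparameterization connectivity results of \citet{nguyen2019connected, simsek2021geometry}. Concretely, \citet{nguyen2019connected} shows that once the network contains a hidden layer that is wide relative to the number of training samples (a pyramidal / wide-layer condition), every sublevel set $\{\theta : \mathcal{L}(f(\mathcal{D}; \theta)) \le c\}$ of the training loss is connected; \citet{simsek2021geometry} complementarily shows that the set of global minimizers forms a single connected manifold once there are more units than needed to interpolate $\mathcal{D}$. I would read ``sufficient overparameterization'' in the statement as exactly the width / neuron-count hypothesis required by (either of) these theorems. Since $\theta_1$ and $\theta_2$ are minimizers, both satisfy $\mathcal{L}(f(\mathcal{D}; \theta_i)) < \epsilon$, so they lie in the same sublevel set at level $c = \epsilon$ (or, in the formulation of \citet{simsek2021geometry}, in the same connected global-minimum manifold).

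Third, I would upgrade connectedness to a path with the barrier property. The relevant sublevel set is a connected subset of $\mathbb{R}^{d}$ that is locally path connected (open in the strict-inequality case, an analytic set in the $\le$ case), hence path connected; this yields a continuous $\gamma_{\theta_1 \to \theta_2}$ lying entirely inside it, along which $\mathcal{L}(f(\mathcal{D}; \gamma(t))) < \epsilon$ for all $t$. To match Definition~\ref{def:mode connectivity} exactly I would route the path through the connected manifold of (near-)minimizers so that the loss never rises above $\min\{\mathcal{L}(f(\mathcal{D};\theta_1)), \mathcal{L}(f(\mathcal{D};\theta_2))\}$, which is bounded by the linear interpolation of the endpoint losses for every $t \in [0,1]$; the connected-minima formulation of \citet{simsek2021geometry} makes this cleanest, since the loss can be held at the global minimum all along the path.

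The main obstacle I anticipate is conceptual rather than analytic: it is tempting to think mechanistic dissimilarity should obstruct connectivity, and the crux of the argument is recognizing that the cited theorems are entirely agnostic to \emph{which} input attributes a network exploits -- they guarantee connectivity for every pair of minimizers of $\mathcal{L}(f(\mathcal{D};\cdot))$, mechanistically dissimilar pairs included. A secondary technical point to handle carefully is aligning the precise overparameterization hypothesis (layer width versus total neuron count, exact versus $\epsilon$-approximate minimizers) with the barrier inequality of Definition~\ref{def:mode connectivity}; I would manage this by citing whichever of the two results yields the cleaner constant and, if needed, absorbing the gap between $\epsilon$-minimizers and exact minimizers into the choice of the sublevel level $c$.
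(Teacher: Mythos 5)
Your proposal is correct and matches the paper's own argument: the paper likewise observes that mechanistic dissimilarity places no constraint on the loss over $\mathcal{D}$ itself, and derives the claim as a direct corollary of the overparameterized-connectivity result of \citet{simsek2021geometry} (with \citet{nguyen2019connected} noted as an alternative), exactly the reduction you describe. Your extra care about upgrading connectedness of sublevel sets to a path satisfying the barrier inequality of Definition~\ref{def:mode connectivity} is a sound refinement of the same approach, not a different one.
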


That is, even if two minimizers of loss on a dataset $\mathcal{D}$ induce models that rely on completely distinct mechanisms, \textit{there necessarily exists a continuous path along which the two minimizers exhibit mode connectivity.} 
\vspace{-2pt}

Note, however, the claim above does not yet address the simplicity of these connectivity paths, which is empirically observed to be surprisingly high for minimizers retrieved from the same dataset. To investigate whether this property also holds for mechanistically dissimilar models, we train VGG-13 and ResNet-18 models on the synthetic datasets described in \S\ref{sec:dgpsetup}. We analyze accuracy on counterfactual datasets (see Fig.~\ref{fig:dgp}) along quadratic and linear paths (see Eq.~\ref{eq:paths}), including quadratic paths identified using data with/without cues, linear paths, and linear (permuted) paths. Results for ResNet-18 are shown in Fig.~\ref{fig:smc} and remaining are deferred to App.~\ref{app:smc_results}. 
Interestingly, we find minimizers that induce mechanistically dissimilar models can be mode connected via fairly simple paths as well: \textit{we see we can identify quadratic, but not linear, mode connectivity paths for two mechanistically dissimilar models}. In fact, we conjecture that lack of linear connectivity between two models is intricately related to their mechanistic similarity.

\begin{conjecture}
\label{claim:lmc} 
\textbf{(Lack of Linear Connectivity implies Mechanistic Dissimilarity.)}
If two minimizers $\theta_1$ and $\theta_2$ of the loss $\mathcal{L}(f(\mathcal{D}; \theta))$ on a dataset $\mathcal{D}$ cannot be linear mode connected (up to architectural symmetries), their induced models $f(.; \theta_1), f(.; \theta_2)$ must be mechanistically dissimilar.
\end{conjecture}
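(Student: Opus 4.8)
The plan is to prove the contrapositive: assuming $\theta_1$ and $\theta_2$ induce mechanistically \emph{similar} models, I would establish that they are linear mode connected up to architectural symmetries. Since Proposition~\ref{claim:all_mimima_connect} already guarantees \emph{some} continuous connecting path under sufficient overparameterization, the entire content lies in upgrading ``some path'' to a ``linear path (after permutation).'' Concretely, mechanistic similarity gives $\mathcal{I}(\theta_1)=\mathcal{I}(\theta_2)$ (Definition~\ref{def:mechsim}), so the two models are invariant to exactly the same unit interventions and sensitive to exactly the same complementary set; I would argue this forces their predictive behavior to depend on identical latent coordinates, which is the first ingredient toward placing them in a common linearly connected basin.

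First, I would use Proposition~\ref{lem1} (exhaustiveness) to convert the per-intervention invariance sets into a global statement about mechanism: because invariances compose and non-invariances are inherited by compositions, $\mathcal{I}(\theta)$ determines precisely the projection $\pi_\theta:\mathcal{Z}\to\mathcal{Z}_{\mathrm{rel}}$ onto the ``relied-upon'' latent coordinates through which the model's loss factors. Mechanistic similarity then yields $\pi_{\theta_1}=\pi_{\theta_2}=:\pi$, so both $f(\cdot;\theta_1)$ and $f(\cdot;\theta_2)$ realize the label map $\mathcal{G}_Y$ as functions of $\pi(z)$ alone on $\mathcal{D}$ together with all its counterfactual extensions $\mathcal{E}(\mathcal{D})$. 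Because both are minimizers (loss $<\epsilon$) and hence agree with $\mathcal{G}_Y\circ\pi$ up to this tolerance, I would conclude that $f(\cdot;\theta_1)$ and $f(\cdot;\theta_2)$ are approximately functionally equivalent on the support generated by $\mathcal{D}$ under interventions.

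Second, I would invoke the permutation-symmetry account of linear connectivity: for overparameterized networks, two minimizers that implement the same function on the data distribution are, after an appropriate neuron permutation, linearly mode connected, which is exactly the hypothesis probed and corroborated by \citet{entezari2021role, singh2020model, ainsworth2022}. Applying their activation-matching construction to align $\theta_1$ with a permuted copy of $\theta_2$, functional agreement along $\pi$ would propagate to the absence of a loss barrier on the aligned linear segment, yielding linear connectivity up to architectural symmetries and completing the contrapositive.

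The hard part is twofold, and is precisely why the statement is posed as a conjecture rather than a theorem. The crux step, passing from functional equivalence to a permutation witnessing linear connectivity, is itself only empirically established (the Entezari--Ainsworth hypothesis has no general proof), so any rigorous argument must either assume it or restrict to architectures where it provably holds. The subtler gap is that invariance is defined at the level of \emph{expected loss}, which is strictly weaker than pointwise equality of predictions: two models can share $\mathcal{I}(\theta)$ yet disagree on individual counterfactual inputs and live in genuinely distinct basins. Closing this gap would require strengthening mechanistic similarity to pointwise prediction agreement on $\mathcal{E}(\mathcal{D})$, or adding a realizability assumption ensuring the computation reading out $\pi(z)$ admits a single permutation-connected implementation; absent such assumptions I would expect to be able to construct mechanistically similar but linearly disconnected counterexamples, which is the main obstacle to a clean proof.
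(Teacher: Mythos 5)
Your plan takes the right first step (proving the contrapositive: mechanistic similarity $\Rightarrow$ linear connectivity up to symmetries), and your self-diagnosis of the obstacles is accurate, but the crux step is a genuine gap rather than a provable lemma, and it is not the route the paper takes. You reduce everything to the claim that two (approximately) functionally equivalent minimizers admit a permutation making them linearly mode connected, and you propose to import this from \citet{entezari2021role, ainsworth2022}. That import is not available. Their hypothesis is an \emph{unconditional} statement about typical SGD solutions, with no functional-equivalence hypothesis, and it is precisely the statement whose failure this paper documents in the regime at issue (Fig.~\ref{fig:smc}: cue-reliant and cue-invariant minimizers are not linearly connected even after activation matching). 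The conditional version you actually need --- ``same function implies permutation-alignable weights'' --- is not what those works claim, and it is false for arbitrary minimizers: two ReLU networks can compute the identical function with internal circuits that are not related by any permutation (your own second worry, that loss-level invariance is weaker than pointwise agreement, only compounds this). So your argument stalls exactly where you say it does; acknowledging the stall does not close it.

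The paper's proof escapes this by restricting to a tractable setting --- a 1-hidden-layer ReLU network on a multi-attribute data-generating process with attributes of tunable complexity $K$ --- and replacing the permutation hypothesis with an intrinsic, provable certificate for linear connectivity. Lemma~\ref{lem:align} shows that linear mode connectivity of interpolating minimizers forces the two networks to have \emph{identical activation patterns} on every sample (the derivative of the prediction along the segment must vanish, and a sign decomposition then forces neuron-wise co-activation); Remark~\ref{rmk:permutations} observes that this alignment condition is precisely the condition under which the Entezari-style conjecture holds, i.e., the paper \emph{derives}, in its setting, the statement you wanted to assume. Lemma~\ref{lem:diffacts} then ties activation statistics to mechanism: a minimizer reading out an attribute of complexity $K$ has neurons activating with probability $\nicefrac{1}{2K}$, so mechanisms of different complexity cannot match in activation patterns under any permutation, yielding Theorem~\ref{thm:preclude}; finally, simplicity bias (Lemma~\ref{lem:simbias}) together with a max-margin implicit-bias argument rules out the remaining case of distinct attributes of equal complexity for SGD-found minimizers. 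The two ideas missing from your proposal are therefore (i) an intrinsic, checkable characterization of linear connectivity (activation-pattern matching) that can be linked to mechanisms, and (ii) the exploitation of SGD's implicit biases to constrain which minimizers can occur --- without both of which, as you correctly anticipate, counterexamples exist and no clean general proof is possible. Since the statement is posed as a conjecture, the productive move was to restrict the setting and prove it there, which is what the paper does.
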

\vspace{-5pt}

\setlength{\tabcolsep}{4.5pt}
\begin{table*}
\vspace{-5pt}
\caption{\label{tab:CBFT}\textbf{Evaluating CBFT.} We train ResNet-18 models on our synthetic CIFAR-10, CIFAR-100, and Dominoes dataset with different proportions of samples with cue features and fine-tune them using 2500 ``clean'' samples from a dataset without any cues. Test accuracies (\%) on counterfactual test datasets with No Cue (NC), with Cue (C), Randomized Cue (RC), and Randomized Image (RI) are reported (mean of three seeds). We compare our method, Connectivity-Based Fine-Tuning (CBFT), with several baselines: Fine-tuning with a medium/small learning rate (FT$_{\text{M/S}}$), LLR~\citep{kirichenko2022last}, and LPFT~\citep{kumar2022fine}. $\sim$ denotes invariance is desirable, i.e., accuracy should be similar to that on NC; $\uparrow$/$\downarrow$ indicate higher/lower accuracy is desirable; best results are in bold. 
We generally see that all baselines yield large degradations in the absence of cues, and even achieve very high accuracy when the underlying image is randomized. Meanwhile, CBFT is able to break reliance on cues, inducing representations that are completely invariant to their presence.
}
\centering
\scriptsize
\begin{tabular}{@{}c|cccc|cccc|cccc|cccc@{}}
\toprule
& \multicolumn{4}{c|}{60\% Cue data}         & \multicolumn{4}{c|}{70\% Cue data}         & \multicolumn{4}{c|}{80\% Cue data}         & \multicolumn{4}{c}{90\% Cue data}         \\ \midrule
C-10  & NC$^{\uparrow}$    & C$^{\sim}$   & RC$^{\sim}$   & RI$^{\downarrow}$  & NC$^{\uparrow}$    & C$^{\sim}$   & RC$^{\sim}$   & RI$^{\downarrow}$  & NC$^{\uparrow}$    & C$^{\sim}$   & RC$^{\sim}$   & RI$^{\downarrow}$   & NC$^{\uparrow}$    & C$^{\sim}$   & RC$^{\sim}$   & RI$^{\downarrow}$   \\ \midrule
%
%
%
FT$_{\text{M}}$   & \textbf{75.7} & 98.4 & 23.6    & 83.4      & \textbf{75.8} & 98.6 & 27.7    & 78.6      & \textbf{71.3} & 97.7 & 37.6    & 63.6      & 67.2 & 95.4 & 49.6    & 46.6      \\
FT$_{\text{S}}$ & 75.8 & 98.7 & 17.5    & 90.1      & 74.9 & 98.8 & 16.3    & 91.1      & 69.9 & 98.4 & 15.7    & 90.9      & 64.7 & 97.9 & 15.3    & 90.7      \\
LLR        & 71.6 & 95.1 & 36.3    & 57.1      & 70.9 & 95.8 & 29.9    & 65.8      & 65.1 & 81.8 & 27.0    & 53.2      & 59.3 & 70.7 & 24.6    & 40.7      \\
LPFT    & 70.6 & 88.1 & 21.0    & 70.7      & 69.6 & 87.3 & 18.7    & 72.5      & 64.4 & 63.8 & 18.8    & 48.0      & 59.7 & 56.6 & 19.8    & 37.8       \\
CBFT   & 74.1 & \textbf{71.5} & \textbf{73.4}    & \textbf{8.75}      & 73.2 & \textbf{69.2} & \textbf{72.3}    & \textbf{8.60}      & 70.0 & \textbf{70.0} & \textbf{69.5}    & \textbf{9.68}      & \textbf{67.9} & \textbf{72.5} & \textbf{68.1}    & \textbf{13.1}      \\ \midrule
C-100  & NC$^{\uparrow}$    & C$^{\sim}$   & RC$^{\sim}$   & RI$^{\downarrow}$  & NC$^{\uparrow}$    & C$^{\sim}$   & RC$^{\sim}$   & RI$^{\downarrow}$  & NC$^{\uparrow}$    & C$^{\sim}$   & RC$^{\sim}$   & RI$^{\downarrow}$   & NC$^{\uparrow}$    & C$^{\sim}$   & RC$^{\sim}$   & RI$^{\downarrow}$   \\ \midrule
%
%
%
%
FT$_{\text{m}}$   & \textbf{44.4} & 99.2 & 12.8    & 85.3      & \textbf{40.3} & 99.6 & 12.3    & 89.8      & 33.6 & 99.0 & 11.4    & 90.5      & 25.2 & 79.2 & 9.79    & 57.9      \\
FT$_{\text{s}}$ & 43.1 & 99.6 & 10.3    & 93.6      & 38.2 & 99.7 & 10.5    & 95.7      & 32.5 & 99.6 & 10.4    & 97.0      & 24.5 & 39.4 & 4.87    & 30.9      \\
LLR        & 35.5 & 99.2 & 12.1    & 89.0      & 31.5 & 98.6 & 11.3    & 89.6      & 25.3 & 96.7 & 10.6    & 89.4      & 18.9 & 75.1 & 9.1     & 58.7 \\
LPFT    & 35.1 & 93.2 & 10.3    & 82.3      & 31.1 & 90.2 & 9.89    & 78.5      & 25.6 & 89.6 & 9.70    & 80.8      & 18.7 & \textbf{28.6} & \textbf{4.42}    & \textbf{19.6}      \\
CBFT   & 42.7 & \textbf{65.0} & \textbf{36.4}    & \textbf{14.6}      & 38.5 & \textbf{66.7} & \textbf{34.7}    & \textbf{21.2}      & \textbf{34.6} & \textbf{69.3} & \textbf{23.0}    & \textbf{27.9}      & \textbf{28.5} & 72.9 & 23.2    & 46.0      \\ \midrule
Dom.  & NC$^{\uparrow}$    & C$^{\sim}$   & RC$^{\sim}$   & RI$^{\downarrow}$  & NC$^{\uparrow}$    & C$^{\sim}$   & RC$^{\sim}$   & RI$^{\downarrow}$  & NC$^{\uparrow}$    & C$^{\sim}$   & RC$^{\sim}$   & RI$^{\downarrow}$   & NC$^{\uparrow}$    & C$^{\sim}$   & RC$^{\sim}$   & RI$^{\downarrow}$   \\ \midrule
%
%
%
%
FT$_{\text{m}}$   & \textbf{77.4} & 96.8 & 43.8    & 56.1      & \textbf{76.6} & 96.6 & 42.7    & 58.7      & \textbf{74.1} & 95.7 & 41.7    & 61.3      & \textbf{68.8} & 95.1 & 40.0    & 57.5      \\
FT$_{\text{s}}$ & 76.4 & 96.9 & 37.5    & 62.4      & 76.8 & 96.6 & 32.5    & 66.5      & 73.2 & 96.4 & 30.8    & 67.7      & 67.3 & 95.2 & 31.2    & 65.6      \\
LLR        & 74.6 & 94.4 & 39.8    & 53.0      & 73.9 & 93.2 & 36.3    & 54.7      & 70.8 & 84.8 & 33.1    & 46.6      & 63.3 & 77.0 & 31.2    & 39.0      \\
LPFT    & 73.2 & 92.5 & 38.0    & 51.8      & 72.7 & 88.0 & 34.8    & 50.9      & 69.4 & 34.8 & 33.1    & 39.1      & 61.2 & 60.8 & 31.2    & 26.6      \\
CBFT   & 72.0 & \textbf{64.9} & \textbf{67.5}    & \textbf{9.9}       & 71.5 & \textbf{70.0} & \textbf{59.2}    & \textbf{12.1}      & 70.8 & \textbf{69.7} & \textbf{65.9}    & \textbf{11.9}      & 67.2 & \textbf{68.7} & \textbf{61.5}    & \textbf{14.9}  \\ \bottomrule
\end{tabular}
\vspace{-4pt}
\end{table*}

In App.~\ref{app:proofs}, we show the claim above holds true for a 1-hidden layer model on a simplified data-generating process inspired by our setup. Here, we show extensive empirical evidence of its validity in more complex settings. In particular, we follow the experimental protocol of \citet{neyshabur2020being}, who demonstrate that a pretrained model exhibits linear mode connectivity on the original pretraining dataset before and after fine-tuning on another target dataset. We thus train VGG-13 and ResNet-18 models on our synthetic datasets with (partially) predictive cues and then fine-tune them on data without cues. Results on CIFAR-10 with box cues are shown in Fig.~\ref{fig:lmc}; App.~\ref{app:lmc_results} has additional results. We see that \textit{when linear mode connectivity does not hold, the fine-tuned models behave differently on counterfactuals}, i.e., are mechanistically dissimilar to the pretrained model. For example, the models before and after fine-tuning using a large learning rate do not exhibit linear mode connectivity; correspondingly, the fine-tuned models exhibit clear invariance to cue attributes, while the pretrained models do not. Similarly, under perfect correlation between labels and cue attributes, fine-tuned models are not linear mode connected with their pretrained counterparts, and exhibit different behavior on counterfactuals (even for small initial learning rates).
We note this latter, specific instance of success in altering the pretrained model's mechanisms via fine-tuning is a result of the model being rendered entirely invariant to natural attributes during pretraining (see App.~\ref{app:setup}); consequently, the model lacks any transferable mechanisms for the target data distribution and hence the mechanisms necessarily have to change to fit the new dataset. 

A practical takeaway of our results above is that \textit{na\"{i}ve fine-tuning can fail to alter the mechanisms learned by a model during pretraining.} While large learning rates can help overcome this limitation, they are likely to heavily distort features learned during pretraining~\citep{kumar2022fine}, rendering pretraining obsolete and the sample complexity of fine-tuning similar to that of training from scratch~\citep{he2019rethinking}. 
This indicates that for fine-tuning to be useful, pretraining must be performed with care to ensure desirbale mechanisms relevant to downstream tasks are learned. 
If incorrect mechanisms, such as ones that rely on spurious attributes, are learned, mere fine-tuning on some ``clean'' dataset will be insufficient to alter the model's behavior, as hinted at by results in few recent works~\citep{lovering2021predicting, mireshghallah2022empirical, min2022rethinking, panigrahi2023task}. Intriguingly, this latter strategy of fine-tuning on a clean dataset forms the basis of several recent methods on improving DNNs' robustness~\citep{kirichenko2022last, flkirichenko2022, kumar2022fine, rosenfeld2022domain}: such methods fine-tune some/all layers of a pretrained model on a \textit{minimal} dataset that is known to not contain the spurious attribute we want to reduce the model's reliance on. 
In the next section, we perform a thorough counterfactual evaluation on our synthetic datasets to assess if such methods can actually alter a model's behavior.

\vspace{-5pt}

\section{Altering a Model's Mechanisms Efficiently}
\label{sec:mechanistic fine tuning}
\vspace{-2pt}

In this section, our goal is to show that our newfound understanding of DNN loss landscapes from a mechanistic perspective (see Fig.~\ref{fig:editing}) can be used to devise a sample-efficient strategy that allows targeted altering of a model's mechanisms. We primarily see the results below as further corroboration of our analysis in \S\ref{sec: mech-connect}.
\vspace{-5pt}

\begin{figure}
  \begin{center}
    \includegraphics[width=0.7 \linewidth]{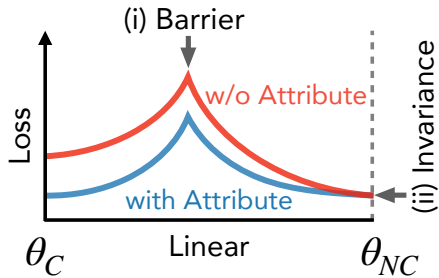}
  \end{center}
  \vspace{-12pt}
  \caption{\label{fig:editing}\textbf{Clues for altering a model's mechanisms.} Given a discriminative attribute $C$, the loss landscape along the linear path connecting a model invariant to the attribute ($\theta_{NC}$) versus a model that relies on the attribute ($\theta_{C}$) generally shows (i) a loss barrier along the path and (ii) invariance at the endpoint corresponding to $\theta_{NC}$, i.e., $\mathcal{L}(\theta_{NC}, D_{NC}) = \mathcal{L}(\theta_{NC}, D_{C})$.}
  \vspace{-15pt}
\end{figure}

\subsection{Connectivity-Based Fine-Tuning (CBFT)} 
\vspace{-5pt}
As defined in our work, mechanistic dissimilarity corresponds to lack of shared invariances. Our results in \S\ref{sec: mech-connect} demonstrate that lack of linear connectivity between two models implies they will be mechanistically dissimilar. \textit{A valid strategy for altering a model's mechanisms then involves moving the model to a region in the landscape that does not exhibit linear connectivity to the current minimizer}. Of course, we specifically want the unshared invariance to correspond to ignoring of the spurious attribute  (denoted $\text{C}$) that we desire to reduce the model's reliance on. For this purpose, we follow prior works and assume access to a \textit{minimal} dataset $\mathcal{D}_{\text{NC}}$ that does \textit{not} contain the attribute $\text{C}$. Note that this setting is not similar to the often used setup in domain adaptation, where the original training dataset (denoted $\mathcal{D}_{\text{C}}$ here) and the novel dataset, $\mathcal{D}_{\text{NC}}$, are assumed to be pairs of images in different environments.

\begin{table}
\centering
\begin{tabular}{c|c|c|c|c} 
& Linear & Nonlinear & Linear & Nonlinear  \\
& Mode & Mode & Mech. & Mech.  \\
\midrule
Mech.\ Similar & \mycmark & \mycmark & \mycmark & \mycmark \\
Mech.\ Dissimilar & \myxmark$^{*}$ & \mycmark & \myxmark & \myxmark \\
\end{tabular}
\caption{\textbf{Summarizing our Findings.} \mycmark, \myxmark$\,$ respectively indicate whether there always exist paths along which mechanistically (dis)similar models identified using gradient-based optimization can exhibit the type of connectivity specified in the column title. $^{*}$ denotes there are exceptional, but primarily theoretical, cases where the connectivity definition can hold (see App.~\ref{app:proofs}).
}
\label{tab:check_marks}
\end{table}

In the following, we use $\mathcal{D}^{i}$ to denote the subset of examples in dataset $\mathcal{D}$ belonging to the $i^{\text{th}}$ class in a $K$-class classification problem, $\gamma_{\theta \to \theta_{\text{C}}}(t)$ to denote the linear path between a set of parameters $\theta$ and the pretraining solution $\theta_{\text{C}}$, and $f_{r}(x; \theta)$ to denote the model's representation for an input $x$ at the penultimate layer. Let $\mathcal{N}_{\text{Tr}}$ denote the \citet{trunc} with mean/std of $0.5$ that is constrained to the range $[0, 1]$. Our method, Connectivity-Based Fine-Tuning (CBFT), involves minimizing the following loss:
\begin{equation}
\label{eq:cbft}
\begin{split}
    &\mathcal{L}_{\text{CBFT}}
    = \mathcal{L}_{\text{CE}}(f(\mathcal{D}_{\text{NC}}; \theta), y) + \mathcal{L}_{\text{B}} + \frac{1}{K} \mathcal{L}_{\text{I}} \text{, where}\\
    &\mathcal{L}_{\text{B}} = \mathbb{E}_{t \sim \mathcal{N}_{\text{Tr}}}|\lambda_\text{B} - \mathcal{L}_{\text{CE}}(f(\mathcal{D}_{\text{C}}; \gamma_{\theta \to \theta_{\text{C}}}(t)), y)|  \text{  and} \\ 
    &\mathcal{L}_{\text{I}} =  \sum_{k=1}^{K} \norm{\mathbb{E}_{x \in \mathcal{D}^{k}_{\text{C}}} (f_r(x;\theta)) - \mathbb{E}_{\Tilde{x} \in \mathcal{D}^{k}_{\text{NC}}}(f_r(\Tilde{x};\theta))}_{2}^{2}.
\end{split}
\end{equation}

Here $\mathcal{L}_{\text{CE}}$ denotes the cross-entropy loss and promotes learning of correct labels on the minimal dataset $\mathcal{D}_{\text{NC}}$, while $\mathcal{L}_{\text{B}}$, $\mathcal{L}_{\text{I}}$ instantiate the two principles discussed in Fig.~\ref{fig:editing}: $\mathcal{L}_{\text{B}}$ denotes a ``barrier loss'' that randomly samples a point on the linear path between $\theta$, $\theta_{C}$ and maximizes the loss at this point up to an upper bound $\lambda_\text{B}$ ($=$1 in all our experiments) and $\mathcal{L}_{\text{I}}$ denotes an invariance loss that promotes reducing the distance between class-average representations on $\mathcal{D}_{\text{NC}}$ and $\mathcal{D}_{\text{C}}$. 
Overall, $\mathcal{L}_{\text{B}}$ helps CBFT find a set of parameters $\theta$ that does not exhibit linear connectivity to $\theta_{C}$, while $\mathcal{L}_{\text{I}}$ helps CBFT pick a solution that is (approximately) invariant to attribute $C$. We emphasize that since the cross entropy loss can be made arbitrarily large, using the hyperparameter $\lambda_\text{B}$ is important. We also note that using class-average representations to learn (approximately) invariant representations has the advantage of not requiring  access to simultaneous pairs of samples in different environments, i.e., ones with and without the spurious attributes~\citep{li2018domain, sun2016deep}.

\textbf{Evaluating CBFT:} We empirically validate the effectiveness of CBFT by using our synthetic datasets from \S\ref{sec:dgpsetup} as a benchmark. We compare CBFT against  na\"{i}ve fine-tuning, Last-Layer Re-Training (LLR)~\citep{kirichenko2022last, flkirichenko2022}, and Linear Probe plus Fine-Tuning (LPFT)~\citep{kumar2022fine} (see App.~\ref{app:ftdetails} for implementation details). Results are reported in Tab.~\ref{tab:CBFT}. We see that while the baselines perform well on clean data, \textit{they do not yield desired behavior on counterfactual datasets}: e.g., they achieve high accuracy even if we randomize the image, indicating that they are more sensitive to the cue. In contrast, we see that beyond just performing well on clean data, CBFT models show the desired behaviors: sensitivity to randomization of the image and invariance to spurious attributes. These results suggest CBFT successfully alters a model's mechanisms and provides further corroboration to the claim that lack of linear connectivity implies mechanistic dissimilarity between two models (see Conj.~\ref{claim:lmc}). We also provide detailed ablations for CBFT in App.~\ref{app:ablations} and find both losses, $\mathcal{L}_{\text{B}}$ and $\mathcal{L}_{\text{I}}$, are important for getting the desired results: the barrier loss helps induce a mechanistically dissimilar model, while the invariance loss helps select the mechanisms we desire.
\vspace{-5pt}

\section{Related Work}
\paragraph{Mode connectivity.}
Existence of a single, continuous manifold connecting global minimizers was first identified theoretically by \citet{freeman2016topology, nguyen2019connected} and empirically discovered in concurrent works under the title of ``mode connectivity'' by \citet{garipov2018loss} and \citet{draxler2018essentially}. A geometrical characterization of this manifold was provided by \citet{simsek2021geometry}, who showed the manifold is \textit{primarily} composed of affine subspaces. Connectivity properties of neural networks have been used for designing and analyzing algorithms for several practically relevant applications, such as ensembling~\citep{benton2021loss, izmailov2018averaging, wortsman2021learning, wortsman2022model}, network pruning~\citep{frankle2020linear, entezari2021role}, optimization~\citep{kaddourflat}, adversarial robustness~\citep{zhao2020bridging}, and multi-task/continual learning~\citep{mirzadeh2020linear, lubana2021quadratic}. During the course of this work, we became aware of the contemporary work by \citet{juneja2022linear}. 
Therein, the authors use NLP datasets designed by \citet{mccoy2019right} to perform an empirical analysis similar to ours, finding that models that lack linear connectivity show different generalization behaviors, relying on different attributes of an input to make their predictions. Our work further formalizes this result: we show lack of linear connectivity implies mechanistic dissimilarity. The results by Juneja et al.\ thus provide further corroboration for our claims on a different modality.

\paragraph{Fine-tuning and Model Editing.} Fine-tuning is a well-established practice in deep learning. The most basic fine-tuning method is to treat the pretrained model as an initialization, and continue training with new data. A variant is to train only a subset of parameters, such as the final classification layer~\citep{kirichenko2022last, flkirichenko2022}, possibly fine-tuning the entire model after that~\citep{kumar2022fine, rosenfeld2022domain}. A related application to fine-tuning, model editing has become quite popular recently and approaches for the same generally aim to make a targeted change to a model's factual knowledge~\citep{mitchell2021fast, santurkar2021editing, sinitsin2020editable}. For instance, \citet{sinitsin2020editable} give the example of correcting a model's prediction error on a particular example without changing its predictions on other examples. Prior work on model editing aims to make changes that are ``local'' in input space, e.g., only affecting the model's ``understanding'' of who the current prime minister of the UK is. CBFT shares this motivation of ``targeted'' alteration of a model; however, instead of altering the model's factual knowledge, the overarching goal of CBFT is to make changes to the specific rules or mechanisms the model implements to make its predictions (see \citet{dasgupta2022distinguishing} for a discussion on distinction between rule vs.\ exemplar / factual inference strategies). Specifically, CBFT aims to make a model invariant to features that it was not already invariant to (or vice versa), without changing any of its other learned invariances. 

\section{Conclusion and Future Work}

Depending on the mechanisms they learn for making their predictions, neural networks trained on a specific data distribution can nonetheless differ vastly in their behavior when evaluated on other distributions. This realization prompted us to perform a mechanistic characterization of connectivity properties in the loss landscape of neural networks. Our proposed notion of \textit{mechanistic similarity} instantiates the idea as shared invariances, and helps extend the prior notion of mode connectivity to account for mechanistic similarity. Our analysis reveals several surprising findings (see Tab.~\ref{tab:check_marks}): (i) mechanistically dissimilar minimizers can be mode connected via relatively simple, but non-linear, paths; (ii) linear mode connectivity of two minimizers is intricately related to the mechanistic similarity of their induced models; (iii) na\"{i}ve fine-tuning can fail to eliminate spurious attributes learned during pretraining; and (iv) finding linearly disconnected regions in the landscape enables sample-efficient alteration of a model's mechanisms.

Future work can involve use of counterfactual generators based on modern generative models~\citep{thiagarajan2021designing} to extend our synthetic data experiments and
corroborate our claims in naturalistic settings. We also believe our analysis can be useful to reason about benefits and limitations of recent averaging-based ensembling methods~\citep{wortsman2022robust, wortsman2022model, rame2022diverse, arpit2021ensemble}. Specifically, note that our claims do not preclude possible linear connectivity of mechanistically dissimilar models: in fact, any two solutions of the linear system of equations $y = W x$ can be interpolated regardless of their prediction mechanisms (hence the $^{*}$ in Tab.~\ref{tab:check_marks}). However, as we show in App.~\ref{app:proofs} in a simplified setup, these different mechanisms should be of similar ``complexity'' to enable linear connectivity (e.g., mechanisms corresponding to linearly separable attributes). In the context of our fine-tuning results, this implies na\"{i}ve fine-tuning can work well on a target distribution only if the desired mechanism is of similar complexity to the mechanism for identifying the spurious attribute (which would possibly imply it finds a spurious attribute again); otherwise, a loss barrier must be surmounted for successful learning on the target distribution. This suggests that pretraining should aim to promote learning of a variety of expressive prediction mechanisms, which can be challenging in practice~\citep{d2020underspecification}.

\section*{Acknowledgements}
We thank Anna Golubeva and Vasudev Shyam for several useful discussions during the course of this project. We also thank Yamini Bansal, Nikunj Saunshi, Puja Trivedi, Liu Ziyin, Cindy Wu, Robert Kirk, Bruno Kacper, Tegan Maharaj, and Daniel A. Roberts for useful feedback on the paper. ESL was partially supported via NSF under award CNS-2008151.

\section*{Authors' Contributions}
\label{sec:contribs}
ESL led the section on mechanistic similarity, refining it with HT and DK. ESL, DK, and HT discussed relating mechanistic similarity with mode connectivity, kickstarting the project. ESL and DK motivated studying fine-tuning from the lens of mechanisms and mode-connectivity. ESL, EJB, DK, and HT conceived the experimental setup; ESL and EJB co-led implementation/evaluation. ESL wrote the primary draft; ESL, EJB, and HT conceived and designed the figures, with inputs from DK; and ESL, HT, DK, and RPD refined the paper together. ESL led the theoretical analysis; ESL and HT refined it together, with inputs from DK and RPD. 


\bibliography{mmc}
\bibliographystyle{icml2023}

\newpage
\appendix
\onecolumn

\section*{Appendix}



\section{Detailed Related Work}
\textbf{Mode connectivity.}
Existence of a single, continuous manifold connecting global minimizers was first identified theoretically by \citet{freeman2016topology, nguyen2019connected} and empirically discovered in concurrent works under the title of ``mode connectivity'' by \citet{garipov2018loss} and \citet{draxler2018essentially}. A geometrical characterization of this manifold was provided by \citet{simsek2021geometry}, who showed the manifold is \textit{primarily} composed of affine subspaces. Connectivity properties of neural networks have been used for designing and analyzing algorithms for several practically relevant applications, such as ensembling~\citep{benton2021loss, izmailov2018averaging, wortsman2021learning, wortsman2022model}, network pruning~\citep{frankle2020linear, entezari2021role}, optimization~\citep{kaddourflat}, adversarial robustness~\citep{zhao2020bridging}, and multi-task/continual learning~\citep{mirzadeh2020linear, lubana2021quadratic}. During the course of this work, we became aware of the contemporary empirical paper by \citet{juneja2022linear}, who investigate whether minimizers connected via linear paths follow similar ``decision rules''. Their analysis focuses on NLP tasks and does not involve modeling the data-generating process or counterfactual evaluation; their results can be regarded as use of an alternative strategy to further verify our claims on a different modality. 

\textbf{Fine-tuning.} Fine-tuning is a well-established practice in deep learning. The most basic fine-tuning method is to treat the pretrained model as an initialization, and continue training with new data. A variant is to train only a subset of parameters, such as the final classification layer~\citep{kirichenko2022last, flkirichenko2022}, possibly fine-tuning the entire model after that~\citep{kumar2022fine, rosenfeld2022domain}. 

\textbf{Model editing.} A related application to fine-tuning, model editing has become quite popular recently and approaches for the same generally aim to make a targeted change to a model's factual knowledge~\citep{mitchell2021fast, santurkar2021editing, sinitsin2020editable}. For instance, \citet{sinitsin2020editable} give the example of correcting a model's prediction error on a particular example without changing its predictions on other examples. Prior work on model editing aims to make changes that are ``local'' in input space, e.g., only affecting the model's ``understanding'' of who the current prime minister of the UK is. CBFT shares this motivation of ``targeted'' alteration of a model; however, instead of altering the model's factual knowledge, the overarching goal of CBFT is to make changes to the specific rules or mechanisms the model implements to make its predictions (see \citet{dasgupta2022distinguishing} for a discussion on distinction between rule vs.\ exemplar / factual inference strategies). Specifically, CBFT aims to make a model invariant to features that it was not already invariant to (or vice versa), without changing any of its other learned invariances. This difference in goals make model editing approaches inappropriate for our setup.

\textbf{Use of synthetic datasets.} Our data-generation pipeline was influenced by several past works that use synthetic datasets for better understanding topics such as transfer learning~\citep{dittadi2020transfer}, domain generalization~\citep{wiles2021fine, van2019disentangled, arjovsky2019invariant}, disentanglement~\citep{higgins2016beta, klindt2020towards}, self/semi supervised learning~\citep{von2021self, trivedi2022analyzing, trivedi2022augmentations, locatello2020weakly}, and inductive biases of neural networks~\citep{hermann2020origins, hermann2020shapes, ritter2017cognitive}.

\section{Training Details and Datasets}
\label{app:setup}

When training from scratch (e.g., in Fig.~\ref{fig:smc}), we train models using SGD for 100 epochs with a batch-size of 256, momentum of 0.9, and weight decay of $10^{-4}$. Learning rate starts at 0.1 and is dropped by a factor of $10$ at the 40$^{\text{th}}$ and 80$^{\text{th}}$ epochs. No data augmentations are used. When fine-tuning to assess linear connectivity in Fig.~\ref{fig:lmc}, we train models for a further 100 epochs on data without cues using different initial learning rates, but the same step-decay schedule (decay factor of 0.1 at decay epochs 40 and 80). For details on training and evaluation of models in Tab.~\ref{tab:CBFT}, please refer to App.~\ref{app:ftdetails}.

\subsection{Dataset Visualizations and Training Curves}
When using synthetic datasets, if a proportion $c$ of samples is to be assigned the cue feature, we use the first $c$\% samples of all classes to assign them the respective cues. We do not store the samples beforehand; instead, we use manually designed PyTorch data-loaders that allow for easy manipulation of samples in an online manner, enabling straightforward counterfactual evaluations. While the dataset construction was discussed in Fig.~\ref{fig:dgp} and \S\ref{sec:dgpsetup}, we provide several visualizations of randomly sampled datapoints from different classes and their counterfactuals in Fig.~\ref{fig:c10viz} (CIFAR-10 with box cue), Fig.~\ref{fig:c100viz} (CIFAR-100 with box/color cue), and Fig.~\ref{fig:dominoesviz} (Dominoes: CIFAR-10 with concatenated FashionMNIST image cue). Learning curves with train/test accuracies for VGG / ResNet-18 models trained on different proportions of samples with cue features for these datasets are reported in Figs.~\ref{fig:curve_c10_vgg} and \ref{fig:curve_c10_res18} (CIFAR-10 with box cue), Fig.~\ref{fig:curve_c100_vgg},~\ref{fig:curve_c100_res18} (CIFAR-100 with box/color cue), and Fig.~\ref{fig:curve_dominoes_vgg},~\ref{fig:curve_dominoes_res18} (Dominoes). We note that our data-generation pipeline was heavily influenced by several past works that use synthetic datasets for better understanding topics such as transfer learning~\citep{dittadi2020transfer}, domain generalization~\citep{wiles2021fine, van2019disentangled, arjovsky2019invariant}, disentanglement~\citep{higgins2016beta, klindt2020towards}, self/semi supervised learning~\citep{von2021self, trivedi2022analyzing, trivedi2022augmentations, locatello2020weakly}, and inductive biases of neural networks~\citep{hermann2020origins, hermann2020shapes, ritter2017cognitive}. 

\begin{figure}[H]
\centering
\centerline{\includegraphics[width=0.85\columnwidth]{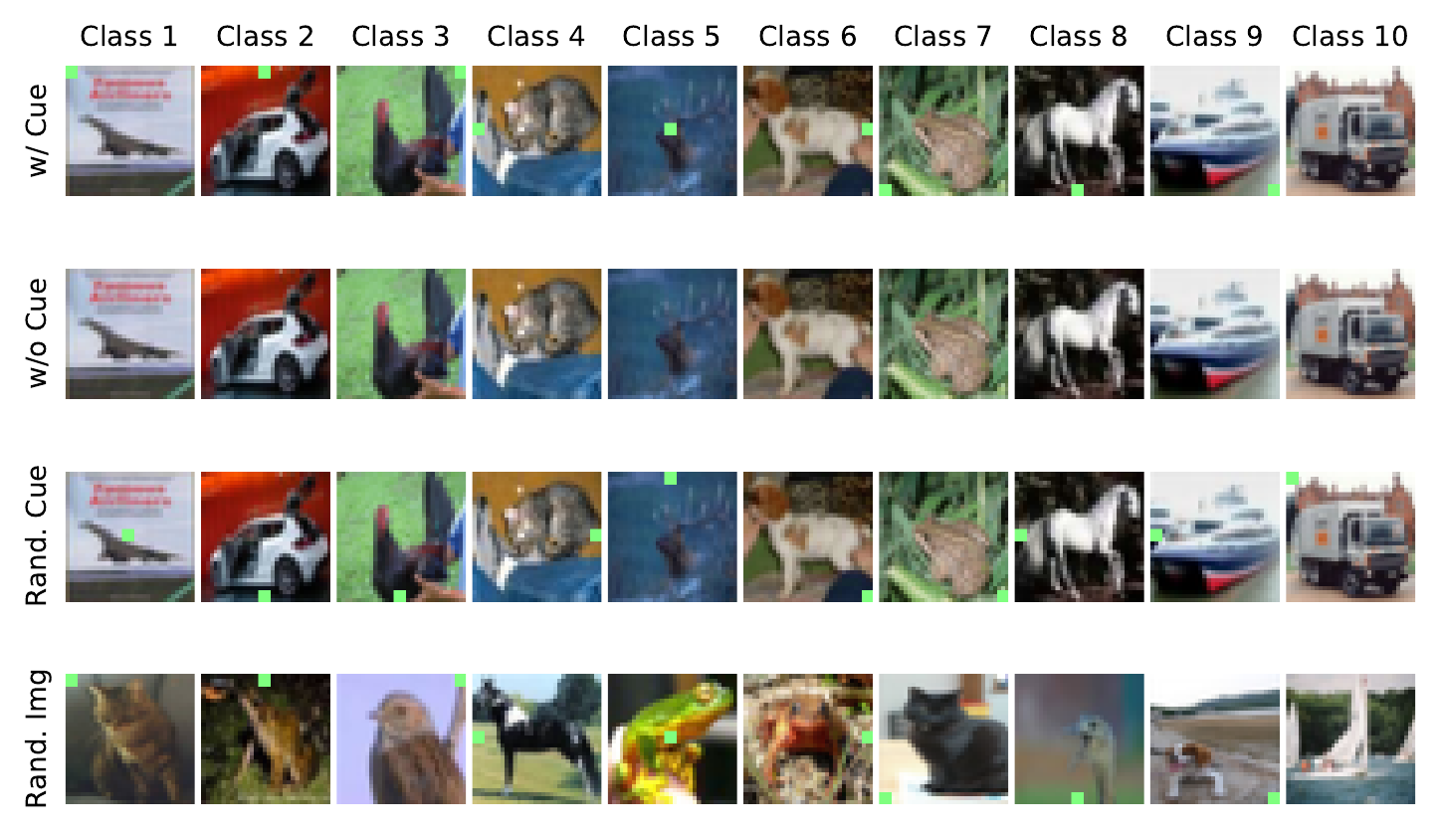}}
\caption{\label{fig:c10viz} CIFAR-10 with Box cue.}
\end{figure}

\begin{figure}[H]
\centering
\centerline{\includegraphics[width=0.85\columnwidth]{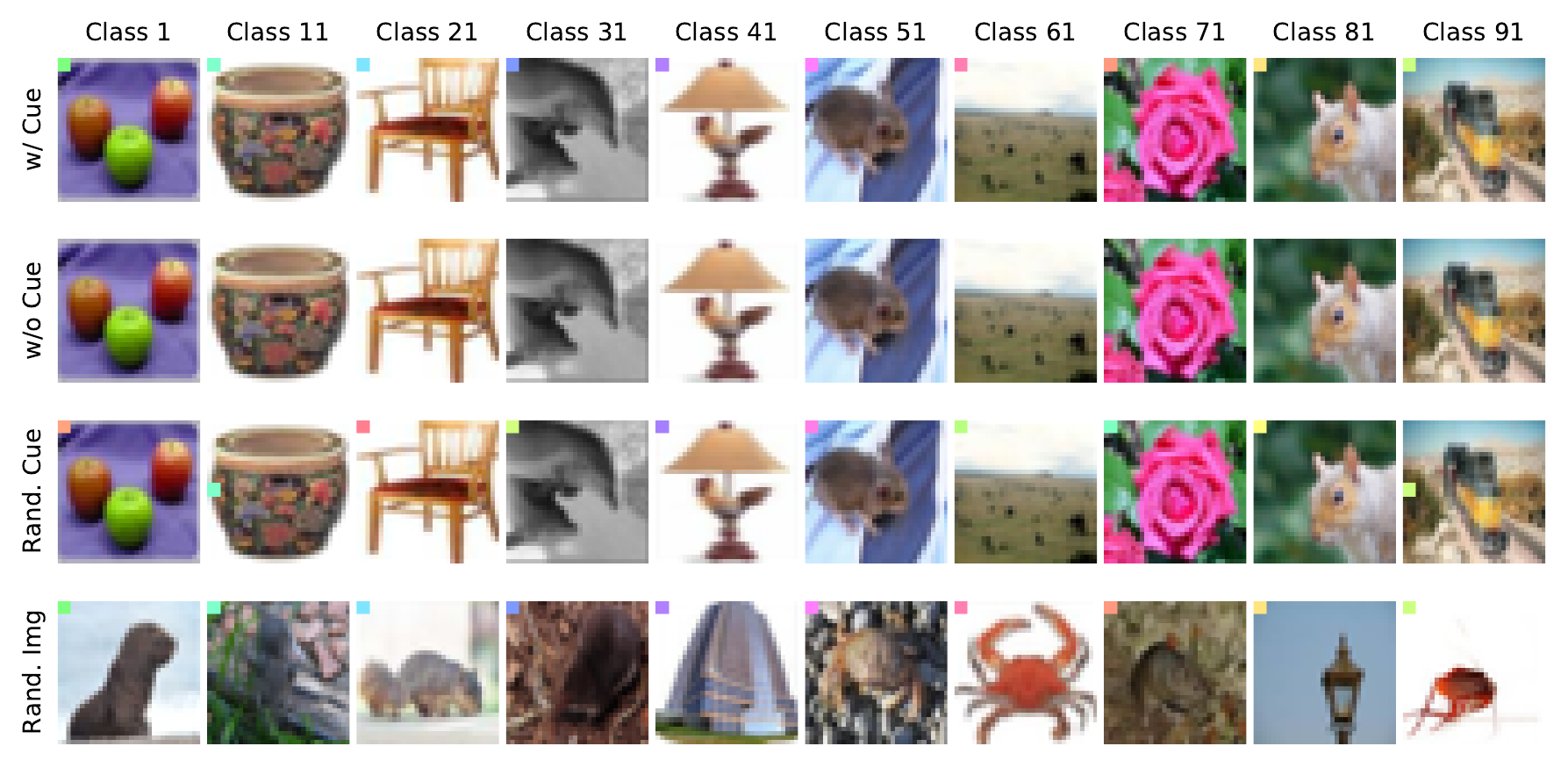}}
\caption{\label{fig:c100viz} CIFAR-100 with Box/Color cue.}
\end{figure}

\begin{figure}[H]
\centering
\centerline{\includegraphics[width=0.8\columnwidth]{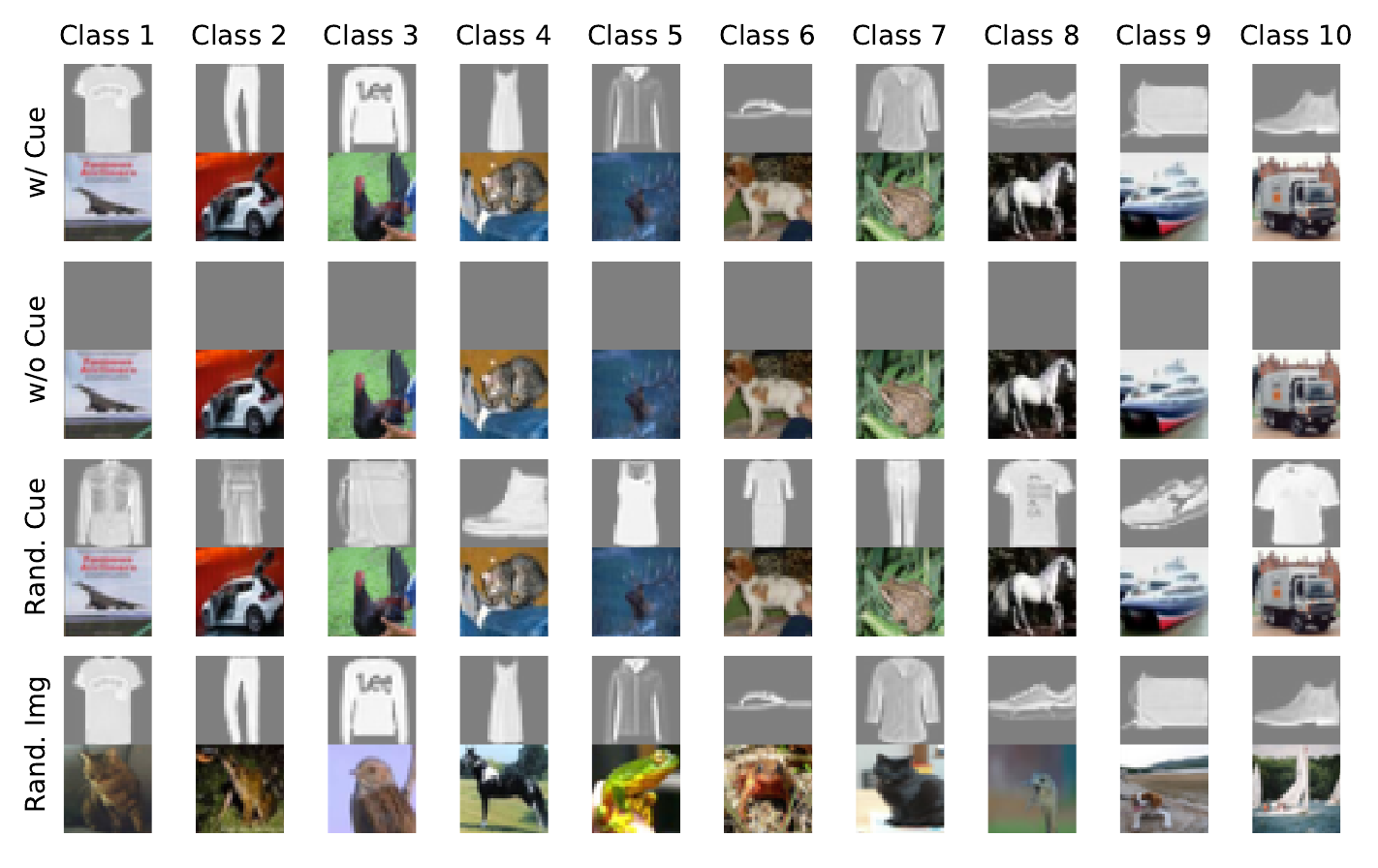}}
\caption{\label{fig:dominoesviz} Dominoes: CIFAR-10 with their corresponding ID image from Fashion-MNIST as the cue.}
\vspace{30pt}
\end{figure}

\begin{figure}[H]
\centering
\begin{subfigure}[b]{0.95\textwidth}
\centerline{\includegraphics[width=\textwidth]{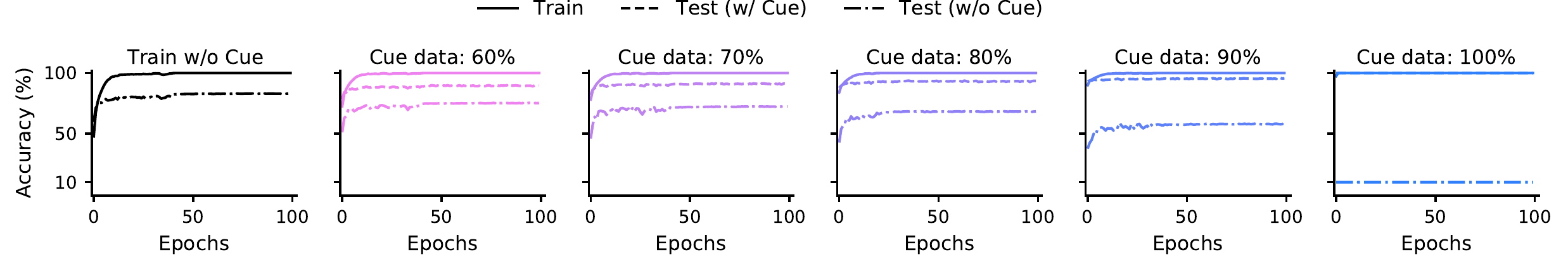}}
\caption{\label{fig:curve_c10_vgg} CIFAR-10 with box cues, wherein the box's location is a function of the target label.}
\vspace{5pt}
\end{subfigure}
\begin{subfigure}[b]{0.95\textwidth}
\centerline{\includegraphics[width=\textwidth]{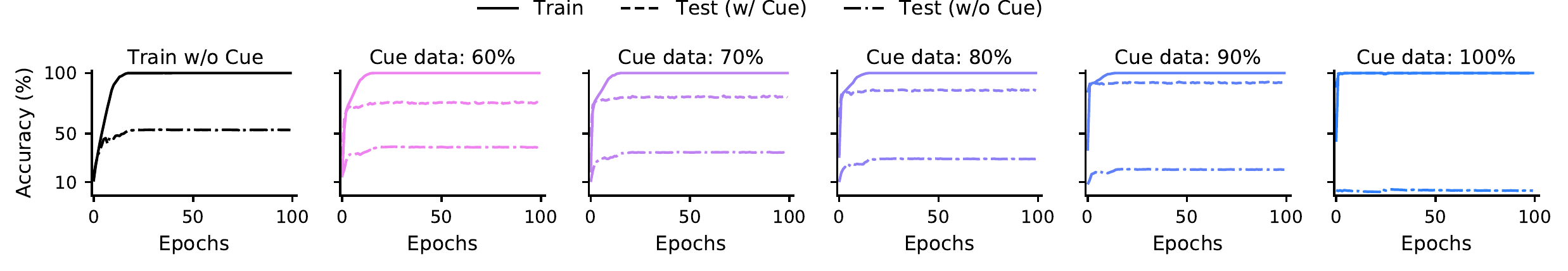}}
\caption{\label{fig:curve_c100_vgg} CIFAR-100 with box cues, wherein the box's location and color are a function of the target label.}
\vspace{5pt}
\end{subfigure}
\begin{subfigure}[b]{0.95\textwidth}
\centerline{\includegraphics[width=\textwidth]{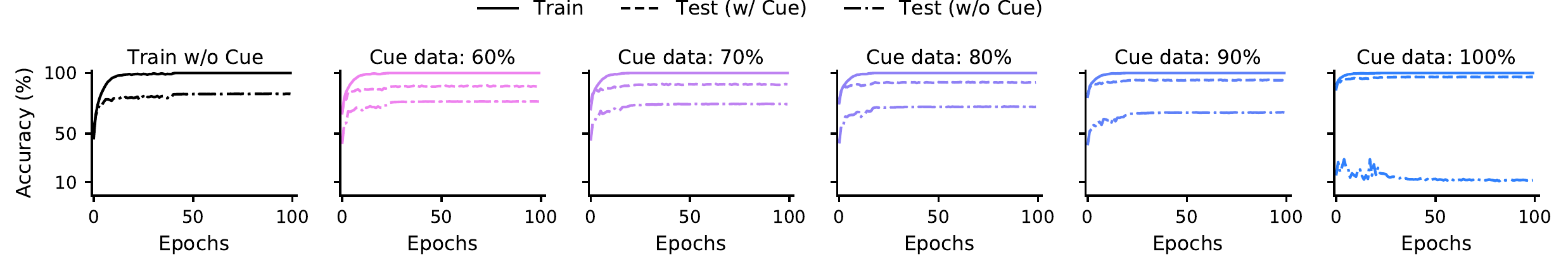}}
\caption{\label{fig:curve_dominoes_vgg} Dominoes, wherein Fashion-MNIST images are appended to CIFAR-10 images and act as the spurious cues.}
\vspace{5pt}
\end{subfigure}
\caption{Learning curves for VGG-13 models.}
\end{figure}

\begin{figure}[H]
\centering
\begin{subfigure}[b]{0.95\textwidth}
\centerline{\includegraphics[width=\textwidth]{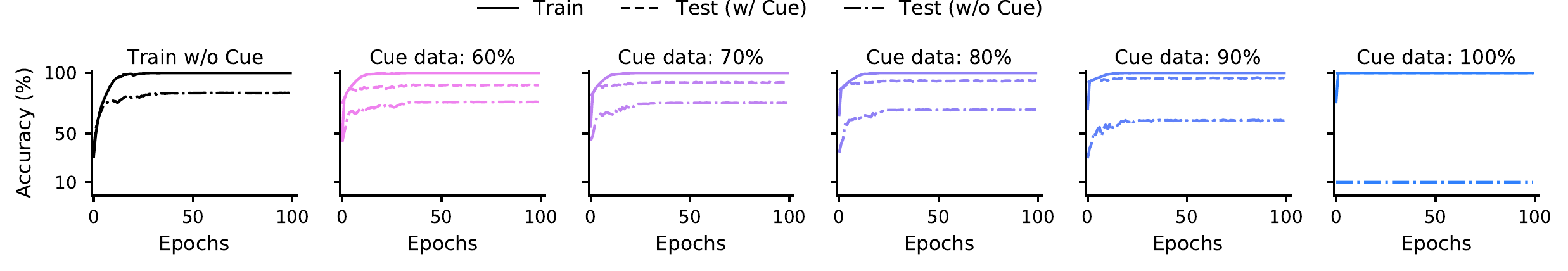}}
\caption{\label{fig:curve_c10_res18} CIFAR-10 with box cues, wherein the box's location is a function of the target label.}
\vspace{5pt}
\end{subfigure}
\begin{subfigure}[b]{0.95\textwidth}
\centerline{\includegraphics[width=\textwidth]{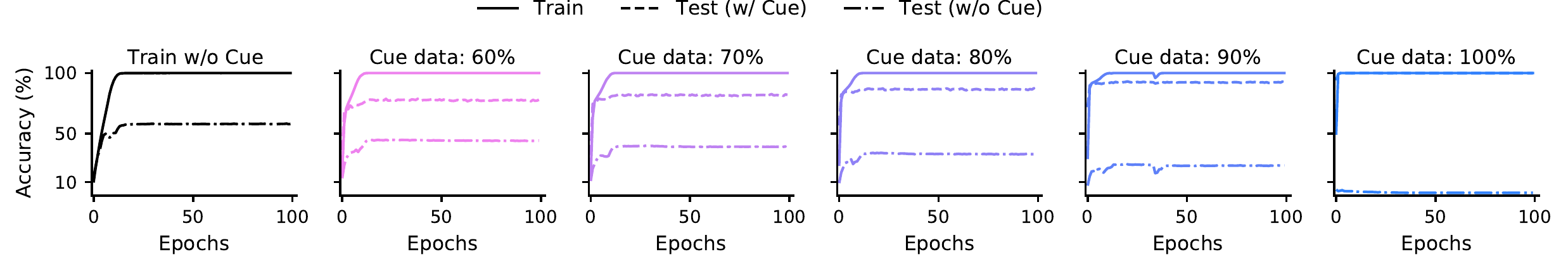}}
\caption{\label{fig:curve_c100_res18} CIFAR-100 with box cues, wherein the box's location and color are a function of the target label.}
\vspace{5pt}
\end{subfigure}
\begin{subfigure}[b]{0.95\textwidth}
\centerline{\includegraphics[width=\textwidth]{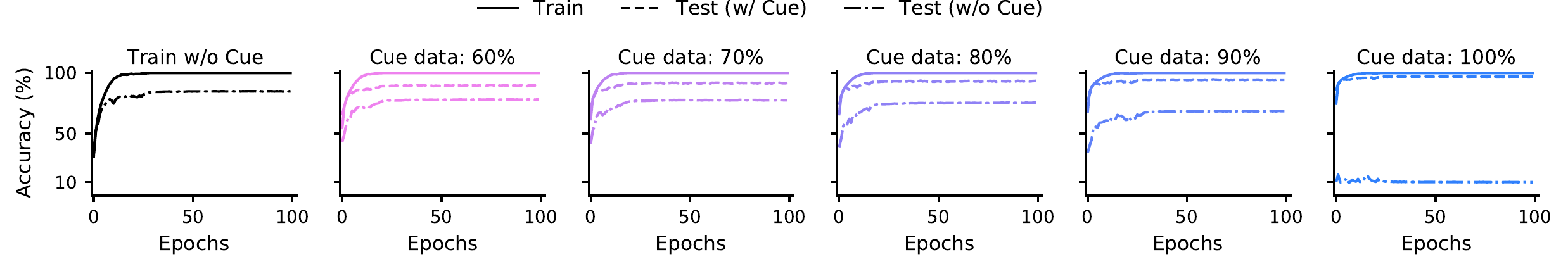}}
\caption{\label{fig:curve_dominoes_res18} Dominoes, wherein Fashion-MNIST images are appended to CIFAR-10 images and act as the spurious cues.}
\vspace{5pt}
\end{subfigure}
\caption{Learning curves for ResNet-18 models.}
\end{figure}

\subsection{Training details for Tab.~\ref{tab:CBFT}}
\label{app:ftdetails}
We train models using SGD on the synthetic data with cue features (47500 samples), reserving remaining 2,500 training samples as ``clean'' data. We emphasize that since the underlying images (i.e., ones without cues) are independent in the two sets, this setup is different from domain generalization methods that use simultaneous pairs of images in different environments to learn invariant representations. 

Depending on the method, the fine-tuning setup involves different hyperparameters. For consistency, we follow \citet{kirichenko2022last} and \citet{kumar2022fine} in using a cosine schedule for fine-tuning on clean data.

\textbf{Na\"{i}ve Fine-Tuning}. We use different initial learning rates, including medium ($0.01$) and small ($0.001$). For a large learning rate, we note that while fine-tuning on a minimal set induces good invariance properties, the performance on the original, without cue data (called NC in tables) is often rather poor (hence we omit those results to save space). This behavior is expected since use of a large learning rate renders the fine-tuning pipeline essentially equivalent to training from scratch, degrading its sample efficiency~\citep{he2019rethinking, kumar2022fine}. 

\textbf{LLRT} \citep{kirichenko2022last}. We freeze the model parameters at their current state, remove the final linear layer, and replace it with a randomly initialized one. The layer is fine-tuned on clean data for 100 epochs with a cosine decay schedule that starts at a LR of 30.

\textbf{LPFT} \citep{kumar2022fine}. First, we follow the protocol above for LLRT to produce a new linear layer. Thereafter, the entire model is fine-tuned on clean data for 20 epochs with initial learning rates of $0.01$, $0.001$, and $0.0001$. The best retrieved results on validation data are reported.

\textbf{CBFT}. We run CBFT for 20 epochs, using an initial learning rate of $0.01$ with a cosine decay schedule (similar to the baselines). The method turns out to be fairly robust to the exact values of $\lambda_1$; we fix it to 1 for all experiments without any explicit tuning therefore. We use a truncated Gaussian distribution center at 0.5 because this helps induce a loss barrier at the center of the linear path between the parameters we are trying to identify and the original, pretraining ones. Truncation is necessary so that only interpolations between the parameters is used. 

We also note that since training will yield gradients for the model that has parameters $\gamma_{\theta \to \theta_C}(t)$, we need to explicitly compute the gradients for $\theta$ by using the following relationship for some general objective function $\mathcal{L}$: 
\begin{equation*}
    \nabla_{\theta} \mathcal{L}(\gamma_{\theta \to \theta_{C}}(t)) = \left(\nabla_{\theta} \gamma_{\theta \to \theta_{C}}(t)\right)^{T} \, \nabla_{\gamma_{\theta \to \theta_{C}}(t)} \mathcal{L}(\gamma_{\theta \to \theta_{C}}(t)) = (1-t) \nabla_{\gamma_{\theta \to \theta_{C}}(t)} \mathcal{L}(\gamma_{\theta \to \theta_{C}}(t)).
\end{equation*}
Thus, one need only compute gradient of an objective with respect to $\gamma_{\theta \to \theta_{C}}(t)$ and multiply that by a factor of $1-t$ to retrieve the gradient of the objective with respect to $\theta$. This step has to be carried out explicitly and hence we have to break the optimization process of CBFT into two steps (see Eq.~\ref{eq:cbft}), executing alternating minimization for the barrier and invariance losses.

\section{Quadratic Connectivity Paths and Matching Permutations} 
\label{app:find_paths}
\subsection{Quadratic Paths}
\label{app:quad_paths}
The qudaratic path is defined as follows.
\begin{equation}
\gamma_{\theta_1 \to \theta_2}(t) = t^2 \theta_1 + 2 t (1-t) \theta_{12} + (1-t)^2 \theta_2.
\end{equation}
The set of parameters $\theta_{12}$ can be thought of as the vertex of a parabola that helps anchor the curve. 
To identify this set of parameters, we follow \citet{garipov2018loss} and train points uniformly sampled from the quadratic path to achieve zero loss on a given dataset $\mathcal{D}$, i.e.,
\begin{equation}
\theta_{12} = \argmin_{\theta} \mathbb{E}_{x \in \mathcal{D}, t \in [0,1]}(\mathcal{L}(f(x; \gamma_{\theta_1 \to \theta_2)(t)})).
\end{equation}
Consequently, note that a quadratic path necessarily depends on the dataset used for its identification and it is not mandatory that it generalize across datasets/distributions. This is precisely what we see in our results in Fig.~\ref{fig:smc}, where we are able to identify quadratic mode connectivty between two sets of parameters on a given dataset, but those paths do not generalize to counterfactual datasets.

\subsection{Finding Permutations for Linear Connectivity} 
\label{app:permutations}
Given two minimizers $\theta_1, \theta_2$, identifying the linear path between them involves merely interpolating the parameters. \citet{entezari2021role, ainsworth2022, singh2020model} hypothesize that minimizers discovered using SGD can always be linearly mode connected up to permutations of neurons that align the two models in their activations or weights. That is, there generally exists a permutation $\pi$ that connects $\pi(\theta_1)$ with $\theta_2$ in the sense of Def.~\ref{def:mode connectivity}. To empirically analyze this claim in our work, we identify $\pi$ by maximizing the similarity of activations produced by model with parameters $\theta_1$ and $\theta_2$:
\begin{equation}
\pi^{*} = \underset{\pi}{\argmin} ||f(x; \pi(\theta_1)) - f(x; \theta_2)||. \label{eq:minpi}
\end{equation}

Given that solving the problem above is NP-Hard~\citep{entezari2021role, ainsworth2022, singh2020model}, we follow the ``activation matching'' algorithm proposed by \citet{ainsworth2022} and  solve the above problem greedily by computing representations at each layer of the two models, finding a permutation that matches the representations maximally, and then repeating the process for the next layer. To this end, we use inputs with a batch-size of 512 and run the matching process over the entire original datasets (i.e., ones without cues). We note that we did conduct minimal experiments on finding permutations using data with cues, instead of without, but never found any noticeable differences in the results. Hence, we decided to use the original data without cues throughout our experiments for finding linear paths. Intuitively, we suspect the exact choice of dataset does not matter for our experimental setup because we analyze pairs of models which include one model that is invariant to the cue and one that is not. Since the invariant models produce the same representations on data with / without cues, the target for permutation matching remains the same. 

\subsection{Why plot accuracy curves instead of loss ones for mechanistic evaluation of mode connectivity}
Due to its differentiability, we focus on loss as our measure of interest for all formal analysis. However, since loss can increase without bound, visualizing loss curves become difficult for our setup that involves evaluation on counterfactual datasets, wherein the discriminative attributes are entirely removed (see Fig.~\ref{fig:dgp}). We thus follow \citet{frankle2020linear} and use accuracy curves for conveying experimental results in the main paper, since accuracy remains bounded within the range 0--100\% and can hence be visualized on a singular plot. We stress however we do provide loss curves as well in this appendix; see App.~\ref{app:smc_results},~\ref{app:lmc_results}.

\section{A Note on Difference Between Permutation and other Architectural Symmetries in the context of mode connectivity}
\label{app:othersyms}
    Note the notion of invariances discussed in this paper is rooted in the data-generating process, i.e., \textit{we discuss symmetry transformations of the data that are learned by the model during the optimization process}. However, similar to permutation symmetry, neural network architectures are known to exhibit several other architectural symmetries (i.e., symmetries that are not learned, but enforced by design of the architecture)~\citep{kunin2021neural}. Such architectural symmetries induce several minimizers that will necessarily be mechanistically similar. For example, resale symmetry, which involves scaling the weights of a given layer by a positive constant and another layer's by the inverse of that constant. This operation yields a different set of parameters that produce the same predictions, hence leading to mechanistically similar minimizers. Such architectural symmetries have an intriguing interplay with gradient-based optimizers (e.g., SGD)~\citep{kunin2021neural,wan2020spherical, roburin2022spherical} analogous to Noether's theorem~\citep{tanaka2021noether}, leading to implicit regularization behavior that yields minimizers with specific properties (e.g., rescale symmetry leads to minimizers with balanced layer-wise norms in the presence of weight decay~\citep{du2018algorithmic, kunin2021neural}). Correspondingly, even though infinite minimizers can be created by, e.g., rescaling layers of a model, only a minuscule fraction of these minimizers are actually reachable via gradient-based optimization. As we note in the preliminaries, we focus on minimizers retrieved using SGD. Thus, such equivalent classes of minimizers induced by other architectural symmetries are not a focus of this paper, as they are not identifiable via standard training pipelines and have to be synthetically induced by use of the corresponding architectural symmetry's operator. This is in contrast with permutation symmetry of neural networks, which does induce equivalent minimizers that are all reachable via the same training pipeline. For example, consider a model trained using some gradient-based optimizer. Permuting the neurons of such a model at initialization and running the same training pipeline will yield a different solution that relates to the original one via the exact same permutation of neurons. Since we randomly initialize models, both the original and the permuted initializations are equally probable, and hence both minimizers are equally likely to be identified using the same training pipeline.

\setlength{\tabcolsep}{4.5pt}
\begin{table}
\caption{\label{tab:ablations}\textbf{Ablating CBFT.} We train ResNet-18 models on our synthetic CIFAR-10, CIFAR-100, and Dominoes dataset with different proportions of samples with cue features and fine-tune them using 2500 ``clean'' samples from a dataset without any cues. Test accuracies (\%) on counterfactual test datasets with No Cue (NC), with Cue (C), Randomized Cue (RC), and Randomized Image (RI) are reported. We compare Connectivity-Based Fine-Tuning (CBFT) with two of its ablations (see App.~\ref{app:ablations}): (i) $-\mathcal{L}_{\text{barrier}}$, for which the barrier inducing loss is removed from the training process and (ii) $-\mathcal{L}_{\text{Inv.}}$, for which the invariance loss is removed. $\sim$ denotes invariance is desirable, i.e., accuracy should be similar to that on NC; $\uparrow$/$\downarrow$ indicate higher/lower accuracy is desirable; best results are in bold. For discussion, please see App.~\ref{app:ablations}.
\vspace{-8pt}
}
\centering
\scriptsize
\begin{tabular}{@{}c|cccc|cccc|cccc|cccc@{}}
\toprule
& \multicolumn{4}{c|}{60\% Cue data}         & \multicolumn{4}{c|}{70\% Cue data}         & \multicolumn{4}{c|}{80\% Cue data}         & \multicolumn{4}{c}{90\% Cue data}         \\ \midrule
C-10  & NC$^{\uparrow}$    & C$^{\sim}$   & RC$^{\sim}$   & RI$^{\downarrow}$  & NC$^{\uparrow}$    & C$^{\sim}$   & RC$^{\sim}$   & RI$^{\downarrow}$  & NC$^{\uparrow}$    & C$^{\sim}$   & RC$^{\sim}$   & RI$^{\downarrow}$   & NC$^{\uparrow}$    & C$^{\sim}$   & RC$^{\sim}$   & RI$^{\downarrow}$   \\ \midrule
CBFT   & 74.1 & \textbf{71.5} & \textbf{73.4}    & \textbf{8.75}      & 73.2 & \textbf{69.2} & \textbf{72.3}    & 8.60      & 70.0 & \textbf{70.0} & \textbf{69.5}    & \textbf{9.68}      & \textbf{67.9} & 72.5 & 68.1    & 13.1      \\
$-\mathcal{L}_{\text{barrier}}$  & \textbf{75.8} & 93   & 69.3 & 24.4 & \textbf{75.9} & 90   & 72.1 & 18.6 & \textbf{71.6} & 89.9 & 66.3 & 23.5 & 67.8 & 89.6 & 65.1 & 20.5 \\
$-\mathcal{L}_{\text{Inv.}}$  & 73.4 & 69.4 & 68.8 & 14.2 & 72.9 & 65.2 & 71.3 & {8.26} & 69.3 & 64.8 & 68.1 & 9.72 & 65.8 & \textbf{64.8} & \textbf{65}   & \textbf{10.3} \\ \midrule
C-100  & NC$^{\uparrow}$    & C$^{\sim}$   & RC$^{\sim}$   & RI$^{\downarrow}$  & NC$^{\uparrow}$    & C$^{\sim}$   & RC$^{\sim}$   & RI$^{\downarrow}$  & NC$^{\uparrow}$    & C$^{\sim}$   & RC$^{\sim}$   & RI$^{\downarrow}$   & NC$^{\uparrow}$    & C$^{\sim}$   & RC$^{\sim}$   & RI$^{\downarrow}$   \\ \midrule
CBFT   & 42.7 & 65.0 & \textbf{36.4}    & 14.6      & 38.5 & \textbf{66.7} & \textbf{34.7}    & \textbf{21.2}      & \textbf{34.6} & \textbf{69.3} & {23.0}    & \textbf{27.9}      & \textbf{28.5} & \textbf{72.9} & \textbf{23.2}    & 46.0      \\
$-\mathcal{L}_{\text{barrier}}$  & \textbf{44.7} & 99.8 & 17.5 & 81.6 & \textbf{40.2} & 99.9 & 13.7 & 88.9 & 34.6 & 99.9 & 11.3 & 95.1 & 26.5 & 99.1 & 13.5 & 82.2 \\
$-\mathcal{L}_{\text{Inv.}}$ & 43.2 & \textbf{59.4} & 36.5 & \textbf{12.5} & 35.7 & 64.2 & 26   & 25.5 & 34.1 & 70.2 & \textbf{23.5} & 36.7 & 24.7 & 69.2 & 15.9 & \textbf{45.6} \\ \midrule
Dom.  & NC$^{\uparrow}$    & C$^{\sim}$   & RC$^{\sim}$   & RI$^{\downarrow}$  & NC$^{\uparrow}$    & C$^{\sim}$   & RC$^{\sim}$   & RI$^{\downarrow}$  & NC$^{\uparrow}$    & C$^{\sim}$   & RC$^{\sim}$   & RI$^{\downarrow}$   & NC$^{\uparrow}$    & C$^{\sim}$   & RC$^{\sim}$   & RI$^{\downarrow}$   \\ \midrule
CBFT   & 72.0 & \textbf{64.9} & \textbf{67.5}    & {9.9}       & 71.5 & \textbf{70.0} & \textbf{59.2}    & {12.1}      & 70.8 & \textbf{69.7} & \textbf{65.9}    & {11.9}      & \textbf{67.2} & \textbf{68.7} & \textbf{61.5}    & {14.9}  \\ %
$-\mathcal{L}_{\text{barrier}}$  & \textbf{77.1} & 94.9 & 63.2 & 32.7 & \textbf{77.4} & 94.2 & 65.8 & 29.2 & \textbf{74.5} & 93.3 & 63.5 & 30.1 & 67.1 & 91.9 & 55.5 & 32.9 \\
$-\mathcal{L}_{\text{Inv.}}$  & 74.2 & 40.4 & 41.8 & \textbf{6.93} & 74.6 & 28.2 & 24.9 & \textbf{10.6} & 71.3 & 20.1 & 22.2 & \textbf{6.92} & 66   & 21.2 & 20.9 & \textbf{6.26} \\ 
\bottomrule
\end{tabular}
\vspace{-12pt}
\end{table}

\begin{table}
\caption{\label{tab:scratch}\textbf{Training from scratch on minimal clean data.} We train ResNet-18 models on the 2500 ``clean'' samples used in Tab.~\ref{tab:CBFT} from the original CIFAR-10, CIFAR-100, and Dominoes datasets. Test accuracies (\%) on counterfactual test datasets with No Cue (NC), with Cue (C), Randomized Cue (RC), and Randomized Image (RI) are reported. $\sim$ denotes invariance is desirable, i.e., accuracy should be similar to that on NC; $\uparrow$/$\downarrow$ indicate higher/lower accuracy is desirable. 
}
\centering
\footnotesize
\begin{tabular}{@{}c|cccc@{}}
\toprule
      & NC$^{\uparrow}$ & C$^{\sim}$ & RC$^{\sim}$ & RI$^{\downarrow}$ \\ \midrule
C-10  & 47.5            & 47.4       & 47.5        & 9.69              \\
C-100 & 16.5            & 16.4       & 16.4        & 1.19              \\
Dom.  & 48.5            & 31         & 31          & 10.8              \\ \bottomrule
\end{tabular}
\vspace{-10pt}
\end{table}

\section{Ablation Experiments on CBFT}
\label{app:ablations}

To analyze the role played by the two loss functions involved in the alternating minimization steps of Connectivity-Based Fine-Tuning (CBFT) (see \S\ref{sec:mechanistic fine tuning}, Eq.~\ref{eq:cbft}), we present an ablation study as follows. We analyze two variants of CBFT: (i) \textbf{$-\mathcal{L}_{\text{barrier}}$}, for which the barrier inducing loss $\mathbb{E}_{t \sim \mathcal{N}_{\text{Tr}}}|\lambda_1 - \mathcal{L}_{\text{CE}}(f(\mathcal{D}_{\text{C}}; \gamma_{\theta \to \theta_{\text{C}}}(t)), y)|$ has been removed from the training process, and (ii) \textbf{$-\mathcal{L}_{\text{Inv.}}$}, for which the invariance loss $\left(\sum_{k=1}^{K} \norm{\mathbb{E}_{x \in \mathcal{D}^{k}_{\text{C}}} (f_r(x;\theta)) - \mathbb{E}_{\Tilde{x} \in \mathcal{D}^{k}_{\text{NC}}}(f_r(\Tilde{x};\theta))}^{2}\right)$ has been removed. Results are shown in Tab.~\ref{tab:ablations}. We find that without the barrier loss, the trained model is unable to break its reliance on spurious cues, even though it generally achieves the best performance on data without cues (NC in table). Meanwhile, without the invariance loss, the trained model indeed loses sensitivity to spurious cues and shows poor performance when the underlying image is randomized, as we desire. However, in few instances the model can become anti-correlated with the spurious cue (e.g., see results on Dominoes). This is expected since the barrier loss's goal is to move the model to a region in the landscape that follows different mechanisms (with respect to the pre-trained model) by inducing a loss barrier; without the invariance loss, the model can learn to induce this barrier by merely becoming anti-correlated with the spurious cue. The invariance loss helps prevent this pitfall, selecting a mechanistically dissimilar region in the landscape that is uncorrelated, instead of being anti-correlated with the spurious cue. Overall, these results provide further corroboration to our claims in \S\ref{sec:mechanistic fine tuning}: \textit{preventing linear connectivity helps induce mechanistic dissimilarity and an invariance penalty helps select the exact mechanisms we want the models to differ in.} Overall, this ablation study help us infer that while the two losses involved in CBFT have their individual benefits, it is only when they are combined that they give the best results.

\subsection{Comparison with Training from Scratch}

We compare CBFT against training from scratch on the minimal clean dataset that we assume access to during the training process for all baselines and CBFT in Tab.~\ref{tab:CBFT}. Specifically, we train ResNet-18 models for 100 epochs using an initial learning rate of 0.1 and a cosine decay schedule. Results are reported in Tab.~\ref{tab:scratch} and we see training from scratch significantly underperforms all baselines and CBFT. This is expected since our setup assumes access to only a \textit{minimal} clean dataset for inducing invariance to spurious attributes. Since training from scratch is not a sample efficient strategy, it cannot perform well in this setting. We also highlight that using as initialization a model pretrained on an unclean dataset, i.e., one that contains spurious attributes, will make this overall process equal to na\"{i}ve fine-tuning on the clean dataset; we already provide results for na\"{i}ve fine-tuning in Tab.~\ref{tab:CBFT}.

\section{Deferred Proofs}
\label{app:proofs}

\subsection{Exhaustiveness of Unit Interventions}
\textbf{Proposition \ref{lem1}.}
\emph{\textbf{(Exhaustiveness of Unit Interventions.)} If $f(.; \theta)$ is invariant to unit interventions $\mathcal{A}_{i}$ and $\mathcal{A}_{j}$, it must be invariant to their composition; conversely, lack of invariance to either $\mathcal{A}_{i}$ or $\mathcal{A}_{j}$ precludes invariance to their composition. }
\vspace{-5pt}
\begin{proof}
    Assume the set of parameters $\theta$ induces a model that exhibits invariance to the intervention $\mathcal{A}_i$. Independently, consider another intervention $\mathcal{A}_j$. Then, $f(\mathcal{E}(x; \{\mathcal{A}_i, \mathcal{A}_j\}); \theta) = f(\mathcal{G}_X \circ \mathcal{A}_i \circ \mathcal{A}_j \circ \mathcal{G}_X^{-1}(x); \theta) = f(\mathcal{G}_X \circ \mathcal{A}_i \circ \mathcal{G}_X^{-1}(\mathcal{E}(x; \mathcal{A}_j)); \theta) = f(\mathcal{E}(\mathcal{E}(x; \mathcal{A}_j); \mathcal{A}_i); \theta) = f(\mathcal{E}(x; \mathcal{A}_j); \theta)$, where the last equality happens due to the assumed invariance of $\mathcal{A}_i$. Now, if $\theta$ exhibits invariance to $\mathcal{A}_j$ as well, we have $f(\mathcal{E}(x; \{\mathcal{A}_i, \mathcal{A}_j\}); \theta) = f(\mathcal{E}(x; \mathcal{A}_j); \theta) = f(x; \theta)$, i.e., the model induced by $\theta$ is invariant to the composition of $\mathcal{A}_i$ and $\mathcal{A}_j$. Meanwhile, if $\theta$ is invariant $\mathcal{A}_i$ but not to $\mathcal{A}_j$, we have $f(\mathcal{E}(x; \{\mathcal{A}_i, \mathcal{A}_j\}); \theta) = f(\mathcal{E}(x; \mathcal{A}_j); \theta) \neq f(x; \theta)$, i.e., $\theta$ induces a model that lack invariance to the simultaneous operation (i.e., composition) of $\mathcal{A}_i$ and $\mathcal{A}_j$. 
    
    Note that the derivation above did not rely on the fact that the interventions are ``unit'' in the sense that they act on independent dimensions. However, if one considers general interventions that can act on multiple dimensions of the latent space simultaneously, then a given intervention can undo the effects of another. For example, assume a model is not invariant to unit interventions on a dimension that rotates an object, but are invariant to unit interventions on all other latent dimensions. Then, if two general interventions involve operation on this latent dimension, they can make an object rotate by equal and opposite angles, while changing some other dimensions of the latent state that the model is invariant to. In this case, the interventions end up undoing each other's effect, and the overall state change does not yield any influence on the model output. By assuming unit interventions that enforce transformations on specific dimensions, we circumvent this failure mode.
\end{proof}

\subsection{Mode Connectivity of Mechanistically Dissimilar Models}

We first repeat the following result from prior work (paraphrased per our notations and setup).

\begin{lemma}
\label{lem:berfin} 
\textbf{\citep{simsek2021geometry}.}
Consider an $L$-layer network $f(.; \theta)$, whose activation function $\phi$ satisfies $\phi(0) \neq 0$, $\phi^{(n)} \neq 0$ for infinitely many odd and even values of $n$, where $\phi^{(n)}$ denotes the $n^{\text{th}}$ derivative of $\phi$. Let $r_1^{*}, r_2^{*}, \dots, r_L^{*}$ be the minimum number of neurons needed in layers $1$ to $L$ for achieving zero error (cross-entropy or mean-square error) on a dataset $\mathcal{D}$ and call a network overparameterized if for all layers $l$, it contains number of neurons $r_l > r_l^{*}$. Then, under overparameterization, there always exists a continuous, zero-loss path that connects two minimizers.
\vspace{-5pt}
\end{lemma}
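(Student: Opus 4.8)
The plan is to prove the existence of a continuous zero-loss path between two global minimizers $\theta_1,\theta_2$ by a three-stage construction: first collapse each minimizer to a canonical minimal-width realization, then argue the two canonical forms agree up to a permutation of neurons within each layer, and finally realize that permutation continuously at zero loss using the surplus neurons guaranteed by overparameterization. I would work one hidden layer at a time and lift to the $L$-layer case by induction.

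The central primitive I would establish first is \emph{dead-neuron mobility} together with a \emph{deactivation move}. A neuron whose outgoing weight is zero contributes nothing to the network output, so its incoming weights may be translated arbitrarily along a straight segment without altering the realized function; this gives free, zero-loss motion of any ``dead'' neuron. The deactivation move is the converse: starting from a minimizer in a layer with $r_l > r_l^{*}$ active neurons, I would continuously drive one surplus neuron's outgoing weight to zero while simultaneously re-expressing its lost contribution through the remaining neurons, keeping the loss pinned at zero throughout. The activation-function hypotheses ($\phi(0)\neq 0$ and $\phi^{(n)}\neq 0$ for infinitely many odd and even $n$) enter exactly here: they guarantee linear independence of the per-neuron feature maps $\{\phi(w_i^\top\cdot)\}$ over the data for distinct $w_i$, which both makes the minimal-width representation \emph{rigid} (unique up to permutation) and certifies that such a redistribution exists whenever the active count exceeds $r_l^{*}$. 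In the language of \citet{simsek2021geometry}, the deactivation move is a contraction back along a zero-type or unit-type expansion manifold.

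Given these primitives, the construction proceeds as follows. Applying the deactivation move repeatedly, layer by layer, I would deform $\theta_1$ and $\theta_2$ along zero-loss paths into canonical forms $\theta_1^{c},\theta_2^{c}$ in which precisely $r_l^{*}$ neurons are active in each layer $l$ and the remaining $r_l-r_l^{*}$ are dead; each such deformation moves within a connected, affine expansion manifold and hence stays at zero loss. By the rigidity established above, $\theta_1^{c}$ and $\theta_2^{c}$ realize the same function on $\mathcal{D}$ using the same multiset of active neuron ``types,'' hence differ only by a within-layer permutation of neurons. I would then realize this permutation as a continuous zero-loss path: writing it as a product of transpositions, each transposition of two active neurons is executed by temporarily parking a copy of one neuron in a dead slot (dead-neuron mobility moves its incoming weights for free, while its outgoing weight is ramped in and out along a segment, splitting and re-merging the output contribution so the realized function is invariant at every instant). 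Concatenating the pieces $\theta_1\!\to\!\theta_1^{c}$, the permutation path $\theta_1^{c}\!\to\!\theta_2^{c}$, and the reverse of $\theta_2\!\to\!\theta_2^{c}$ then yields the claimed path.

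The main obstacle I anticipate is the deactivation move and the rigidity claim underpinning it, i.e.\ translating the analytic conditions on $\phi$ into linear independence of the feature maps and thereby into uniqueness of the minimal realization; this is the technical heart of \citet{simsek2021geometry}. A secondary difficulty is the inductive lift to $L$ layers: deactivating a neuron in an early layer alters the inputs seen by later layers, so the canonicalization must be propagated carefully (for instance, from the output layer backward) to ensure that zero loss is maintained globally rather than only within an isolated layer.
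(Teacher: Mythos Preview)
The paper does not actually prove this lemma: it is quoted verbatim as a result of \citet{simsek2021geometry} and accompanied only by a one-sentence gloss, namely that ``permutation symmetry of neural networks yields a single continuous manifold of zero loss, and then proving all parameters that yield zero-loss lie on this manifold.'' There is therefore no detailed argument in the paper against which to benchmark your proposal.

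That said, your sketch is consistent with that gloss and with the structure of the original \citet{simsek2021geometry} argument. Your three stages---(i) contract each minimizer along expansion manifolds to a canonical minimal realization, (ii) invoke rigidity (uniqueness up to permutation) of the minimal realization, (iii) realize the permutation continuously using the spare neuron---are exactly the ingredients that make the overparameterized zero-loss set a single connected manifold. You have also correctly identified where the analytic hypotheses on $\phi$ are spent: they force linear independence of the feature maps $x\mapsto\phi(w_i^\top x)$ for distinct $w_i$, which delivers both the rigidity of the minimal representation and the existence of the redistribution needed for the deactivation move. Your anticipated obstacles (the deactivation/rigidity step and the multi-layer induction) are indeed the technical core of the cited reference, and your remark that canonicalization must be propagated carefully across layers is on point. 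In short, there is nothing in the paper beyond the citation, and your plan tracks the cited proof's architecture faithfully.
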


The result above involves showing permutation symmetry of neural networks yields a single continuous manifold of zero loss, and then proving all parameters that yield zero-loss lie on this manifold. We highlight the amount of overparameterization needed for the claim's validity is rather mild, i.e., just one additional neuron per layer. Also note that while the proof makes assumptions on the analyticity of the activation function used, this constraint is only mandatory for ease of theoretical analysis. Moreover, continuous approximations to ReLU exist which satisfy these assumptions. For example, $\phi(x) = \phi_{\text{softplus}}(x) + \phi_{sigmoid}(4x)$, where $\phi_{\text{softplus}}(x) = \ln (1 + \exp (x))$ and $\phi_{\text{sigmoid}}(x) = \nicefrac{1}{1 + \exp(-x)}$. Similar result was also shown by \citet{nguyen2019connected}, who demonstrates networks with a pyramidal structure, i.e., networks for which the width of any given layer is less than or equal to its preceding layers.

Our claim on mode connectivity of mechanistically dissimilar models now follows as a corollary.

\textbf{Proposition \ref{claim:all_mimima_connect}.}
\emph{\textbf{(Mode Connectivity of Mechanistically Dissimilar Models.)} Assume $\theta_1, \theta_2$ are minimizers of the loss on a dataset $\mathcal{D}$ and induce mechanistically dissimilar models. Given sufficient overparameterization, there exists a continuous path along which the minimizers are mode connected.}
\vspace{-5pt}
\begin{proof}
    By definition, $\mathcal{L}(f(\mathcal{D}; \theta)) = 0$ for $\theta \in {\theta_1, \theta_2}$. Since the distribution of data plays no role in the proof of Lemma~\ref{lem:berfin}, the result must hold for two minimizers that rely on entirely disparate mechanisms (e.g., background vs.\ shape) for achieving zero-loss on a dataset $\mathcal{D}$. The claim then directly follows as a corollary of Lemma~\ref{lem:berfin}, assuming the model is overparameterized in the sense defined there and the loss is either cross-entropy or mean-square error.
\end{proof}

\subsection{Lack of Linear Connectivity and Mechanistic Dissimilarity}
\textbf{Conjecture \ref{claim:lmc}.}
\emph{
\textbf{(Lack of Linear Connectivity implies Mechanistic Dissimilarity.)}
If two minimizers $\theta_1$ and $\theta_2$ of the loss $\mathcal{L}(f(\mathcal{D}; \theta))$ on a dataset $\mathcal{D}$ cannot be linear mode-connected (up to permutations of neurons), their corresponding models $f(.; \theta_1), f(.; \theta_2)$ must be mechanistically dissimilar.
}

As we show next, the conjecture above can be proven in a simplified setting.

\textbf{Model Setup:} We consider a binary classification task on a dataset $\mathcal{D} = \{x_{i}, y_{i}\}_{i=1}^{M}$, where $x_i \in \mathbb{R}^{D}$, $y \in \mathcal{Y} = \{0, 1\}$, and $M$ is the number of samples. The model is a 1-hidden layer, fully connected network defined as follows: $f(x; W) = \frac{1}{N}\mathbf{1}^{T} \phi(W^{T} x)$. Here, $W \in \mathbb{R}^{D \times N}$ denotes the hidden layer with $N$ neurons, $\mathbf{1} \in \mathbb{R}^{N}$ is an all ones vector, and $\phi(.)$ is the ReLU activation function. The model is trained to minimize a loss $\mathcal{L}\left(f(\mathcal{D}; W)\right) = \frac{1}{M} \sum_{i=1}^{M} l\left(y_i, f(x_i; W)\right)$, where $l(.,.)$ denotes a sample-wise loss function whose global minimizer yields $y_i = f(x_i; W)$ for all $x_i \in \mathcal{D}$. This property is satisfied by several loss functions, e.g., mean-square error, L-1 loss, etc. We assume the models are overparamterized such that all minimizers are global and interpolating, i.e., they achieve zero loss~\citep{kawaguchi2016deep, kawaguchi2020elimination, nguyen2018loss, nguyen2020global, arora2019fine}. This implies if $W_{*}$ is a minimizer, $\forall i \in [M], y_i = f(x_i; W_{*}) = \frac{1}{N} \mathbf{1}^{T} \phi(W_{*}^{T} x_i)$. 

We next describe the data-generating process that we will focus on in the following discussion.

\textbf{Data-Generating Process:} We consider a data-generating process with multiple predictive attributes of different complexity, inspired by the one proposed by \citet{shah2020pitfalls}.

Consider a non-negative even integer $K$. Define the sets $S_{0}(K)$ and $S_{1}(K)$ that respectively include odd and even integers between $[-\frac{K}{2}, \frac{K}{2}]$. We use $\text{sign}(.)$ to denote the sign function, which outputs $1$ if $x > 0$, $0$ if $x = 0$, and $-1$ if $x < 0$. $\text{Unif}(S)$ denotes a uniform distribution over the set $S$. We define a randomized process $s_{K}$, such that $s_{K}(0) \sim \text{Unif}(S_{0}(K))$, $s_{K}(1) \sim \text{Unif}(S_{1}(K))$. Correspondingly, given a margin hyperparameter $\delta \in [0, 0.5]$, we define the randomized function $T_{K}(z): \{0, 1\} \to \mathbb{R}$ as follows.
\begin{equation}
\label{eq:syndgp}
\begin{split}
    T_K(z) :=
    \begin{cases}
        \frac{\sqrt{3}}{\sqrt{D}}\left(z - \epsilon \, \text{sign}(z)\right), &\text{where } \epsilon \sim \text{Unif}([0, 2\,\delta]), \text{ if } K = 0, \\
        \frac{2\sqrt{3}}{K\sqrt{D}} \left(s_{K}(z) + \epsilon \right), &\text{where } \epsilon \sim \text{Unif}([-\delta, \delta]), \text{ if } K \geq 1, |s_{K}(z)| \neq \frac{K}{2}, \\
        \frac{2\sqrt{3}}{K\sqrt{D}} \left(s_{K}(z) - \epsilon \, \text{sign}(z)\right), &\text{where } \epsilon \sim \text{Unif}([0, \delta]), \text{ if } K \geq 1, |s_{K}(z)| =  \frac{K}{2}.
    \end{cases}
\end{split}
\end{equation}

Note that $T_{K}(z)$ produces a zero-mean output with variance $\nicefrac{1}{D}$. The margin $\delta$ allows us to draw infinite samples from the function. More importantly, $T_{K}(z)$ is invertible, i.e., given its output, we can infer $z$. Correspondingly, if $z$ defines the target label $y$, inverting the attribute $T_{K}(z)$ will allow us to solve a classification task defined on this attribute. 
However, this inversion process requires inference of $K$ piece-wise linear splines to model the optimal decision boundaries (see Fig.~\ref{fig:slabs}). The scalar $K$ can thus can be considered a measure of the complexity of the attribute, inline with prior work on simplicity bias in neural networks~\citep{kalimeris2019sgd, shah2020pitfalls, valle2018deep, scimeca2021shortcut, hu2020surprising}. For example, if $K=0$, the attribute is linearly separable and of least complexity. This notion of complexity is particularly natural for studying neural networks with ReLU activations because each neuron in such a model represents a spline function and several such neurons can approximate complex decision boundaries by representing them with such piece-wise spline functions~\citep{balestriero2018spline, balestriero2018mad,  balestriero2017neural, wang2018max}.

\setlength{\intextsep}{2pt}%
\setlength{\columnsep}{8pt}%
\begin{wrapfigure}{r}{0.42\textwidth}
    \begin{subfigure}[b]{0.40\textwidth}
      \centering
      \centerline{\includegraphics[width=\columnwidth]{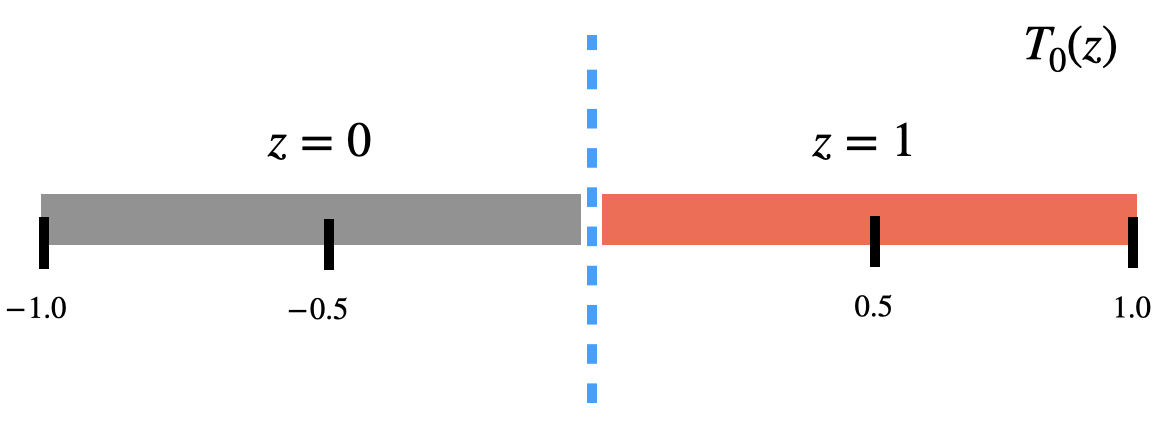}}
    \end{subfigure}
    \begin{subfigure}[b]{0.45\textwidth}
      \centering
      \centerline{\includegraphics[width=\columnwidth]{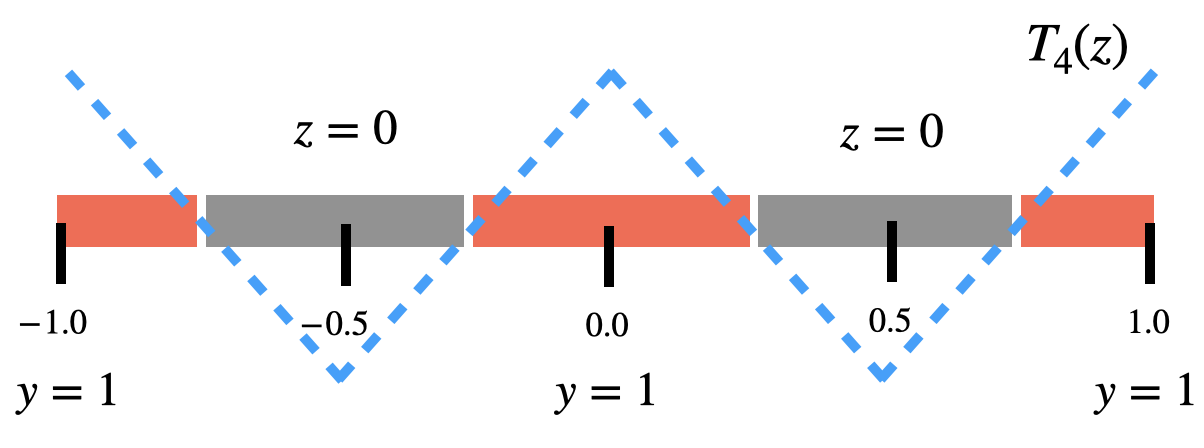}}
    \end{subfigure}
    \vspace{-1em}
  \caption{\label{fig:slabs}\textbf{K-Complex Outputs.} We illustrate the output range of randomized function $T_{K}(z)$ (Eq.~\ref{eq:syndgp}) for $K=0$ (top) and $K=4$ (bottom). Note the input $z \in \{0, 1\}$ (shown grey, red respectively) deterministically controls the output values. Thus, if $y = z$, the output $T_{K}(z)$ is perfectly predictive of the target label. However, inverting this function requires inference of $K$ piece-wise linear splines to model the optimal decision boundaries (shown blue, dotted lines). Increasing the value of $K$ makes the task consequently harder.}
  \vspace{-0em}
\end{wrapfigure}

The overall data-generating process $\mathcal{G}(Z)$ transforms the $n$-dimensional random variable $Z$ from the latent space $z_1 \times z_2 \times \dots \times z_n \in \{0, 1\}^{n}$ to produce samples with $n$ attributes of different complexities $T_{K_1}(z_1), T_{K_2}(z_2), \dots, T_{K_n}(z_n)$ and appends $D-n$ noisy attributes, sampled from a symmetric, zero-mean distribution $\mathcal{V}$ with variance $\frac{1}{d}$; e.g., the Gaussian distribution $\mathcal{N}\left(0, \nicefrac{1}{\sqrt{D}}\right)$ or uniform distribution $\mathcal{U}\left(-\sqrt{\nicefrac{3}{D}}, \sqrt{\nicefrac{3}{D}}\right)$. Correspondingly, generation of a sample $(x, y)$ can be represented as follows:
\begin{equation}
\footnotesize
\label{eq:dgpsample}
    (x, y) := \mathcal{G}(Z)
      =  \left[
           T_{K_1}(z_1), 
           T_{K_2}(z_2), 
           \dots,
           T_{K_n}(z_n),
           \nu_1,
           \nu_2,
           \dots,
           \nu_{d-n} \right]^{T}, 
\end{equation}
where $\nu_i \sim \mathcal{V}$ for all $i \in \{1, 2, \dots, D-n\}$. In the above, the target label $y$ is assumed to be generated by the function $\mathcal{G}_{y}(.) = T_{K_i}^{-1}(.)$, which inverts the attribute $T_{K_i}(z_i)$ that is assumed to define the label. Note again that another attribute, say $T_{K_j}(z_j)$, will also be predictive of the label if the corresponding latent, $z_j$, is correlated with $z_i$. This is similar to putting a correlated \textit{cue} attribute in the data, as was done in our experiments in the main paper. Thus, while the process above is relatively simplified, it is a valid abstraction of our empirical setup from \S\ref{sec:dgpsetup}, Fig.~\ref{fig:dgp}. In the following, we will consider perfectly predictive attributes, i.e., we do not assume partial correlation.

We next define the notion of an \textit{activation pattern} and \textit{matching of two models in their activation patterns}. In the following, $\Sigma_{N}$ denotes the permutation group of order $N$ over the set $\{1, 2, \dots, N\}$~\citep{bronstein2021geometric}. 

\begin{definition}
\textbf{(Activation Pattern.)}
The activation pattern of a model $f(.; W)$ on input $x$ is defined as the vector $\phi'(W^{T}x)$ whose elements are indicator variables denoting whether the $j^{\text{th}}$ hidden neuron is activated for input $x$, i.e., $\phi'(W^{T}x)[j] = 1\left(W_{j}^{T}x > 0\right)$.
\vspace{-5pt}
\end{definition}
Note that $\phi(W^{T}x) = \phi'(W^{T}x) \odot (W^{T} x)$, where $\odot$ denotes the element-wise product, if $\phi(.)$ is the ReLU function.

\begin{definition}
\textbf{(Matching in Activation Patterns.)}
Consider a dataset $\mathcal{D}$ and two models $f(.; W_1)$ and $f(.; W_2)$. We call the models \emph{matching in activation patterns} on dataset $\mathcal{D}$ if there exists a permutation $\pi \in \Sigma_{N}$ that rearranges neurons of $f(.; W_2)$ to match $f(.; W_1)$'s activation patterns, i.e., $\phi'(W_1^T x) = \phi'(\pi(W_2)^T x)$ for all $x \in \mathcal{D}$.
\end{definition}

Next, we establish the relationship between two models' activation patterns and linear mode connectivity.
\begin{lemma} 
\label{lem:align}
\textbf{(Alignment Constraint for Linear Mode Connectivity.)}
    If minimizers $W_{\alpha}$ and $W_{\beta}$ exhibit linear mode connectivity on dataset $\mathcal{D}$, then the models $f(.; W_{\alpha}), f(.; W_{\beta})$ are matching in activation patterns on the dataset. That is, for all $x \in \mathcal{D}$, we have $\phi'(W_{\alpha}^{T}x) = \phi'(W_{\beta}^{T}x)$.
    \vspace{-10pt}
\end{lemma}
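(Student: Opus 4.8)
The plan is to turn linear mode connectivity into a statement about the network's output along the entire interpolating segment, and then exploit the fact that, as a function of the interpolation parameter $t$, each ReLU unit contributes a \emph{convex} term whose curvature cannot be cancelled by the others. First I would observe that, because $W_\alpha$ and $W_\beta$ are global minimizers of an interpolating, overparameterized model, $f(x_i; W_\alpha) = f(x_i; W_\beta) = y_i$ and $\mathcal{L}(f(\mathcal{D}; W_\alpha)) = \mathcal{L}(f(\mathcal{D}; W_\beta)) = 0$. By Definition~\ref{def:mode connectivity}, the absence of a barrier gives $\mathcal{L}(f(\mathcal{D}; W_t)) \leq (1-t)\cdot 0 + t \cdot 0 = 0$ for $W_t := (1-t)W_\alpha + t W_\beta$ and every $t \in [0,1]$; non-negativity of the loss then forces $\mathcal{L}(f(\mathcal{D}; W_t)) = 0$, so $f(x_i; W_t) = y_i$ for every sample and every $t$. (If connectivity only holds after a neuron permutation $\pi$, I would replace $W_\beta$ by $\pi(W_\beta)$ throughout; since $\pi$ merely relabels hidden units, the per-neuron conclusion transfers back verbatim.)

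Next, I would fix a sample $x = x_i$ and write $a_j := (W_\alpha^T x)_j$, $b_j := (W_\beta^T x)_j$, so that the $j$-th pre-activation along the path is the affine function $\ell_j(t) := (1-t)a_j + t b_j$. Then
\begin{equation*}
N y_i = N f(x; W_t) = \sum_{j=1}^{N} \phi(\ell_j(t)) = \sum_{j=1}^{N} \max\left(0, \ell_j(t)\right) =: g(t),
\end{equation*}
for all $t \in [0,1]$. Each summand $\max(0, \ell_j(t))$ is a convex, piecewise-linear function of $t$, so $g$ is convex; being identically constant, $g$ is affine with vanishing curvature.

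The key step is to rule out any neuron changing sign across the segment. A neuron $j$ with $\text{sign}(a_j) \neq \text{sign}(b_j)$ (say $a_j > 0 > b_j$, or the symmetric case) makes $\ell_j$ cross zero at an interior point $t^*_j \in (0,1)$, so $\max(0, \ell_j(t))$ has a strictly convex kink there: its right derivative exceeds its left derivative by exactly $|a_j - b_j| > 0$. Since every other summand is convex, its derivative is non-decreasing and cannot produce a compensating downward jump, even when several kinks coincide at the same $t^*$. Hence $g'$ would jump upward at $t^*_j$, contradicting $g \equiv \text{const}$. I would make this airtight through the distributional second derivative: $\left(\max(0,\ell_j)\right)''$ is a non-negative measure (a Dirac atom of mass $|a_j - b_j|$ at $t^*_j$ whenever $\ell_j$ crosses zero), and $g'' = \sum_j \left(\max(0,\ell_j)\right)'' = 0$ forces every atom to vanish, i.e., no interior crossings. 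Therefore $a_j$ and $b_j$ share the same (weak) sign for every $j$, which, after using the margin assumption of the data-generating process (Eq.~\ref{eq:syndgp}) to exclude exact zeros $a_j = 0$ or $b_j = 0$, yields $\phi'(a_j) = \mathbf{1}(a_j > 0) = \mathbf{1}(b_j > 0) = \phi'(b_j)$ for all $j$ and all $x \in \mathcal{D}$.

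The main obstacle I anticipate is the degenerate boundary case: the convexity argument only delivers \emph{weak} sign agreement, so a unit with $a_j = 0 < b_j$ is consistent with zero-loss connectivity yet breaks the strict activation-pattern match. I would close this gap by invoking general position of the data---the continuous margin $\delta$ together with the noise coordinates $\nu_i$ ensure $W^T x$ has no exactly-zero entries for generic minimizers---or, failing that, by stating the conclusion up to a measure-zero set of inputs. A secondary technical point is making the step ``a sum of convex functions is affine $\Rightarrow$ each is affine'' rigorous via the measure-valued second derivative rather than informal derivative-jump bookkeeping, which I would include precisely to handle the case of multiple coincident kinks.
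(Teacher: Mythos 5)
Your proof is correct and reaches the paper's conclusion by a genuinely different mechanism. The paper differentiates the (constant) prediction along the path, obtaining $\phi'(W(t)^{T}x)^{T}(W_{\beta}-W_{\alpha})^{T}x = 0$, evaluates this identity at the single endpoint $t=1$, and combines it with equality of the endpoint outputs to get $(\phi'(W_{\alpha}^{T}x)-\phi'(W_{\beta}^{T}x))^{T}(W_{\alpha}^{T}x)=0$; it then splits the indicator difference into the two disagreement sets $\mathbf{1}_{(\alpha_{+}\,\beta_{-})}$ and $\mathbf{1}_{(\alpha_{-}\,\beta_{+})}$ and argues that a sum of strictly positive terms can equal a sum of non-positive terms only if both vanish. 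You instead keep the whole path and argue at second order: each $\max(0,\ell_j(t))$ is convex in $t$, their sum is constant, so the distributional second derivative---a sum of non-negative Dirac atoms, one per interior sign crossing---must vanish, killing every crossing. Both proofs exploit the same phenomenon (ReLU kinks along a segment cannot cancel), but yours is symmetric in $\alpha,\beta$, handles coincident kinks automatically, and sidesteps the pointwise-differentiability bookkeeping that the paper's use of $\phi'$ along the path quietly requires; the paper's is more elementary (one linear identity plus a sign argument, no measure theory), and because it works at an endpoint it actually excludes the degenerate case $W_{\alpha}^{T}x>0=W_{\beta}^{T}x$, which your curvature argument cannot see. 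Conversely, the boundary case you flag ($a_j=0<b_j$) is a genuine gap in the paper's own proof as well: in that case the right-hand sum $\mathbf{1}_{(\alpha_{-}\,\beta_{+})}^{T}(W_{\alpha}^{T}x)$ vanishes term by term, so the paper's asserted conclusion $\mathbf{1}_{(\alpha_{-}\,\beta_{+})}=\mathbf{0}$ does not strictly follow there either; your explicit genericity or measure-zero caveat is the more careful treatment. One correction to that caveat: the margin $\delta$ in Eq.~\ref{eq:syndgp} constrains the data relative to the ground-truth decision boundaries, not the learned weight vectors, so it alone cannot rule out $W_j^{T}x=0$; general position of the minimizers (or a measure-zero exclusion) is the right thing to invoke.
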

\begin{proof}
    Note that linear mode connectivity is a translation invariance property of the loss in the parameter space. Since we assume interpolating minimizers, this invariance extends to model predictions. Consequently, the derivative of the model prediction along the linear path $\gamma_{W_{\alpha} \to W_{\beta}}(t) = W(t) = W_{\alpha} + t (W_{\beta} - W_{\alpha})$ is zero; that is, $\frac{\partial}{\partial t} f(x; W(t)) = 0$. This implies, 
    \begin{equation}
    \label{eq:paramconserve}
    \frac{\partial}{\partial t}\mathbf{1}^{T} \phi(W(t)^{T} x) = \phi'(W(t)^{T}x)^{T} (W_{\beta} - W_{\alpha})^{T}x = 0.        
    \end{equation}
    Substituting $t=1$ in Eq.~\ref{eq:paramconserve}, we get, 
    \begin{equation}
    \begin{split}
    \phi'(W_{\beta}^{T}x)^{T} (W_{\alpha}^{T}x) &= \phi'(W_{\beta}^{T}x)^{T} (W_{\beta}^{T}x) = \mathbf{1}^{T}(\phi'(W^{T}_{\beta}x) \odot \phi(W_{\beta}^{T}x)) = \mathbf{1}^{T}\phi(W_{\alpha}^{T}x).
    \end{split}
    \end{equation}
    This implies,
    \begin{equation}
    \label{eq:constrainta}
    \begin{split}
        (\phi'(W_{\alpha}^{T}x) - \phi'(W_{\beta}^{T}x))^{T} (W_{\alpha}^{T}x) = 0.
    \end{split}
    \end{equation}
    Next, we define the following vector.
    \begin{equation}
    \begin{split}
    \mathbf{1}_{(\alpha_{+}\, \beta_{-})} = 
        \begin{cases} 
        1, & \,\,\text{if}\,\,\ W_{\alpha}^{T}x > 0 \text{ and } W_{\beta}^{T}x \leq 0; \\
        0, & \text{ otherwise}. \\
        \end{cases}\\
    \end{split}
    \end{equation}
    Define the vector $\mathbf{1}_{(\alpha_{-}\, \beta_{+})}$ in a similar manner. Then, it is easy to see that
    \begin{equation}
    \label{eq:indicators}
    \phi'(W_{\alpha}^{T}x) - \phi'(W_{\beta}^{T}x) = \mathbf{1}_{(\alpha_{+}\, \beta_{-})} - \mathbf{1}_{(\alpha_{-}\, \beta_{+})}.
    \end{equation}
    Substituting the above relationship in Eq.~\ref{eq:constrainta} gives
    \begin{equation}
    \begin{split}
        \mathbf{1}_{(\alpha_{+} \, \beta_{-})}^{T}(W_{\alpha}^{T} x) = \mathbf{1}_{(\alpha_{-}\, \beta_{+})}^{T}(W_{\alpha}^{T} x). \\
    \end{split}
    \end{equation}
    Note that in the above equation, the left-hand side is a sum of positive reals, while the right-hand side is a sum of negative reals. That is, the equality cannot hold unless both are equal to zero for all $x \in \mathcal{D}$. This implies $\mathbf{1}_{(\alpha_{+}\, \beta_{-})} = \mathbf{1}_{(\alpha_{-}\, \beta_{+})} = \mathbf{0}$. That is, there is no neuron in model $f(.; W_{\alpha})$ that is active while the corresponding index neuron in $f(.; W_{\beta})$ is inactive. Consequently, for linear mode connectivity to hold, the neurons at the same index in the two models should activate/inactivate together for any given sample, hence producing the same set of activation patterns. This completes the proof.
\end{proof}

\begin{corollary} 
\label{rmk:wdist}
\textbf{Small Wasserstein-1 distance between activation patterns of two models implies they can be linear mode connected.}
\vspace{-5pt}
\end{corollary}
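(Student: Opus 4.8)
The plan is to establish the (approximate) converse of Lemma~\ref{lem:align}, reusing the ReLU identity that drove its proof. First I would treat the exact case: suppose a permutation $\pi \in \Sigma_N$ makes the two models match in activation patterns on $\mathcal{D}$, i.e.\ $\phi'(W_\alpha^T x) = \phi'(\pi(W_\beta)^T x)$ for every $x \in \mathcal{D}$. Writing $W(t) = (1-t) W_\alpha + t\, \pi(W_\beta)$, the key observation is that for any neuron $j$ whose two endpoint pre-activations share the same sign indicator, the convex combination $W(t)_j^T x$ keeps that indicator for all $t \in [0,1]$; hence $\phi'(W(t)^T x) = \phi'(W_\alpha^T x)$ is constant in $t$. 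Using $\phi(z) = \phi'(z)\odot z$, this yields $\phi(W(t)^T x) = (1-t)\,\phi(W_\alpha^T x) + t\,\phi(\pi(W_\beta)^T x)$, so $f(x;W(t)) = (1-t) f(x;W_\alpha) + t\, f(x;\pi(W_\beta))$. Since $W_\alpha, W_\beta$ are interpolating minimizers, both outputs equal $y$, and the path stays at zero loss: exact matching gives perfect linear mode connectivity.

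Next I would robustify this to the small-Wasserstein regime. For each $x$, partition the neurons into an agreeing set $A(x)$ and a disagreeing set $B(x) = \{ j : \phi'(W_\alpha^T x)_j \neq \phi'(\pi(W_\beta)^T x)_j \}$. The computation above shows neurons in $A(x)$ contribute exactly the linear interpolant, so the deviation of $f(x;W(t))$ from $(1-t)f(x;W_\alpha)+t\,f(x;\pi(W_\beta))$ is confined to $B(x)$ and bounded by $\tfrac{1}{N}\sum_{j\in B(x)} |W(t)_j^T x|$. Under the boundedness built into the data-generating process of Eq.~\ref{eq:dgpsample} together with bounded weight norms, each such term is $O(1)$, so the barrier at $x$ is $O(|B(x)|/N)$. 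Thus the loss increase along the linear path is controlled by the \emph{expected fraction of disagreeing neurons}, $\mathbb{E}_{x\in\mathcal{D}}[\,|B(x)|/N\,]$.

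Finally I would connect this mismatch to the Wasserstein-$1$ distance. Viewing the activation patterns as empirical distributions $\mu_\alpha,\mu_\beta$ on the hypercube $\{0,1\}^N$ under the normalized Hamming metric, the expected Hamming mismatch realized by the optimal coupling is exactly $W_1(\mu_\alpha,\mu_\beta)$; choosing $\pi$ to align neurons according to this coupling makes $\mathbb{E}_x[|B(x)|/N] \le W_1(\mu_\alpha,\mu_\beta)$. Combining the three steps, the barrier along $\gamma_{W_\alpha \to \pi(W_\beta)}$ is $O(W_1(\mu_\alpha,\mu_\beta))$, so a small Wasserstein-$1$ distance forces a small barrier, i.e.\ approximate linear mode connectivity up to permutation.

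The hard part will be the last step: a single permutation $\pi$ is a \emph{fixed} bijection of neurons applied simultaneously to all samples, whereas the optimal transport plan between $\mu_\alpha$ and $\mu_\beta$ may pair different patterns for different $x$. Showing that one permutation approximately realizes this coupling---or, more cleanly, restricting to a regime (such as the linearly separable $K=0$ attributes of Eq.~\ref{eq:syndgp}) where the per-sample activation structure is simple enough that a uniform alignment suffices---is the crux of the argument. A secondary obligation is the uniform control of $|W(t)_j^T x|$ on the disagreeing neurons, for which the zero-mean, variance-$1/D$ normalization of the attributes in the data-generating process is precisely what is needed.
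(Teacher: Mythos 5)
Your first step is correct, and it is exactly the reasoning the paper itself relies on (its Remark~\ref{rmk:permutations}): if a single permutation $\pi$ makes the activation patterns match on every $x \in \mathcal{D}$, then each neuron's sign indicator is constant along the segment, so $f(x; W(t))$ interpolates linearly between two interpolating minimizers and the path stays at zero loss. Be aware, though, that the paper's own justification of this corollary goes no further than that observation plus a heuristic: it defines the Wasserstein-1 distance as the difference of Bernoulli means $\frac{1}{N}\left|\mathbf{1}^{T}\phi'(W_{\alpha}^T x) - \mathbf{1}^{T}\phi'(W_{\beta}^T x)\right|$ (i.e., the per-sample difference in the fraction of active neurons), argues that when this is small one ``can expect'' an aligning permutation to exist, and explicitly calls the quantity a proxy. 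Your notion of $W_1$ --- optimal transport between empirical pattern distributions on $\{0,1\}^N$ under normalized Hamming cost --- is a different quantity, and your proposal attempts a rigor the paper does not claim.

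The genuine gap is in your third step, and it is worse than the ``hard part'' you flag. First, your inequality $\mathbb{E}_{x}[\,|B(x)|/N\,] \le W_1(\mu_\alpha,\mu_\beta)$ runs the wrong way: $W_1$ is an \emph{infimum} over couplings, so it can only \emph{lower}-bound the cost of the particular sample-wise (diagonal) alignment that your barrier bound needs; smallness of $W_1$ gives no upper bound on the disagreement fraction achieved by any fixed permutation. Second, the obstruction you name --- that a transport plan may pair different patterns for different samples while $\pi$ must be one bijection of neurons used for all samples --- is fatal in general, not merely technical. Take $N=2$ neurons and three samples, with model $\alpha$ producing patterns $(1,0), (1,0), (0,1)$ and model $\beta$ producing $(0,1), (1,0), (1,0)$. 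The empirical pattern distributions coincide, so $W_1(\mu_\alpha,\mu_\beta)=0$ (and the paper's Bernoulli-mean distance is also $0$ on every sample), yet neither of the two permutations of neurons aligns the patterns sample-wise: the best achievable expected disagreement fraction is $1/3 > 0$, so by Lemma~\ref{lem:align} exact linear connectivity need not follow. Hence no argument of this shape can convert small $W_1$ into a small barrier without further structural assumptions; in the restricted always-active $K=0$ regime you mention the plan would go through (the patterns are constant, so any permutation matches them), but as stated the reduction from $W_1$ to a fixed-permutation alignment fails.
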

The activation pattern of a model for a given sample is a vector of binary variables. Thus, the difference between two activation patterns $\phi'(W_{\alpha}^T x)$ and $\phi'(W_{\beta}^T x)$ can be computed by simply comparing their means $\frac{1}{N} |\mathbf{1}^{T}\phi'(W_{\alpha}^T x) - \mathbf{1}^{T}\phi'(W_{\beta}^T x)|$, which is in fact the Wasserstein-1 distance between two Bernoulli distributions for which $p=\frac{1}{N} |\mathbf{1}^{T}\phi'(W^T x)|$. This value $p$ can be regarded as the probability a neuron in the model is activated. Correspondingly, when the Wasserstein-1 distance between two activation patterns is low, we can expect that there exists a permutation of neurons that allows the two models to be linear mode connected. The W-1 distance can thus be regarded as a proxy for assessing whether two models can be linear mode connected. Further, we highlight that even though this result is derived for a specific model architecture, it is actually quite general: any two models with zero W-1 distance must be linear mode connectable (up to permutations) because their activation patterns will necessarily be the same.

\begin{remark} 
\label{rmk:permutations}
\textbf{(Lemma \ref{lem:align} highlights why neurons must be permuted for linear mode connectivity.)}
The lemma above shows that if two models produce the same activation patterns, the models are ``effectively linear'' with respect to each other. This enables linear interpolation of the two models without increasing error. We also highlight that if two models produce activation patterns that are a permutation of each other (e.g., this can happen if their initializations were permutations of each other), then un-permuting them will make the models linear mode connected. Thus, Lemma~\ref{lem:align} is inherently a neuronal alignment constraint and can be regarded as a precise condition under which the conjecture by \citet{entezari2021role} holds. Even though the result above was shown for a two-layer model, it is easy to see that a more general statement is true: \textit{if two minimizers induce models that produce the same activation patterns, then there exists a permutation of neurons under which the two models can be linear mode connected.}
\end{remark}


Next, we rephrase the result on simplicity bias of neural networks by \citet{shah2020pitfalls, valle2018deep, kalimeris2019sgd, scimeca2021shortcut} using the notations defined in this paper.
\begin{lemma}
\label{lem:simbias}
\textbf{(Simplicity Bias.)}
    Assume a data-generating process $\mathcal{G}$ produces $n$ perfectly predictive attributes with respective complexities $[K] = \{K_1, K_2, \dots, K_n\}$. Let $m$ be the index of the latent corresponding to the simplest attribute, i.e., $m := \argmin\, [K]$. If $W$ is a minimizer identified using gradient descent on a dataset that contains IID samples retrieved from $\mathcal{G}$, then the corresponding model $f(.; W)$ will be invariant to unit interventions on all but the latents of the simplest predictive attribute, i.e., $\mathcal{I}(W) = \{\mathcal{A}_{i}: i \neq m\}$.
    \vspace{-5pt}
\end{lemma}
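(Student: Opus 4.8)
The plan is to split the statement into two largely independent pieces: a bookkeeping argument that translates the invariance set $\mathcal{I}(W)$ (Def.~\ref{def:mechsim}) into a statement about \emph{which input coordinates the trained model reads}, and an appeal to simplicity bias that pins down \emph{which} coordinate that is. Concretely, I would first show that $\mathcal{I}(W) = \{\mathcal{A}_i : i \neq m\}$ is equivalent to the claim that $f(\,\cdot\,;W)$ forms its prediction using only the coordinate carrying the simplest attribute $T_{K_m}(z_m)$, and then invoke the simplicity-bias results of \citet{shah2020pitfalls, valle2018deep, kalimeris2019sgd, scimeca2021shortcut} to guarantee that gradient descent produces exactly such a model.

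For the first piece I would exploit the structure of the data-generating process in Eq.~\ref{eq:dgpsample}: each predictive attribute $T_{K_i}(z_i)$ occupies a distinct input coordinate, the latents $z_i$ are independent, and a unit intervention $\mathcal{A}_i^{\alpha}$ (Def.~\ref{defn:intervene}) alters only coordinate $i$ of $x$ while, by the definition of the counterfactual process, leaving the original label $y$ untouched. Since $W$ is an interpolating minimizer (zero loss on $\mathcal{D}$), the definition of invariance forces $\mathbb{E}_{\alpha}\,\mathcal{L}(f(\mathcal{E}(\mathcal{D};\mathcal{A}_i^{\alpha});W)) = 0$, i.e. re-sampling coordinate $i$ over $\mathcal{Z}_i$ must never change the (correct) prediction; because every attribute is perfectly predictive, this holds iff $f(\,\cdot\,;W)$ is insensitive to coordinate $i$. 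Thus $\mathcal{A}_i \in \mathcal{I}(W)$ iff the trained model does not read coordinate $i$, and the lemma reduces to showing that gradient descent yields a model that reads coordinate $m$ and no other predictive coordinate $i \neq m$. (For $i=m$ the model does read coordinate $m$, so setting it to a uniform $\alpha$ while keeping $y$ flips the decoded label on a constant fraction of samples, giving positive expected loss and hence $\mathcal{A}_m \notin \mathcal{I}(W)$.)

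For the second piece I would translate the complexity parameter $K_i$ into the spline/neuron language already set up in the paper: inverting $T_{K_i}(z_i)$ requires $K_i$ piece-wise linear splines (see Fig.~\ref{fig:slabs}), each ReLU neuron of $f$ realizes one such spline, and so fitting attribute $i$ costs on the order of $K_i$ active neurons. Simplicity bias then states that gradient descent allocates its nonlinearity to the cheapest available predictive feature, namely the attribute requiring the fewest splines, which is $m = \argmin [K]$. In the cleanest instance $K_m = 0$ the simplest attribute is linearly separable, so a single half-space (one ReLU direction) already interpolates and the implicit margin-maximization / low-norm bias of GD selects that linear solution over any higher-$K$ competitor; the general case follows from the strict ordering $K_m < K_i$ for $i \neq m$. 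Combining the two pieces, the learned solution reads only coordinate $m$, yielding $\mathcal{I}(W) = \{\mathcal{A}_i : i \neq m\}$.

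\textbf{Main obstacle.} The hard part will be the second piece: rigorously certifying \emph{which} interpolating minimizer gradient descent converges to. The invariance bookkeeping is essentially definitional given the factorized data-generating process, but the claim that GD prefers the minimal-complexity attribute is precisely the content of the cited simplicity-bias results, which are in general established only heuristically or under restrictive assumptions (e.g. a particular initialization scale, the NTK regime, or two-feature constructions). I would therefore present the statement as a faithful re-expression of those results in the present notation, and, for a self-contained argument, restrict attention to the tractable regime---perfectly predictive attributes with $K_m = 0$---where linear separability together with the margin/low-norm implicit bias of gradient descent makes ``uses only the simplest attribute'' provable, rather than attempting to certify the inductive bias for arbitrary $K_i$ from scratch.
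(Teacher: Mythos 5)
Your proposal matches the paper's treatment of this statement: the paper gives no proof of Lemma~\ref{lem:simbias} at all, but explicitly introduces it as a rephrasing, in the paper's notation, of the simplicity-bias results of \citet{shah2020pitfalls, valle2018deep, kalimeris2019sgd, scimeca2021shortcut}, backed only by the empirical verification in Tab.~\ref{tab:simbias} --- which is exactly what your second piece does, and your ``main obstacle'' paragraph correctly identifies this deferral to prior work as the unavoidable core. Your first, bookkeeping piece (invariance to $\mathcal{A}_i$ $\Leftrightarrow$ the interpolating model does not read coordinate $i$) is a sensible elaboration of a step the paper leaves implicit, but it does not change the substance of the argument.
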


Thus, even if a dataset contains multiple predictive attributes, minimizers identified using gradient descent induce models that only utilize the simplest attributes for making their predictions.

Now consider a setting where two models make their predictions using different simplest predictive attributes from a dataset containing multiple predictive attributes. Then, if two such models rely on attributes of different complexities, we can be certain they produce different activation patterns.
\begin{lemma}
\label{lem:diffacts}
\textbf{(Disparate Complexity of Mechanisms Disallows Matching in Activations).}
    Consider an IID sampled dataset $\mathcal{D}_{\alpha, \beta}$ from a data-generating process that produces predictive attributes $T_{K_{\alpha}}(.), T_{K_{\beta}}(.)$, where, without loss of generality, $K_{\alpha} > K_{\beta}$. Let $W_{\alpha}$ denote a minimizer of the loss $\mathcal{L}(f(\mathcal{D}_{\alpha, \beta}; W))$ and assume its induced model relies on $T_{K_{\alpha}}(.)$ for making its predictions; similarly define $W_{\beta}$. Then, there exists no permutation $\pi \in \Sigma_{N}$ such that $f(.; W_{\alpha})$ and $f(.; \pi(W_{\beta}))$ are matching in activation patterns on $\mathcal{D}_{\alpha, \beta}$.
    \vspace{-5pt}
\end{lemma}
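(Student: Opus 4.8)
The plan is to argue by contradiction, comparing the number of \emph{distinct} activation patterns the two models realize along a single counterfactual direction---the one that sweeps the $\alpha$-attribute while holding everything else fixed. Suppose a permutation $\pi \in \Sigma_{N}$ makes $f(.; W_{\alpha})$ and $f(.; \pi(W_{\beta}))$ match in activation patterns, i.e.\ $\phi'(W_{\alpha}^{T}x) = \phi'(\pi(W_{\beta})^{T}x)$ for all $x$. Since a permutation only relabels neurons, it is a bijection on activation-pattern vectors; hence along any one-parameter family of inputs the number of distinct patterns realized by $\pi(W_{\beta})$ equals that realized by $W_{\beta}$. The strategy is to show this count is exactly $1$ for $W_{\beta}$ but at least $K_{\alpha} \geq 2$ for $W_{\alpha}$, which contradicts matching.

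First I would pin down the direction. Using Definition~\ref{defn:intervene}, I take a base sample and sweep the $\alpha$-latent $z_{\alpha}$ across all its values via unit interventions, producing a family of counterfactual inputs that differ only in the coordinate $x_{a} = T_{K_{\alpha}}(z_{\alpha})$ and whose $x_{a}$ values populate all $K_{\alpha}$ slabs (Fig.~\ref{fig:slabs}). These inputs lie in the support that the IID dataset $\mathcal{D}_{\alpha,\beta}$ densely covers, so the pointwise matching hypothesis transfers to them.

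For $W_{\beta}$, I would use that it relies on $T_{K_{\beta}}$ and is therefore invariant to $\alpha$-interventions (the reliance assumption, and under zero loss invariance means the output is unchanged): $f(x; W_{\beta})$ is constant along the $\alpha$-direction. I then promote this \emph{function}-invariance to \emph{pattern}-invariance. On each linear region the directional derivative is $\tfrac{1}{N}\sum_{j\,\text{active}} W_{\beta,j,a}$; whenever a neuron flips as $x_{a}$ crosses its boundary, this slope jumps by $\pm\tfrac{1}{N} W_{\beta,j,a}$, so constancy of the output forces $W_{\beta,j,a}=0$ for every neuron whose boundary is crossed. Generically (distinct boundary crossings, which holds for SGD solutions on continuously distributed data) no cancellation among simultaneous flips is possible, so no such neuron flips at all: the pattern $\phi'(W_{\beta}^{T}x)$---and hence $\phi'(\pi(W_{\beta})^{T}x)$---is constant along the $\alpha$-direction, giving exactly one distinct pattern.

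For $W_{\alpha}$, I would use that it relies on $T_{K_{\alpha}}$ with zero loss, so its restriction to the $x_{a}$-axis inverts $T_{K_{\alpha}}$; by the very characterization of complexity used to define the process (inverting $T_{K_{\alpha}}$ requires $K_{\alpha}$ piece-wise linear splines, Fig.~\ref{fig:slabs}), this restriction is a continuous piece-wise linear function realizing the alternating labels across the $K_{\alpha}$ slabs, forcing at least $K_{\alpha}-1$ breakpoints and hence at least $K_{\alpha}$ distinct activation patterns on the swept counterfactual points. Since $K_{\alpha} > K_{\beta} \geq 0$ and both are even, $K_{\alpha} \geq 2$, so $W_{\alpha}$ realizes $\geq 2$ distinct patterns along the same direction on which $\pi(W_{\beta})$ realizes only one---the contradiction. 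I expect the main obstacle to be the third step, the passage from output-invariance to activation-pattern-invariance for $W_{\beta}$: ruling out several neurons flipping simultaneously and cancelling in the slope sum is what forces the genericity argument, and legitimately transferring "matching on $\mathcal{D}_{\alpha,\beta}$" to the off-dataset counterfactual sweep requires the density-of-samples remark. The lower-bound step for $W_{\alpha}$ is comparatively routine once the spline characterization of $K_{\alpha}$ is invoked.
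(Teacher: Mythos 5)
Your route is genuinely different from the paper's: the paper stays entirely on the data distribution and compares per-neuron activation \emph{frequencies} (a neuron of $W_{\alpha}$ implementing one of the $2K_{\alpha}$ spline pieces fires on an IID sample with probability $\nicefrac{1}{2K_{\alpha}}$, while its $\pi$-matched partner in $W_{\beta}$ fires with probability $\nicefrac{1}{2K_{\beta}}$, so some sample must witness a disagreement), whereas you count distinct activation patterns along a counterfactual sweep of the $\alpha$-attribute. Unfortunately, your argument has a gap that the paper's on-support argument is specifically built to avoid: the matching hypothesis does not transfer to your sweep. In $\mathcal{D}_{\alpha, \beta}$ both attributes are \emph{perfectly predictive}, i.e., every sample satisfies $z_{\alpha} = z_{\beta} = y$, and the support of the generative process is exactly this correlated set. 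Your sweep holds $x_{b}$ (and all other coordinates) fixed and moves $x_{a}$ through all slabs of $T_{K_{\alpha}}$, so the points whose $\alpha$-slab parity encodes $1-y$ while the $\beta$-attribute still encodes $y$ are anti-correlated counterfactuals that the data-generating process never produces, no matter how many IID samples are drawn; ``density'' cannot reach them. Hence the hypothesis $\phi'(W_{\alpha}^{T}x) = \phi'(\pi(W_{\beta})^{T}x)$ for all $x \in \mathcal{D}_{\alpha,\beta}$ tells you nothing at precisely the points where your two pattern counts are supposed to disagree. Nor can you retreat to the on-support portion of the sweep (the slabs whose parity agrees with $y$): there both interpolating models output the constant value $y$, your lower bound of $K_{\alpha}$ patterns for $W_{\alpha}$ evaporates, and no contradiction arises.

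The step you yourself flagged---promoting output-invariance of $W_{\beta}$ to pattern-invariance---is also not closed by the genericity appeal you give. Invariance yields constancy of $f(\cdot; W_{\beta})$ only on the union of slabs, which is a \emph{disconnected} set, while the slope-jump argument needs constancy on a connected interval. Neurons are free to flip at distinct points inside the gaps between slabs, where no constraint on the output holds, provided the output merely returns to the same value on the next slab; this is generic behavior, not a degenerate coincidence, so ``distinct boundary crossings'' does not exclude it, and $W_{\beta}$ may realize different patterns on different slabs. (This particular hole happens to be patchable for this architecture: because every output weight equals $+\nicefrac{1}{N}$, the map $x \mapsto f(x; W)$ is convex, so equality of $f$ on two slabs with a common value forces constancy on the segment joining them, after which your slope-jump argument applies on the connected hull. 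But that convexity argument is one you did not make, and it does nothing to repair the first, off-support gap, which is fatal to the proof as structured.)
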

\begin{proof}
    The claim follows via contradiction. Assume a permutation $\pi$ exists such that the two models are matching in activation patterns on $\mathcal{D}_{\alpha, \beta}$. Denote the weights of the $i^{\text{th}}$ neuron in $W_{\alpha}$ via $W_{\alpha}^{i}$. Then $W_{\alpha}^{i}, \pi(W_{\beta})^{i}$ are the weights of the neurons matched via $\pi$. Since using the attribute $T_{K_{\alpha}}(.)$ for predicting the label corresponds to inference of $2 K_{\alpha}$ piece-wise spline functions, the probability the $i^{\text{th}}$ neuron with weights $W_{\alpha}^{i}$ will be activated for an IID sampled input $x$ from the data-generating process is $\frac{1}{2K_{\alpha}}$. However, since $K_{\alpha} \neq K_{\beta}$, the neuron with weights $\pi(W_{\beta})^{i}$ does not activate with the same probability. That is, there exist samples for which $W_{\alpha}^{i}$ is activated, but $\pi(W_{\beta})^{i}$ is not. This contradicts our assumption that there exists a permutation that allows matching in activation patterns for the two models.
\end{proof}

\setlength{\tabcolsep}{4.5pt}
\begin{table}
    \caption{\label{tab:simbias}\textbf{Illustrating Simplicity Bias.} We train models on a dataset with predictive attributes of complexities 0 and 4. Column titles indicate which attributes were allowed to remain predictive during training, i.e., were not randomized via interventions: e.g., $K_{1}=0$ implies only the linearly separable attribute is predictive in the training data. Rows report difference in loss on a test dataset $\mathcal{D}_{K_1}$ which contains attributes of complexity $K_1$ and another test dataset $\mathcal{D}_{K_2}$ which contains attributes of complexity $K_2$. Results are computed up to 4 digits of precision and averaged over 3 seeds. We see models trained on data with both predictive attributes behave similarly to models trained on $K=0$ attribute only; that is, they are invariant to the more complex attribute for which $K=4$.}
\centering
\begin{tabular}{@{}c|cc|cc|cc@{}}
    \toprule
    Complexity of Train Attribute   & \multicolumn{2}{c|}{$K_{1} = 0$} & \multicolumn{2}{c|}{$K_{1} = 4$} & \multicolumn{2}{c}{$K_{1} = 0, 4$} \\ \midrule
    Complexity of Test Attribute    & \multicolumn{1}{c|}{$K_{2} = 0$} & $K_{2} = 4$     & \multicolumn{1}{c|}{$K_{2} = 0$} & $K_{2} = 4$   & \multicolumn{1}{c|}{$K_{2} = 0$} & $K_{2} = 4$     \\ \midrule
    $|\mathcal{L}(f(\mathcal{D}_{K_1}; W)) - \mathcal{L}(f(\mathcal{D}_{K_2}; W))|$ & \multicolumn{1}{c|}{0.0} & 22.79 & \multicolumn{1}{c|}{26.31} & 0.0 & \multicolumn{1}{c|}{0.0} & 18.84 \\ \bottomrule
    \end{tabular}
\vspace{-8pt}
\end{table}

Combining the results above, we have the following theorem.
\begin{theorem}
\label{thm:preclude}
\textbf{(Disparity in Simplest Attributes Precludes Matching).}
    Consider a dataset $\mathcal{D}$ that contains multiple predictive attributes. Assume two minimizers of the loss $\mathcal{L}(f(\mathcal{D}; W))$ induce mechanistically dissimilar models that identify attributes of different complexity to make their predictions. Then, their exists no permutations of neurons for which the models exhibit linear mode connectivity.
    \vspace{-5pt}
\end{theorem}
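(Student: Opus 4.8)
The plan is to prove Theorem~\ref{thm:preclude} by contradiction, chaining together the two structural lemmas already established: Lemma~\ref{lem:align}, which converts linear mode connectivity into a constraint on activation patterns, and Lemma~\ref{lem:diffacts}, which rules out any permutation reconciling the activation patterns of two models that rely on attributes of different complexity. Concretely, let $W_\alpha$ and $W_\beta$ be the two minimizers whose induced models use the perfectly predictive attributes $T_{K_\alpha}(\cdot)$ and $T_{K_\beta}(\cdot)$ with $K_\alpha \neq K_\beta$ (say $K_\alpha > K_\beta$ without loss of generality, as in Lemma~\ref{lem:diffacts}). Suppose toward contradiction that the two models are linear mode connected up to architectural symmetries, i.e., that there is a permutation $\pi \in \Sigma_{N}$ for which $W_\alpha$ and $\pi(W_\beta)$ are linear mode connected on $\mathcal{D}$ in the sense of Definition~\ref{def:mode connectivity}.

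First I would verify that $\pi(W_\beta)$ is itself a minimizer, so that Lemma~\ref{lem:align} applies to the pair $(W_\alpha, \pi(W_\beta))$. This holds because the readout of $f(\cdot; W) = \tfrac{1}{N}\mathbf{1}^{T}\phi(W^{T}x)$ is the symmetric all-ones vector, so permuting hidden neurons leaves the functional output, and hence the loss, unchanged; $\pi(W_\beta)$ is therefore also an interpolating minimizer. Next, applying Lemma~\ref{lem:align} to $(W_\alpha, \pi(W_\beta))$ yields $\phi'(W_\alpha^{T}x) = \phi'(\pi(W_\beta)^{T}x)$ for every $x \in \mathcal{D}$; that is, $f(\cdot; W_\alpha)$ and $f(\cdot; W_\beta)$ are matching in activation patterns through exactly this permutation $\pi$. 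But the hypothesis that the two minimizers rely on attributes of different complexity is precisely the setting of Lemma~\ref{lem:diffacts}, which asserts that no permutation in $\Sigma_{N}$ can make the two models match in activation patterns on $\mathcal{D}$. The existence of $\pi$ contradicts this, and the theorem follows. Note that Lemma~\ref{lem:simbias} is not needed inside the argument, since the hypothesis already stipulates that each minimizer uses a distinct single attribute; it serves only to justify that gradient descent actually produces such minimizers in the first place.

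The step doing the real work is Lemma~\ref{lem:diffacts}, not the theorem, so the main difficulty is ensuring the reduction above is tight rather than overcoming a fresh computation. Two points deserve care. The first is the ``up to symmetries'' clause: I must make sure that allowing permutations of $W_\beta$ (and, by the same argument, of $W_\alpha$) is fully absorbed by the single permutation $\pi$ appearing in Lemma~\ref{lem:diffacts}, so that no additional symmetry transformation could rescue connectivity; this is exactly why attention is restricted to permutation symmetry, as argued in App.~\ref{app:othersyms}. The second is that Lemma~\ref{lem:align} delivers matching on \emph{all} $x \in \mathcal{D}$, whereas Lemma~\ref{lem:diffacts} derives its contradiction from the mismatch in per-neuron activation probabilities ($\tfrac{1}{2K_\alpha}$ versus $\tfrac{1}{2K_\beta}$) over IID draws from $\mathcal{G}$. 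I would therefore note that a genuine probability gap guarantees inputs in $\mathcal{D}$ on which the two activation indicators disagree, so the exact matching demanded by Lemma~\ref{lem:align} cannot hold. This inherited analytic content is where the assumptions on the data-generating process and overparameterization are actually consumed.
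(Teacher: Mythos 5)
Your proposal takes essentially the same route as the paper's proof: it chains Lemma~\ref{lem:align} (linear mode connectivity up to permutation forces matching activation patterns) with Lemma~\ref{lem:diffacts} (reliance on attributes of different complexity precludes any permutation achieving such matching) to reach a contradiction. The only differences are cosmetic and sound: the paper additionally cites Lemma~\ref{lem:simbias} in its proof, but, as you correctly observe, that lemma only justifies that the theorem's hypothesis (each minimizer relying on a single simplest attribute) is actually produced by gradient descent, and your explicit check that $\pi(W_\beta)$ remains an interpolating minimizer is a valid detail the paper leaves implicit.
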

\begin{proof}
    The result follows directly from the application of Lemmas~\ref{lem:align},~\ref{lem:simbias},~\ref{lem:diffacts}. Specifically, Lemma~\ref{lem:align} shows matching in activation patterns is required for two models to exhibit linear mode connectivity (up to permutations). Lemma~\ref{lem:simbias} shows one need only analyze mechanistic dissimilarity with respect to the simplest attributes to compare the activation patterns between two models. Lemma~\ref{lem:diffacts} shows if two models use attributes of different complexity to make their predictions, they cannot match in activation patterns. 
\end{proof}

Let us now revisit Conjecture~\ref{claim:lmc} for our simplified setup. In Theorem~\ref{thm:preclude}, we have shown models with dissimilar mechanisms of different complexity must also produce different activation patterns. Correspondingly, via Lemma~\ref{lem:align}, we have these models cannot be linear mode connected. This verifies our claim for the simplified setup for a 1-hidden layer model if the learned mechanisms are of different complexity. However, it remains possible that two mechanistically dissimilar models learn mechanisms to identify attributes that are \textit{different}, but have the \textit{same complexity}, producing similar activation patterns and hence exhibiting linear connectivity. This possibility, though viable in theory, is practically not feasible. For example, if a dataset $\mathcal{D}_1$ contains multiple attributes of similar complexity that allow minimizers retrieved from $\mathcal{D}_2$ to perform well on it, then given SGD (and related algorithms) force neural networks to converge to max-margin solutions, we see that minimizers retrieved via training on $\mathcal{D}_1$ will have already learned mechanisms to identify \textit{all attributes of same complexity}~\citep{soudry2018implicit, lyu2019gradient, gunasekar2018implicit, nacson2019convergence}. In that case, use of $\mathcal{D}_2$ to create a minimizer that learns a different mechanism is practically moot, since we will already learn the relevant mechanism from $\mathcal{D}_1$ itself. Combined with Theorem 1, this argument completes the result.

\subsection{Empirical Verification}

\begin{figure}
\centering
\begin{subfigure}[b]{0.49\textwidth}
  \centering
  \centerline{\includegraphics[width=\columnwidth]{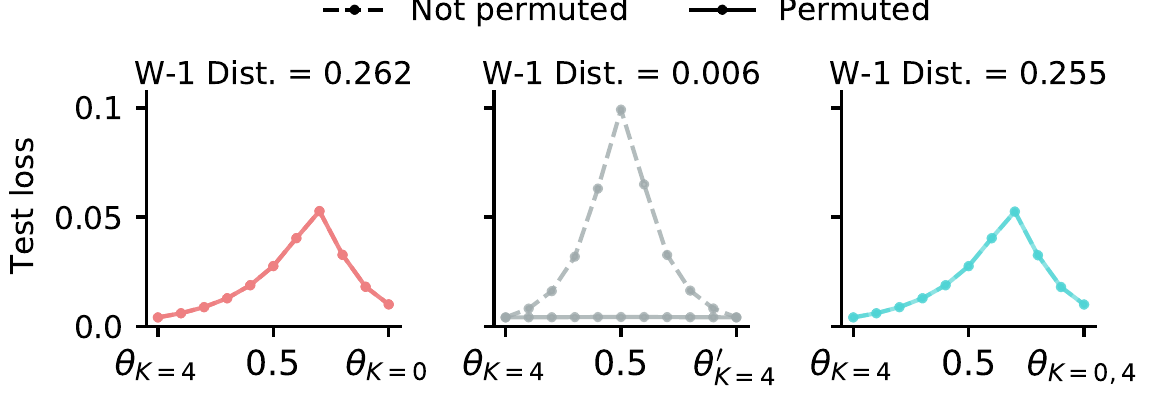}}
  \caption{}
\end{subfigure}%
\begin{subfigure}[b]{0.49\textwidth}
  \centering
  \centerline{\includegraphics[width=\columnwidth]{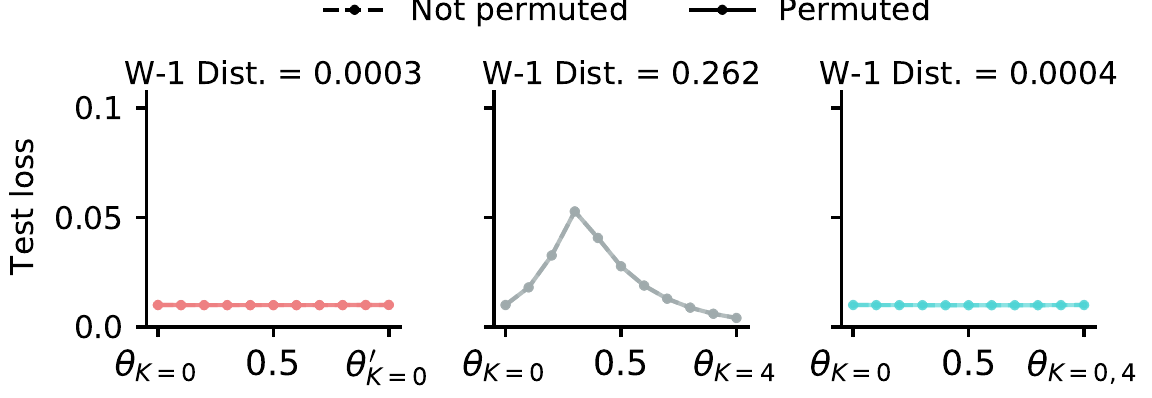}}
  \caption{}
\end{subfigure}
\vspace{-1mm}
\caption{\label{fig:LMCverify}\textbf{Lack of Linear Connectivity implies Mechanistic Dissimilarity.} Plot titles denote Wasserstein-1 distance between the two models whose linear mode connectivity is being assessed. We see that models which have learned mechanisms to identify attributes of different complexity have a large Wasserstein-1 distance between their activation patterns; consequently, they cannot be linear mode connected, even after permutation of neurons. Meanwhile, models reliant on the same mechanisms have a small Wasserstein distance and can indeed be linear mode connected. For example, $\theta_{K=0}$ and $\theta_{K=0,4}$ learn the same mechanisms due to simplicity bias and can be linearly connected (see Tab.~\ref{tab:simbias}), but they do not exhibit linear connectivity with $\theta_{K=4}$; meanwhile, $\theta_{K=4}$ and $\theta'_{K=4}$ can be linearly connected. Note however the latter case of more complex, i.e., $K=4$ attribute required permutations to match the neurons for linear connectivity, while the former case of linearly separable attribute did not. This behavior emerges due to the fact that all neurons learn to be \textit{always active} for the $K=0$ predictive attribute--see \citet{soudry2018implicit, shah2020pitfalls} for proof.} 
\end{figure}

\textbf{Illustrating Simplicity Bias:} See Tab.~\ref{tab:simbias}. Specifically, we train models using SGD for our assumed $f(.;W)$ architecture, using 512 neurons in the hidden layer. We sample 50000, 128-dimensional inputs from the data-generating process discussed in Eq.~\ref{eq:dgpsample} with $K=0$ and  $K=4$ complex predictive attributes present in the dataset. We analyze three training scenarios: (i) when only $K=0$ attribute is allowed to be predictive and the $K=4$ attribute is randomized via interventions; (ii) when only $K=4$ attribute is allowed to be predictive and the $K=0$ attribute is randomized via interventions; and (iii) when both attributes are allowed to be predictive. Evaluation involves assessing invariance of the trained model to the two predictive attributes by computing loss on a test dataset that contains both predictive attributes and a counterfactual variant of the dataset for which either (a) $K=0$ or (b) $K=4$ complexity attributes have been randomized via interventions. If loss remains the same, the model is invariant to interventions on the attribute of that complexity, thus implying the model has not learned a mechanism to identify that attribute. Results are shown in Tab.~\ref{tab:simbias}. We see intervening on the $K=0$ attribute yields an increase in loss in scenarios (i), (iii), indicating those models have learned a mechanism to identify that attribute. Meanwhile, intervening on the $K=4$ attribute yields an increase in loss only in scenario (ii), indicating the models trained from the other two scenarios are invariant to the $K=4$ complex attribute. While this is expected for scenario (i), the fact that this behavior emerges for the scenario (iii), where both predictive attributes can be used for training, is a consequence of simplicity bias of SGD.

\textbf{Lack of Linear Connectivity implies Mechanistic Dissimilarity.} See Fig.~\ref{fig:LMCverify}. We train models on 50000 samples drawn from the data-generating process discussed in Eq.~\ref{eq:syndgp}, allowing two predictive attributes of complexity $K=0$ and $K=4$. Models are also trained on counterfactual variants of this dataset, where one of the predictive attributes has been randomized via interventions. For example, $\theta_{K=0, 4}$ denotes a minimizer trained on data with both attributes, while $\theta_{K=0}$ denotes a minimizer identifed via training on the counterfactual dataset that contains only the $K=0$ predictive attribute; note $\theta'_{0}$ denotes use of a different initialization seed. Subsequently, we assess linear mode connectivity (before and after permutation) of models by using an evaluation dataset of 10000 samples similar to the base training dataset, i.e., both $K=0$ and $K=4$ predictive attributes are allowed. We see that models which have learned mechanisms to identify attributes of different complexity have a large Wasserstein-1 distance between their activation patterns; consequently, they cannot be linear mode connected, even after permutation of neurons. Meanwhile, models reliant on the same mechanisms have a small Wasserstein distance and can indeed be linear mode connected. For example, $\theta_{K=0}$ and $\theta_{K=0,4}$ learn the same mechanisms due to simplicity bias and can be linearly connected (see Tab.~\ref{tab:simbias}), but they do not exhibit linear connectivity with $\theta_{K=4}$; meanwhile, $\theta_{K=4}$ and $\theta'_{K=4}$ can be linearly connected. Note however the latter case of more complex, i.e., $K=4$ attribute required permutations to match the neurons for linear connectivity, while the former case of linearly separable attribute did not. This behavior emerges due to the fact that all neurons learn to be \textit{always active} for the $K=0$ predictive attribute--see \citet{soudry2018implicit, shah2020pitfalls} for proof.

\section{Further Results: Non-Linear Connectivity of Mechanistically Dissimilar Minimizers}
\label{app:smc_results}
We train VGG-13 and ResNet-18 models on our synthetic CIFAR-10 / CIFAR-100 / Dominoes datasets with cues (see Figs.~\ref{fig:c10viz}, \ref{fig:c100viz}, and \ref{fig:dominoesviz}) and the original datasets themselves. Parameters of the corresponding models are denoted $\theta_{\text{C}}$ and $\theta_{\text{NC}}$. We identify connectivity paths along pairs of parameters, specifically evaluating quadratic paths identified using data without cues (denoted Quadratic w/o Cues), 
quadratic path identified using data with cue (denoted Quadratic w/ Cue), linear path (denoted Linear), and linear path after permuting $\theta_{\text{C}}$ to maximally match $\theta_{\text{NC}}$'s activations (denoted Linear Permuted). In the following, plot titles denote evaluation dataset, including datasets where either the cue is present (denoted w/ Cue), absent (denoted w/o Cue), randomized (denoted Rand.\ Cue), or the underlying image is randomized but the cue remains the same (denoted Rand.\ Image). Line colors denote the proportion of dataset that has synthetic cues. 

Our results show the minimzier $\theta_{\text{NC}}$ yields the same performance upon randomization of the cue, while the performance of $\theta_{\text{C}}$ degrades substantially--i.e., the two modes are mechanistically dissimilar due to lack of shared invariances (see Def.~\ref{def:mechsim}). Nonetheless, we can identify quadratic (but not linear) paths that mode-connect these mechanistically dissimilar minimizers, hence corroborating Prop.~\ref{claim:all_mimima_connect} across several datasets and model architectures, showing \textit{mechanistically dissimilar modes can also be mode connected via relatively simple paths as well}. However, different points on the connectivity paths respond differently to counterfactuals, indicating \textit{lack of mechanistic connectivity}.

\begin{figure}[H]
\centering
\begin{subfigure}[b]{\textwidth}
  \centering
  \centerline{\includegraphics[width=0.9\columnwidth]{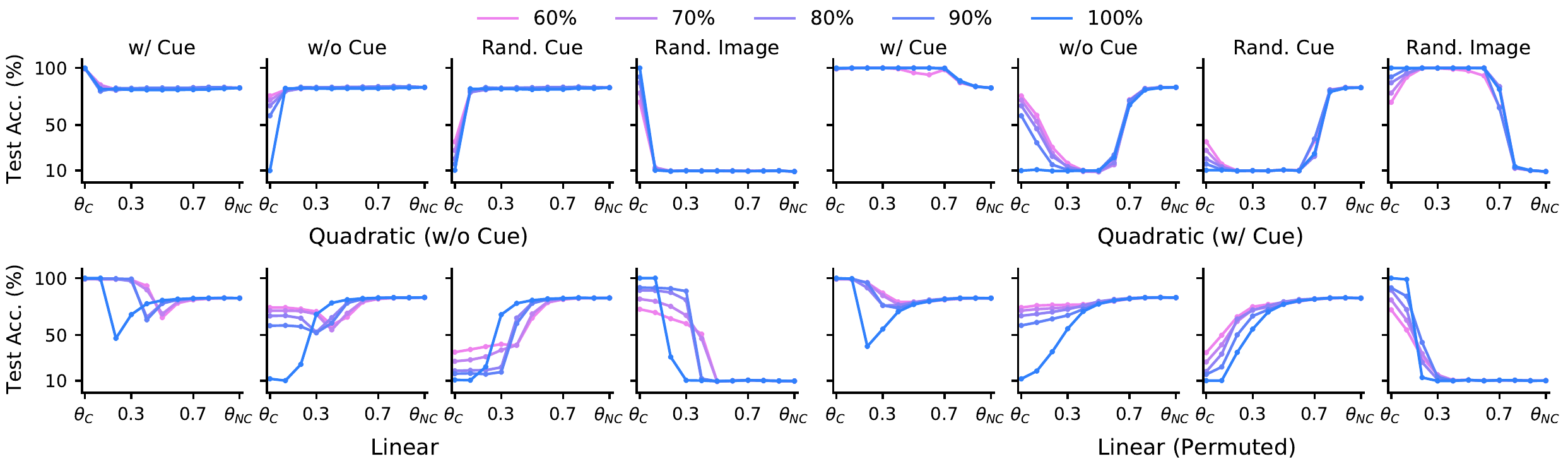}}
  \caption{Test Accuracy.}
\end{subfigure}
\begin{subfigure}[b]{\textwidth}
  \centering
  \centerline{\includegraphics[width=0.9\columnwidth]{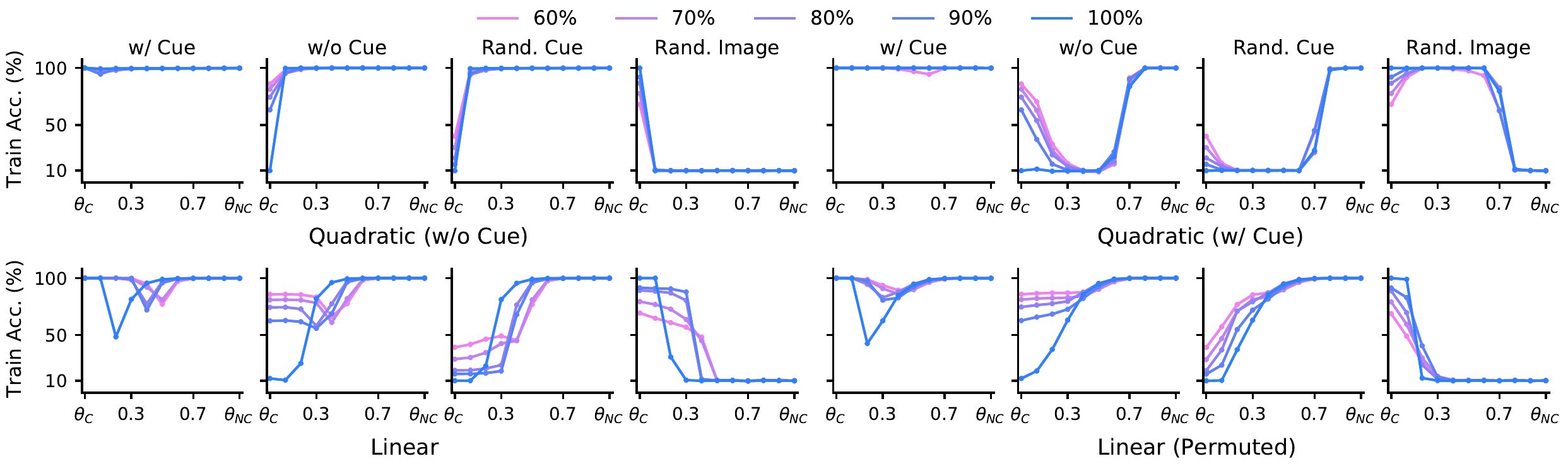}}
  \caption{Train Accuracy.}
\end{subfigure}
\vspace{-1mm}
\begin{subfigure}[b]{\textwidth}
  \centering
  \centerline{\includegraphics[width=0.8\columnwidth]{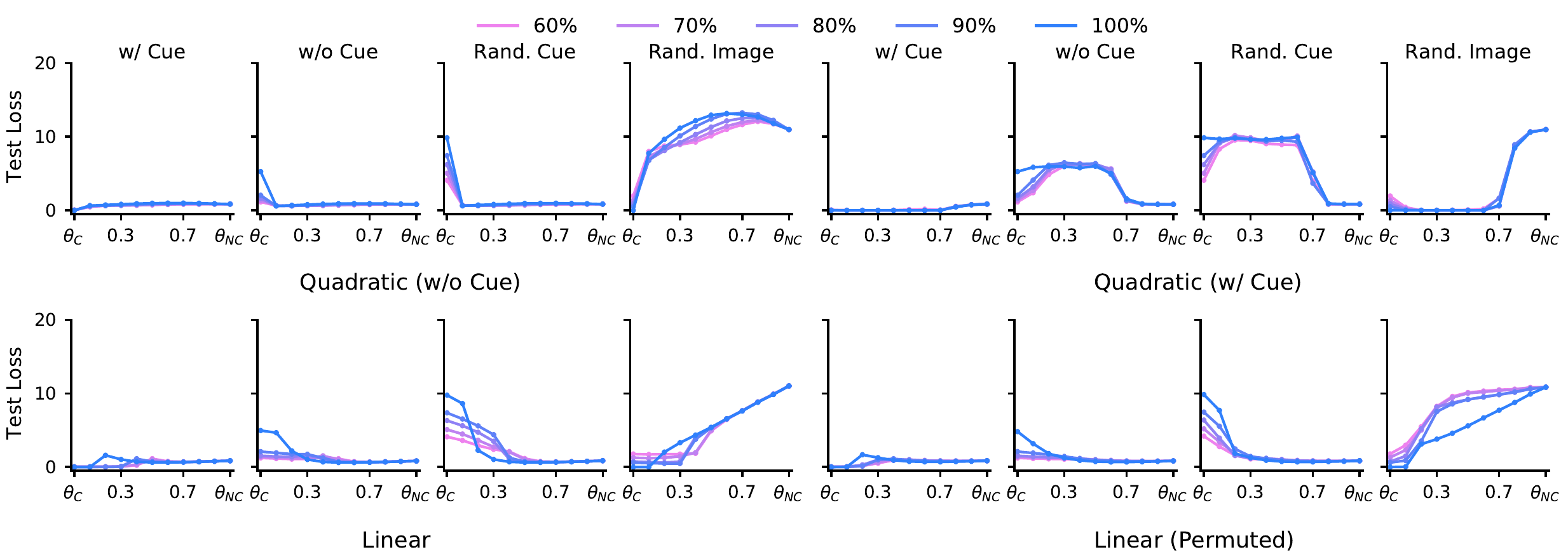}}
  \caption{Test Loss.}
\end{subfigure}
\begin{subfigure}[b]{\textwidth}
  \centering
  \centerline{\includegraphics[width=0.8\columnwidth]{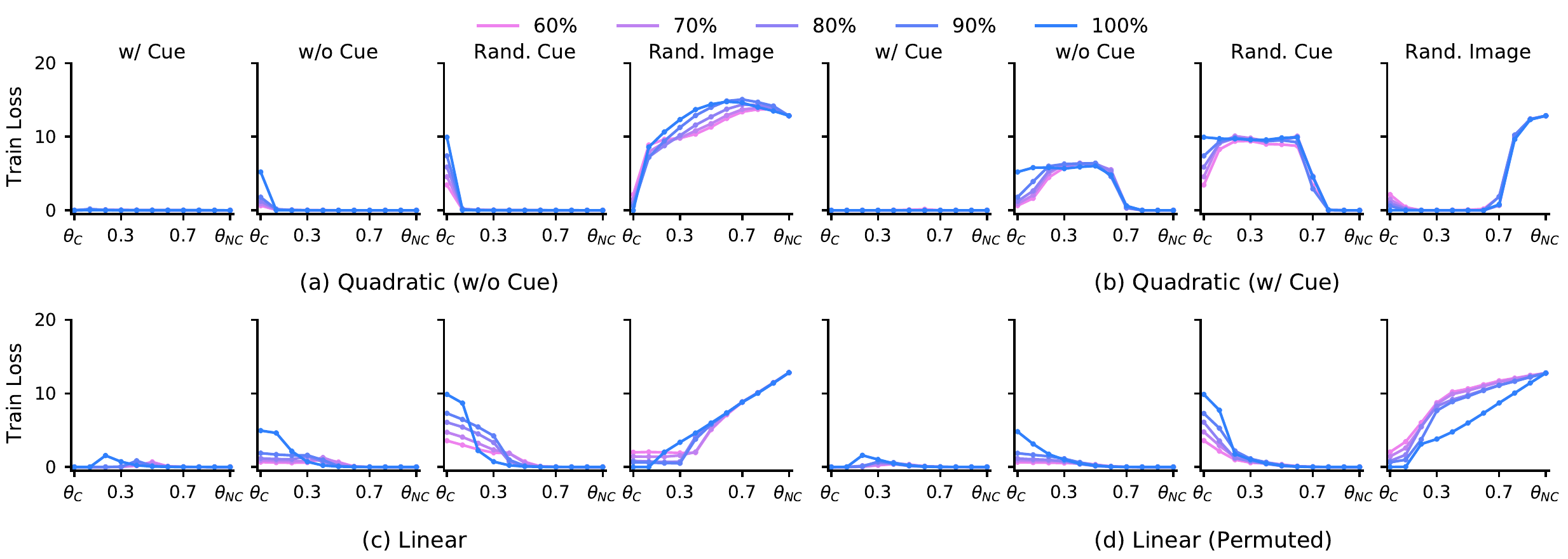}}
  \caption{Train Loss.}
\end{subfigure}
\vspace{-1mm}
\caption{\label{fig:c10_vgg}\textbf{VGG-13 on CIFAR-10 with Box Cue}. We plot test/train accuracy/loss curves along different connectivity paths and see thorough corroboration of our claims in the main text: Mechanistically dissimilar minimizers can be connected via nonlinear paths on a given dataset, but behave different on counterfactuals, indicating lack of mechanistic connectivity.}
\vspace{-10pt}
\end{figure}

\begin{figure}[H]
\centering
\begin{subfigure}[b]{\textwidth}
  \centering
  \centerline{\includegraphics[width=0.9\columnwidth]{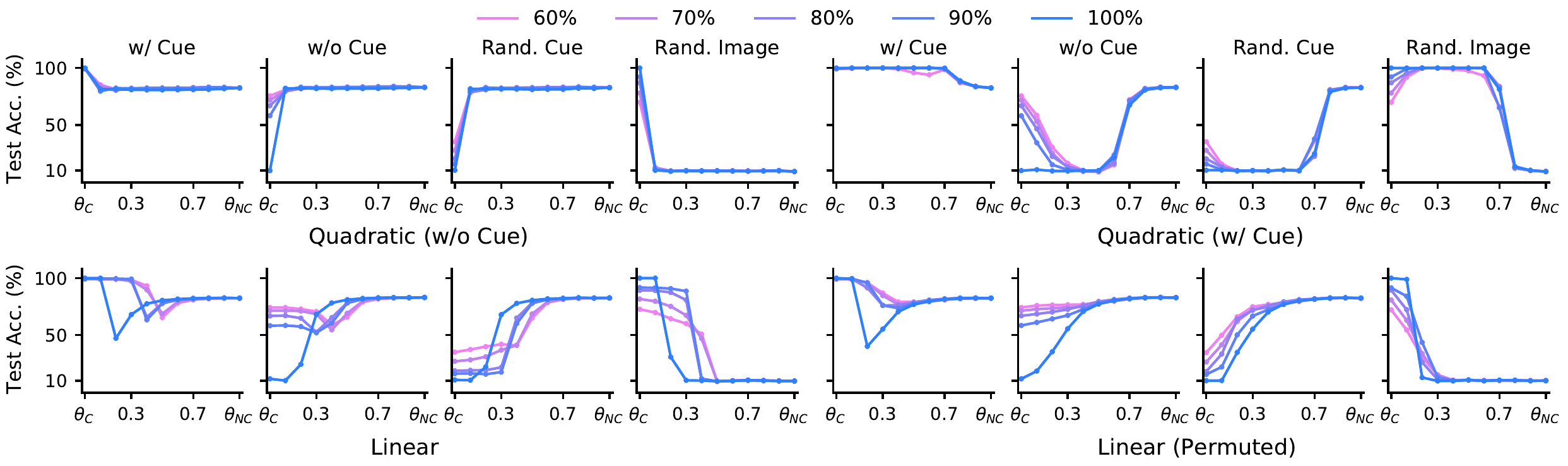}}
  \caption{Test Accuracy.}
\end{subfigure}
\begin{subfigure}[b]{\textwidth}
  \centering
  \centerline{\includegraphics[width=0.9\columnwidth]{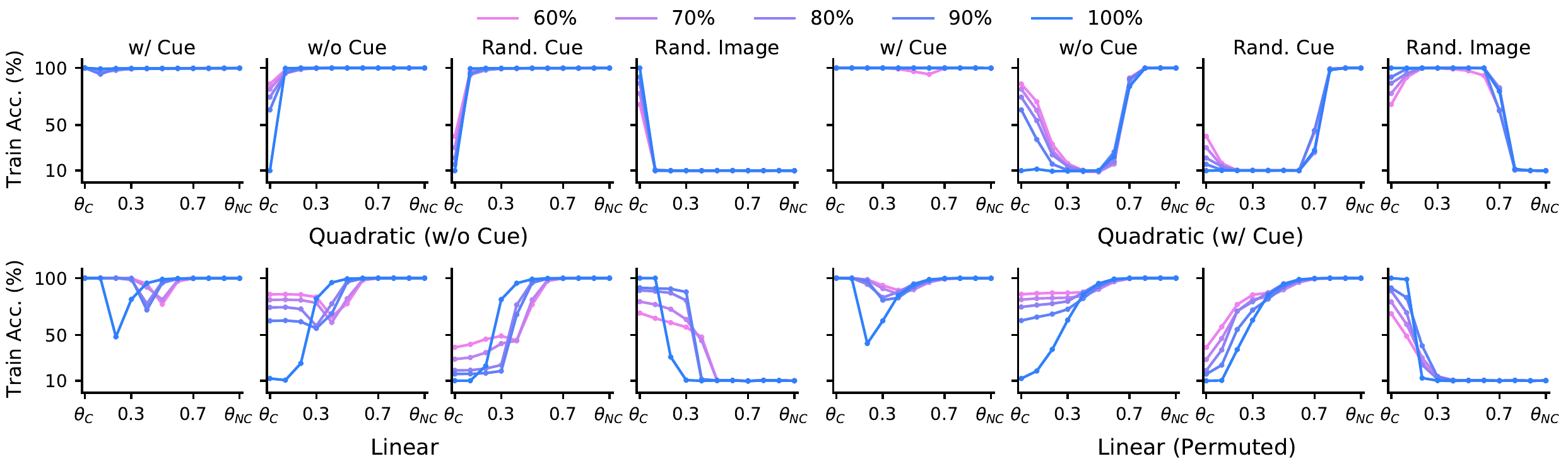}}
  \caption{Train Accuracy.}
\end{subfigure}
\vspace{-1mm}
\begin{subfigure}[b]{\textwidth}
  \centering
  \centerline{\includegraphics[width=0.8\columnwidth]{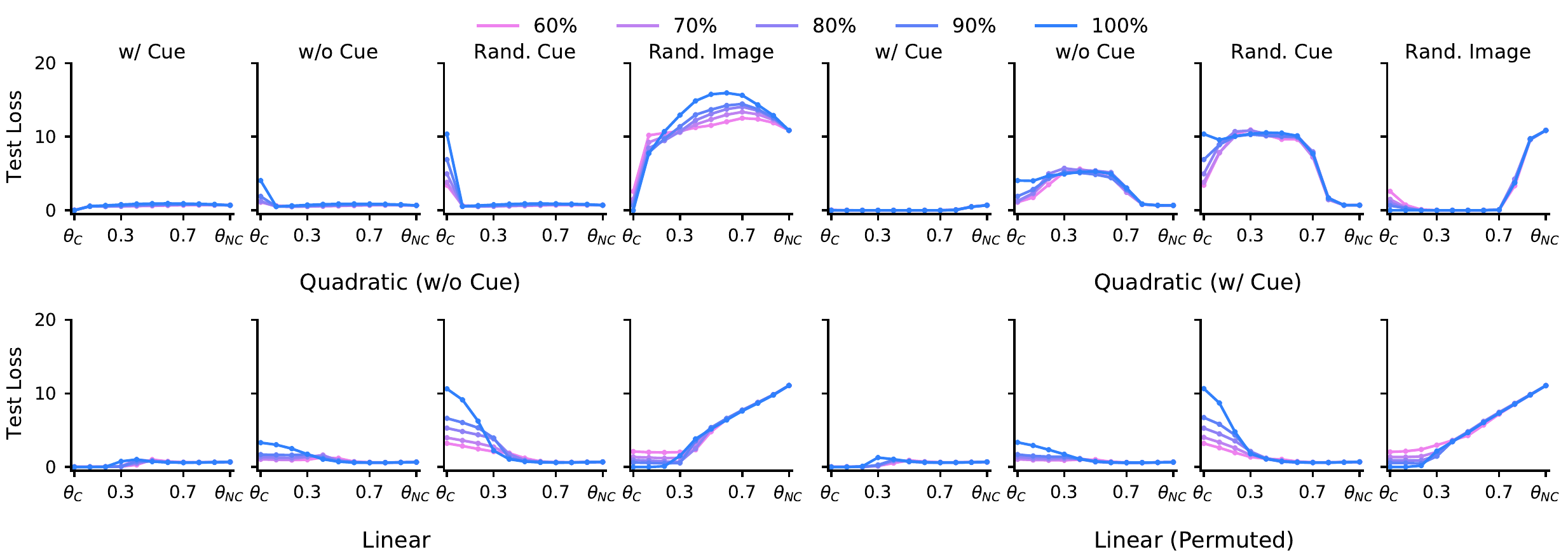}}
  \caption{Test Loss.}
\end{subfigure}
\begin{subfigure}[b]{\textwidth}
  \centering
  \centerline{\includegraphics[width=0.8\columnwidth]{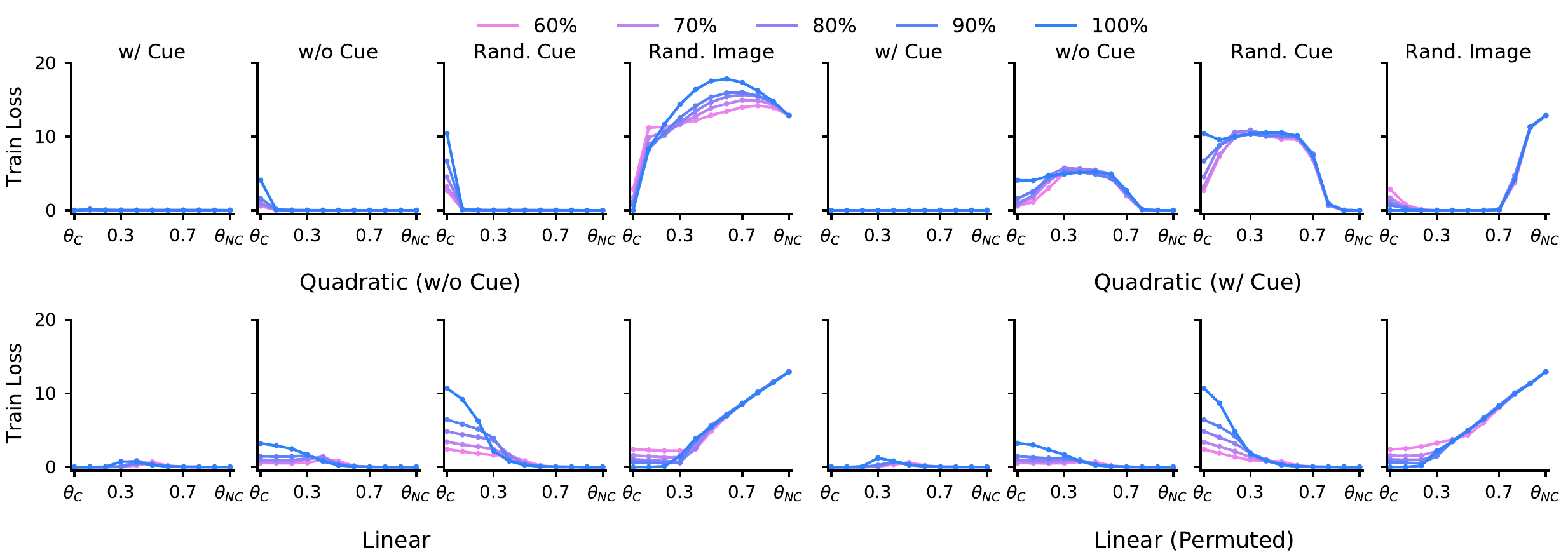}}
  \caption{Train Loss.}
\end{subfigure}
\vspace{-1mm}
\caption{\label{fig:c10_res18}\textbf{ResNet-18 on CIFAR-10 with Box Cue}. We plot test/train accuracy/loss curves along different connectivity paths and see thorough corroboration of our claims in the main text: Mechanistically dissimilar minimizers can be connected via nonlinear paths on a given dataset, but behave different on counterfactuals, indicating lack of mechanistic connectivity.}
\vspace{-10pt}
\end{figure}

\begin{figure}[H]
\centering
\begin{subfigure}[b]{\textwidth}
  \centering
  \centerline{\includegraphics[width=0.9\columnwidth]{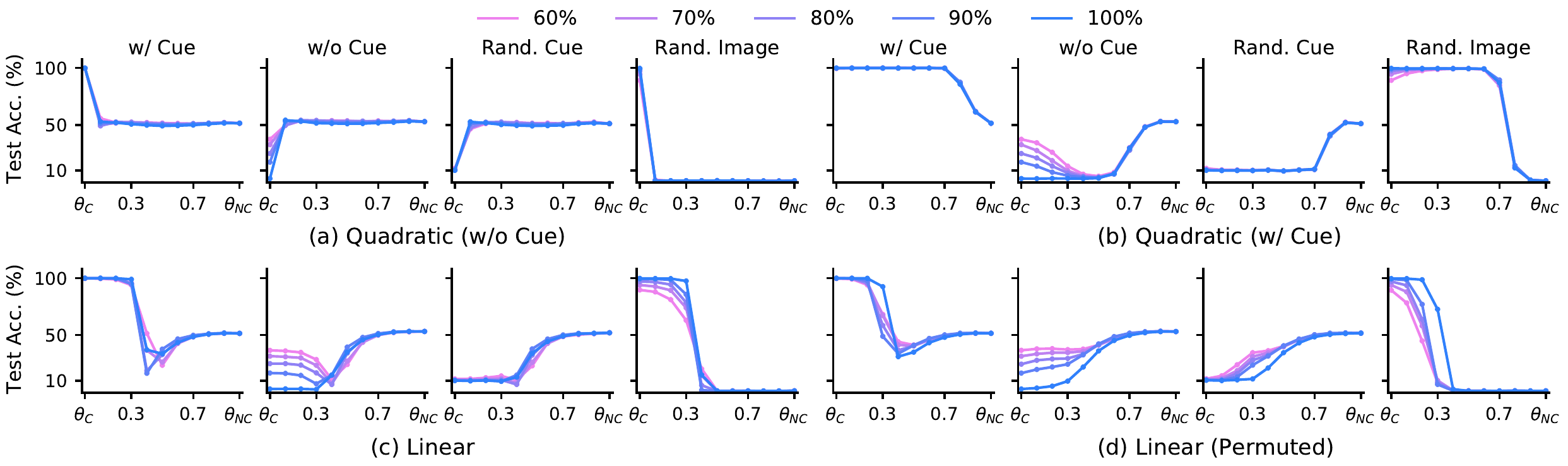}}
  \caption{Test Accuracy.}
\end{subfigure}
\begin{subfigure}[b]{\textwidth}
  \centering
  \centerline{\includegraphics[width=0.9\columnwidth]{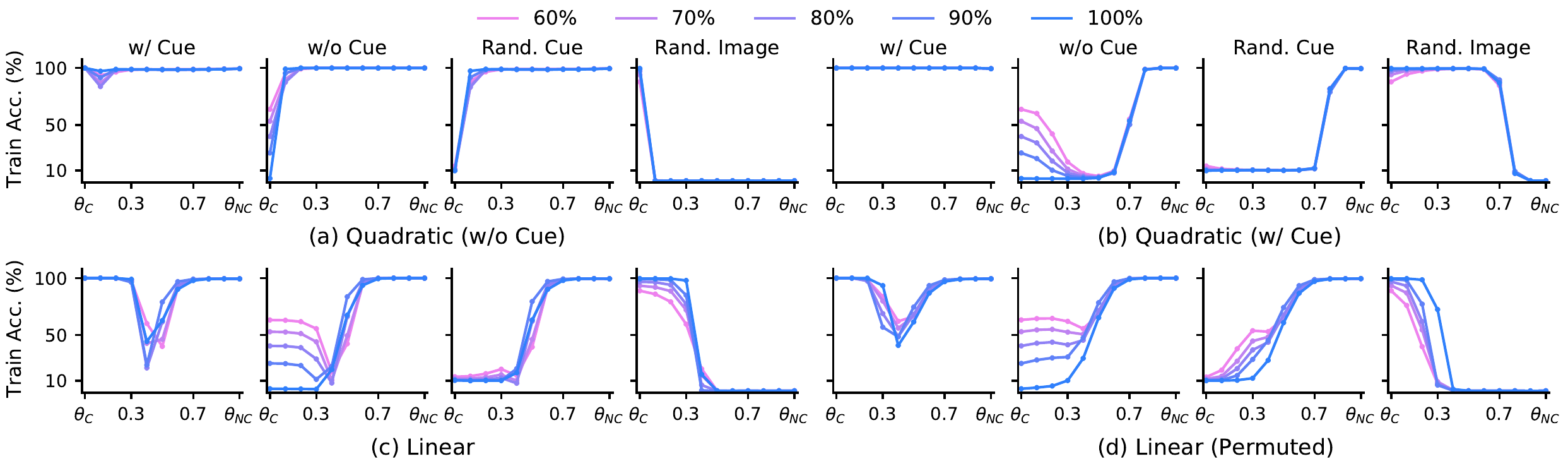}}
  \caption{Train Accuracy.}
\end{subfigure}
\vspace{-1mm}
\begin{subfigure}[b]{\textwidth}
  \centering
  \centerline{\includegraphics[width=0.9\columnwidth]{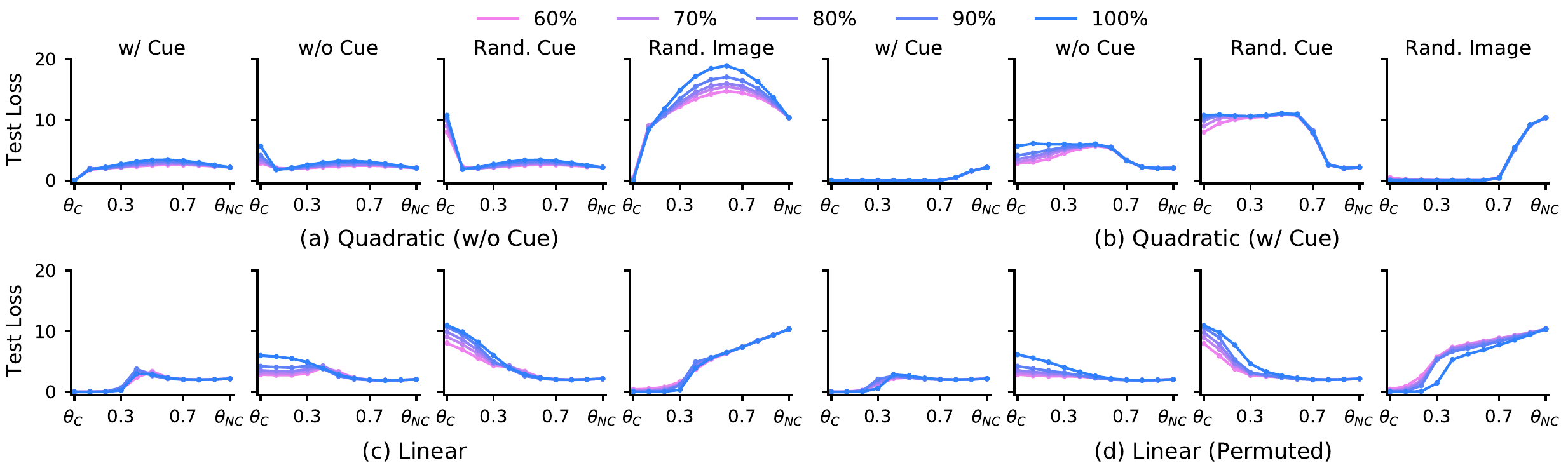}}
  \caption{Test Loss.}
\end{subfigure}
\begin{subfigure}[b]{\textwidth}
  \centering
  \centerline{\includegraphics[width=0.9\columnwidth]{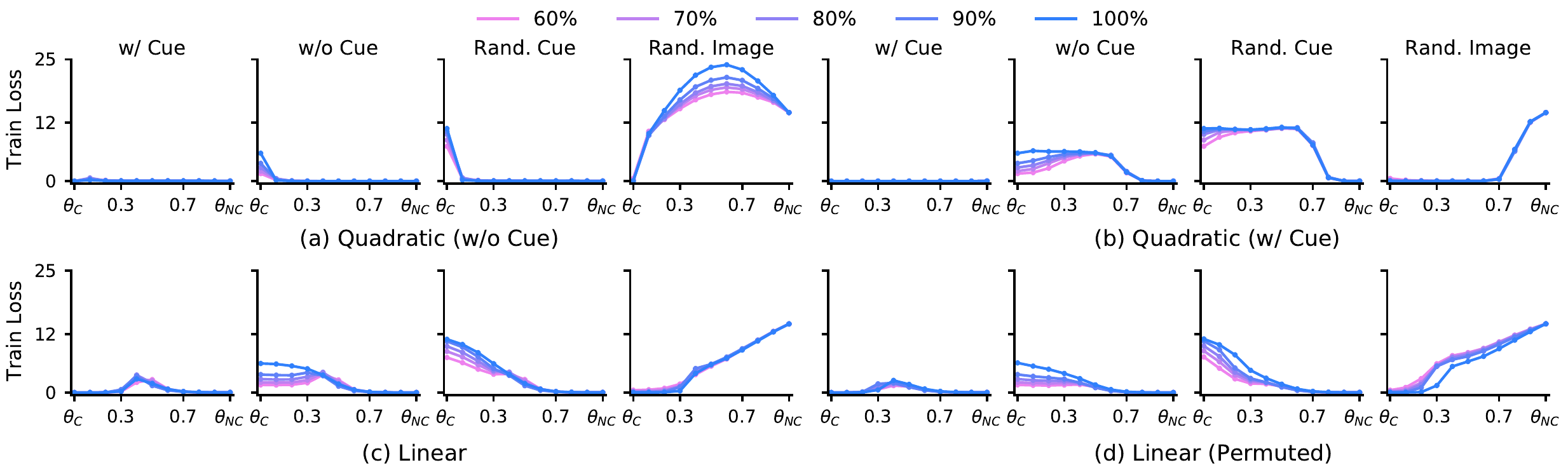}}
  \caption{Train Loss.}
\end{subfigure}
\vspace{-1mm}
\caption{\label{fig:c100_vgg}\textbf{VGG-13 on CIFAR-100 with Box/Color Cue}. We plot test/train accuracy/loss curves along different connectivity paths and see thorough corroboration of our claims in the main text: Mechanistically dissimilar minimizers can be connected via nonlinear paths on a given dataset, but behave different on counterfactuals, indicating lack of mechanistic connectivity.}
\vspace{-10pt}
\end{figure}

\begin{figure}[H]
\centering
\begin{subfigure}[b]{\textwidth}
  \centering
  \centerline{\includegraphics[width=0.9\columnwidth]{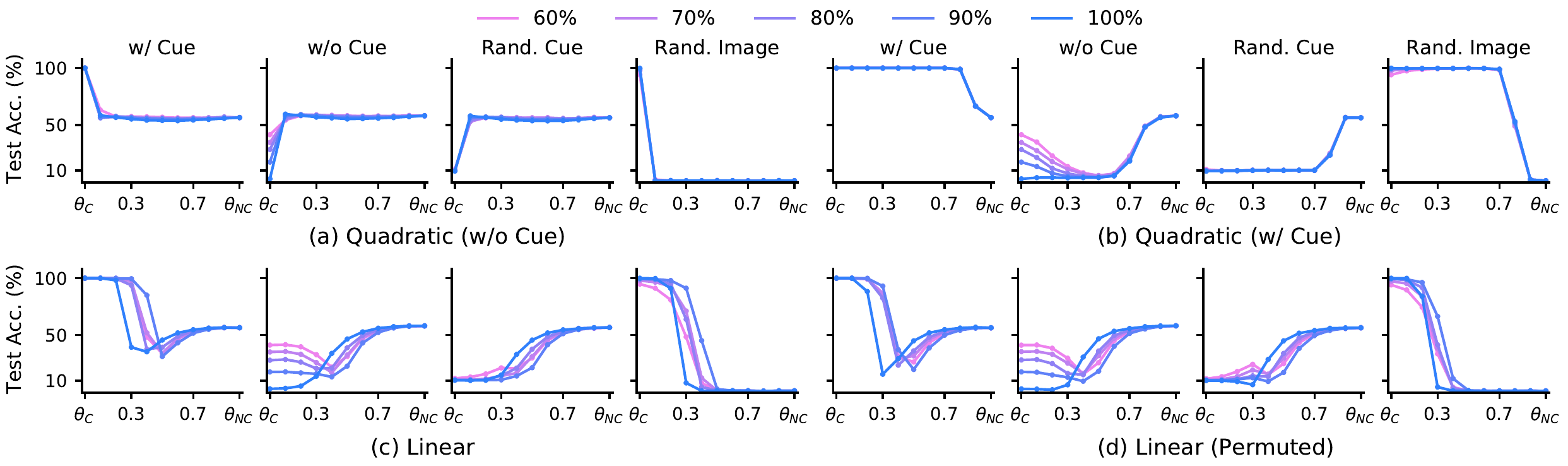}}
  \caption{Test Accuracy.}
\end{subfigure}
\begin{subfigure}[b]{\textwidth}
  \centering
  \centerline{\includegraphics[width=0.9\columnwidth]{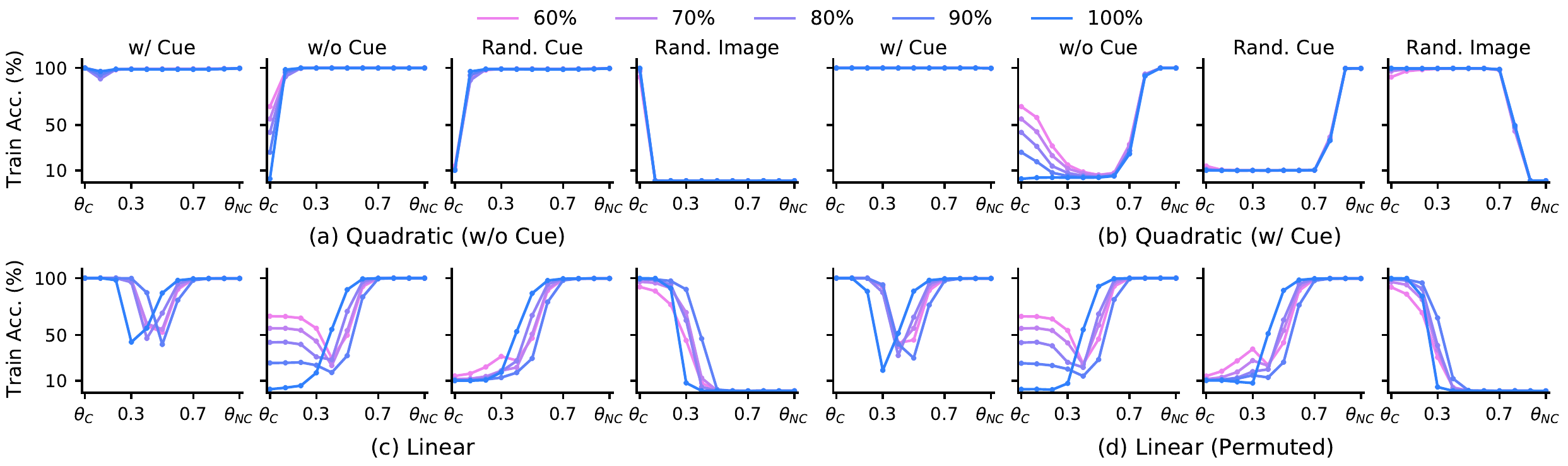}}
  \caption{Train Accuracy.}
\end{subfigure}
\vspace{-1mm}
\begin{subfigure}[b]{\textwidth}
  \centering
  \centerline{\includegraphics[width=0.9\columnwidth]{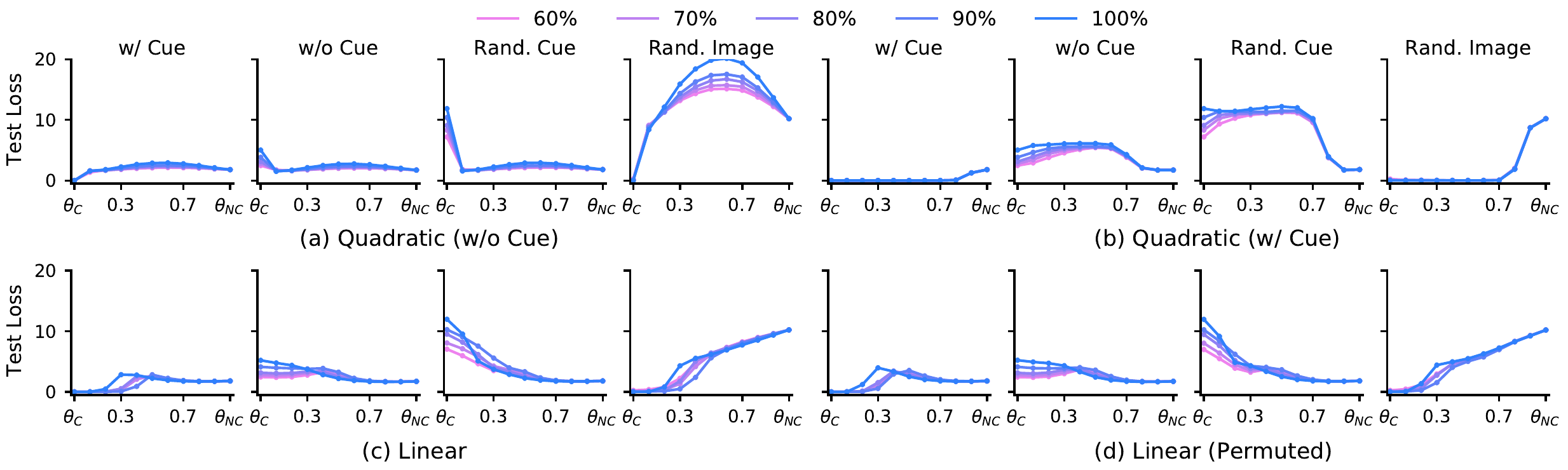}}
  \caption{Test Loss.}
\end{subfigure}
\begin{subfigure}[b]{\textwidth}
  \centering
  \centerline{\includegraphics[width=0.9\columnwidth]{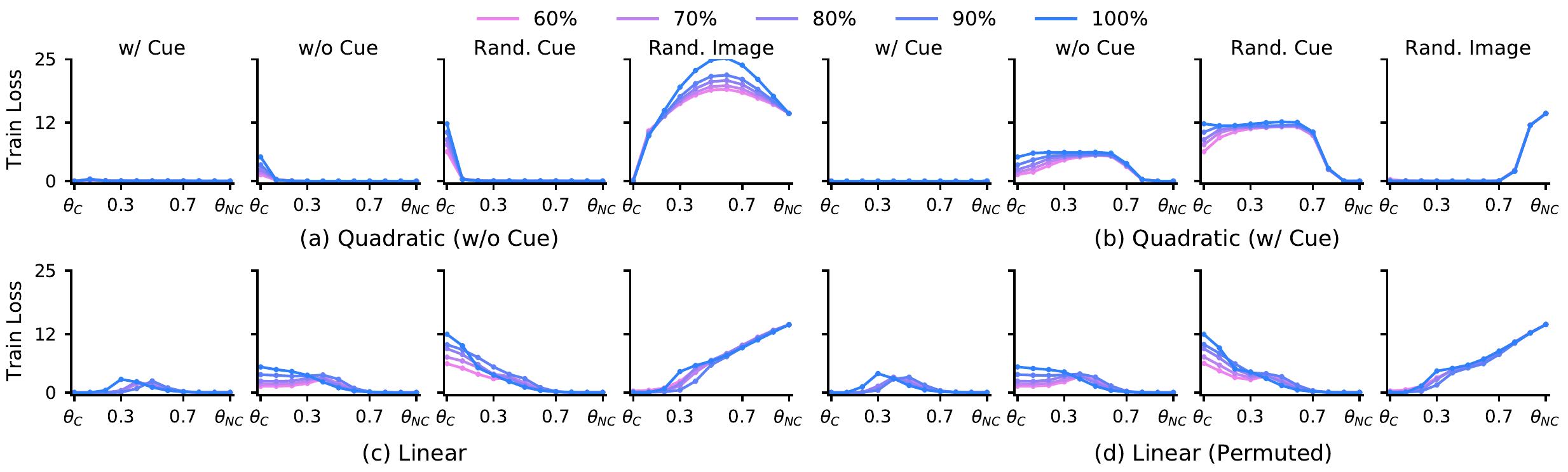}}
  \caption{Train Loss.}
\end{subfigure}
\vspace{-1mm}
\caption{\label{fig:c100_res18}\textbf{ResNet-18 on CIFAR-100 with Box/Color Cue}. We plot test/train accuracy/loss curves along different connectivity paths and see thorough corroboration of our claims in the main text: Mechanistically dissimilar minimizers can be connected via nonlinear paths on a given dataset, but behave different on counterfactuals, indicating lack of mechanistic connectivity.}
\vspace{-10pt}
\end{figure}

\begin{figure}[H]
\centering
\begin{subfigure}[b]{\textwidth}
  \centering
  \centerline{\includegraphics[width=0.9\columnwidth]{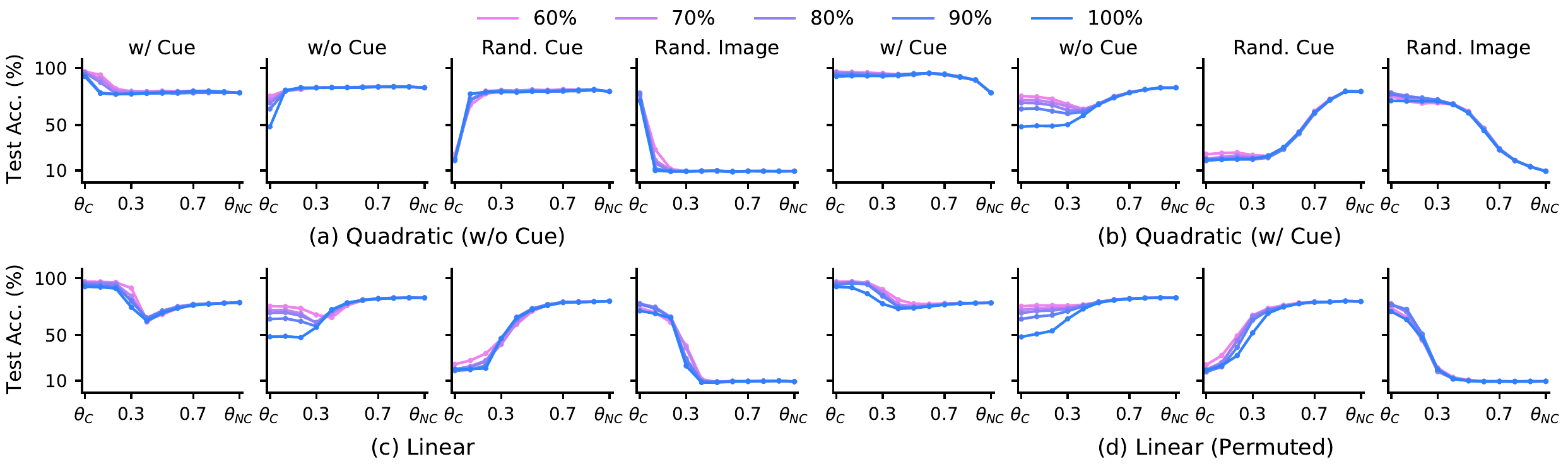}}
  \caption{Test Accuracy.}
\end{subfigure}
\begin{subfigure}[b]{\textwidth}
  \centering
  \centerline{\includegraphics[width=0.9\columnwidth]{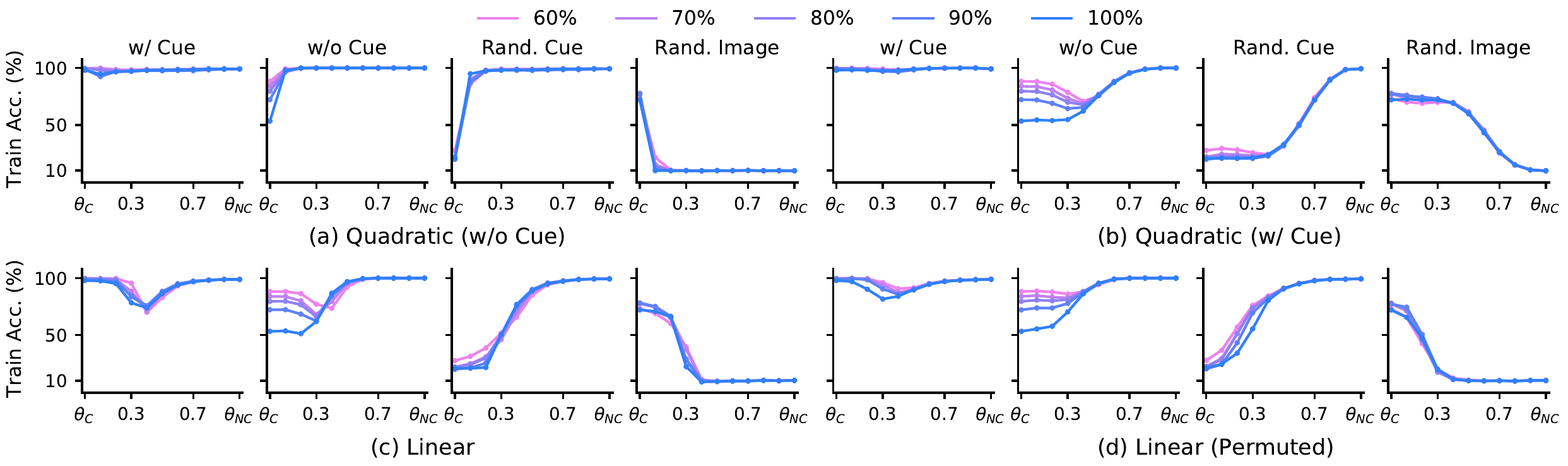}}
  \caption{Train Accuracy.}
\end{subfigure}
\vspace{-1mm}
\begin{subfigure}[b]{\textwidth}
  \centering
  \centerline{\includegraphics[width=0.9\columnwidth]{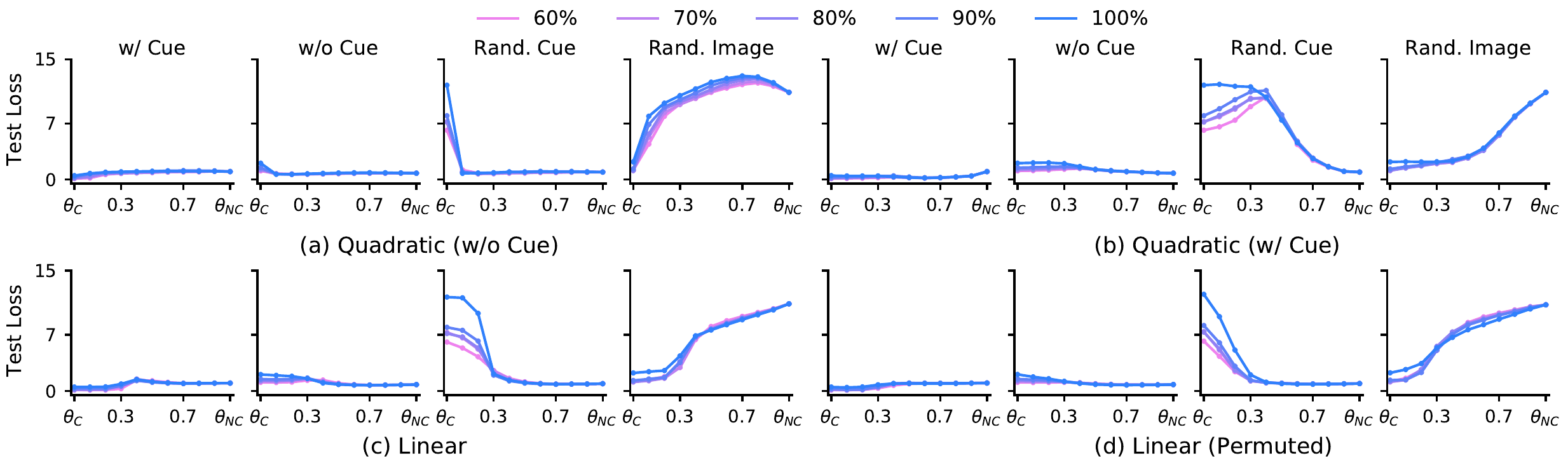}}
  \caption{Test Loss.}
\end{subfigure}
\begin{subfigure}[b]{\textwidth}
  \centering
  \centerline{\includegraphics[width=0.9\columnwidth]{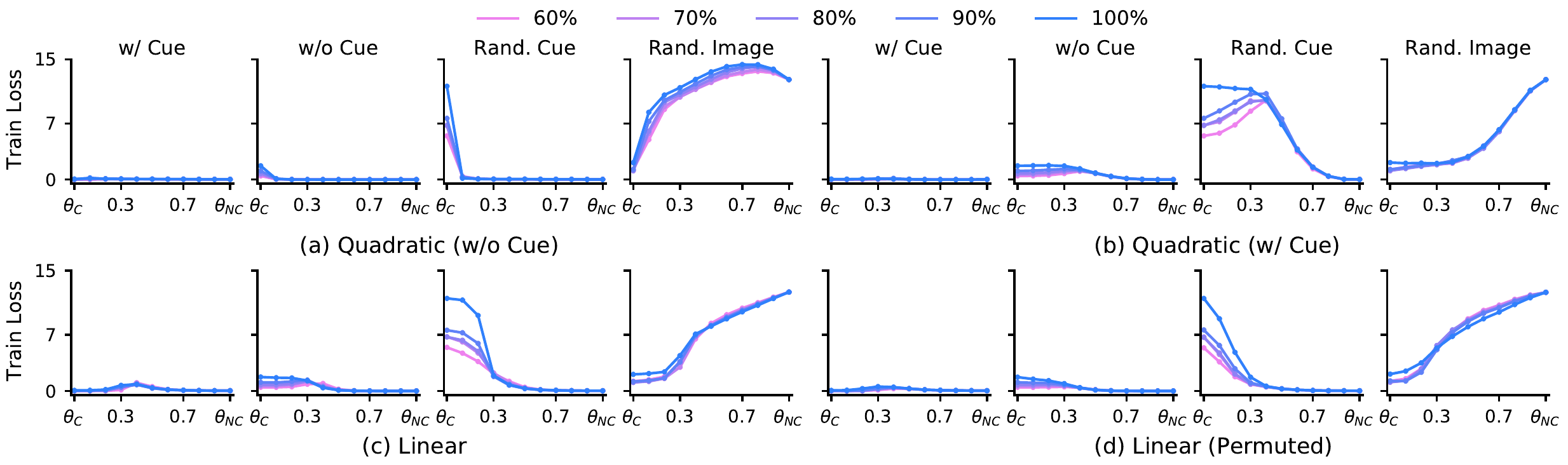}}
  \caption{Train Loss.}
\end{subfigure}
\vspace{-1mm}
\caption{\label{fig:dominoes_vgg}\textbf{VGG-13 on Dominoes}. We plot test/train accuracy/loss curves along different connectivity paths and see thorough corroboration of our claims in the main text: Mechanistically dissimilar minimizers can be connected via nonlinear paths on a given dataset, but behave different on counterfactuals, indicating lack of mechanistic connectivity.}
\vspace{-10pt}
\end{figure}

\begin{figure}[H]
\centering
\begin{subfigure}[b]{\textwidth}
  \centering
  \centerline{\includegraphics[width=0.9\columnwidth]{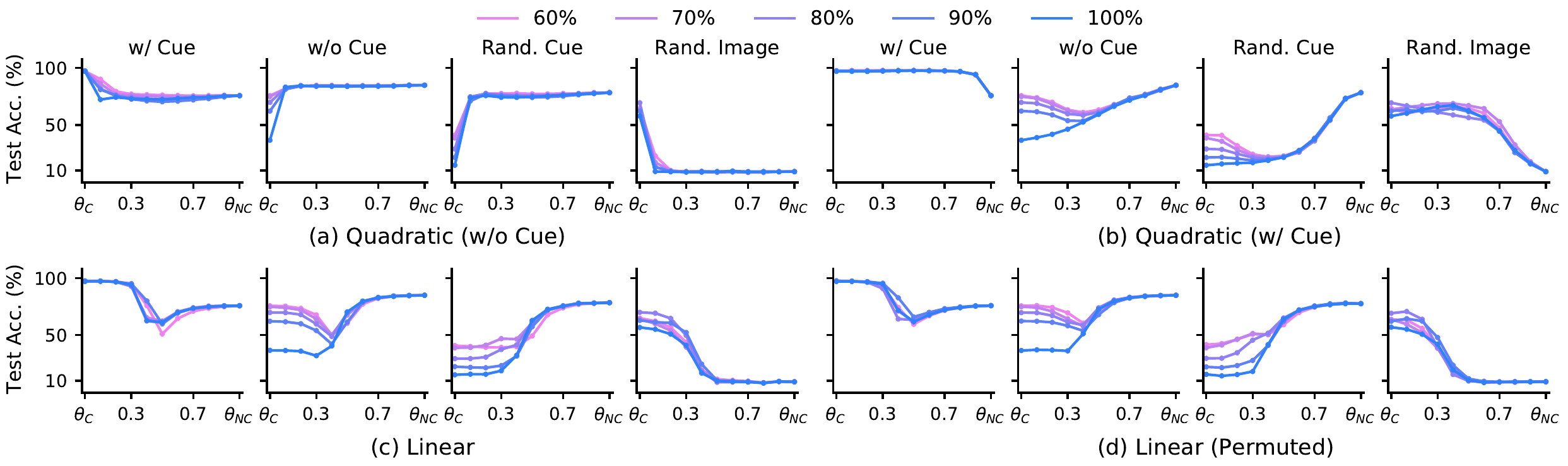}}
  \caption{Test Accuracy.}
\end{subfigure}
\begin{subfigure}[b]{\textwidth}
  \centering
  \centerline{\includegraphics[width=0.9\columnwidth]{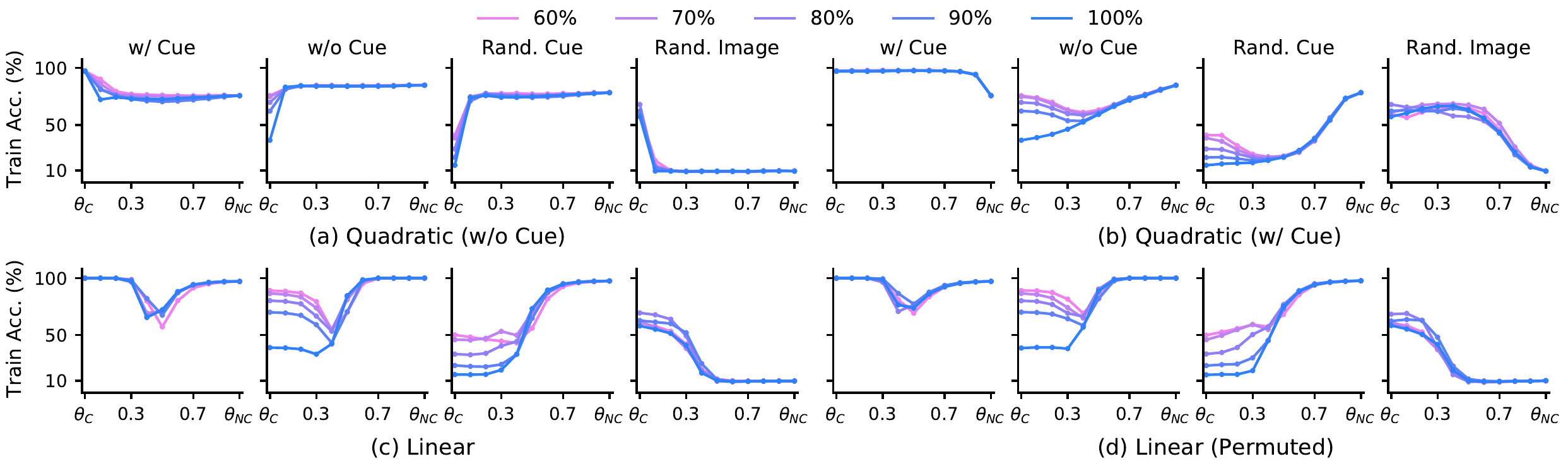}}
  \caption{Train Accuracy.}
\end{subfigure}
\vspace{-1mm}
\begin{subfigure}[b]{\textwidth}
  \centering
  \centerline{\includegraphics[width=0.9\columnwidth]{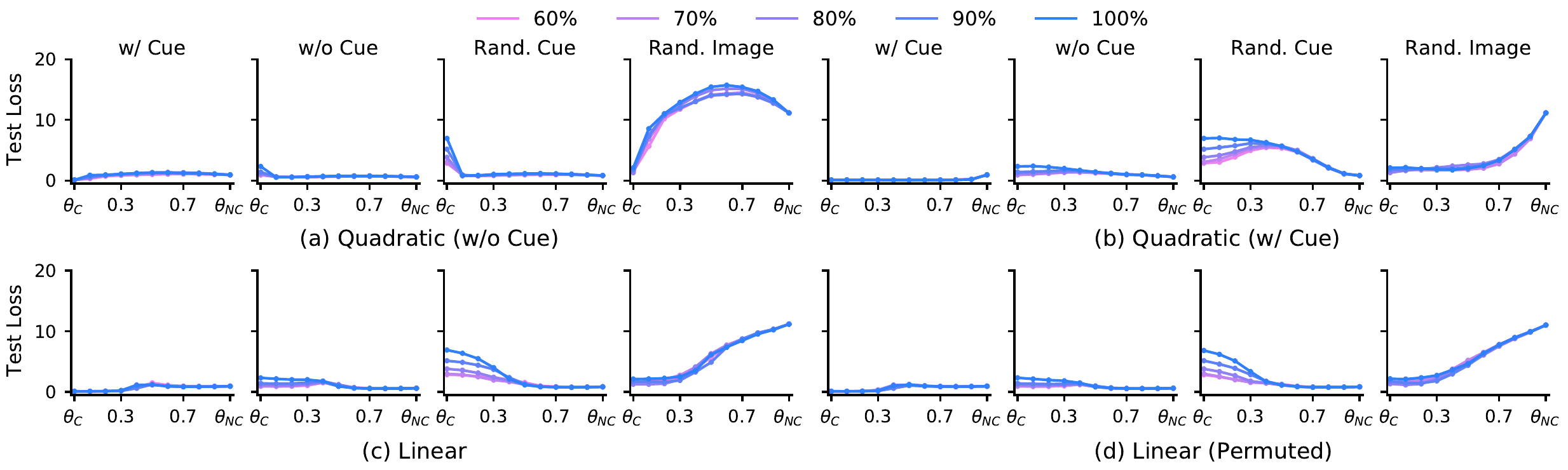}}
  \caption{Test Loss.}
\end{subfigure}
\begin{subfigure}[b]{\textwidth}
  \centering
  \centerline{\includegraphics[width=0.9\columnwidth]{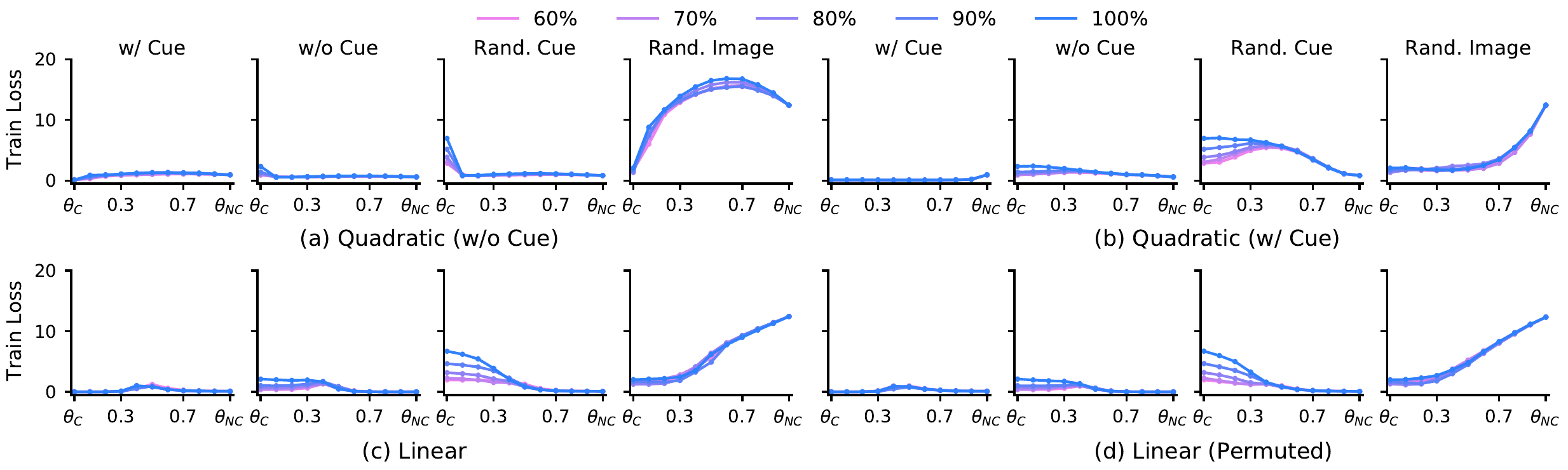}}
  \caption{Train Loss.}
\end{subfigure}
\vspace{-1mm}
\caption{\label{fig:dominoes_res18}\textbf{ResNet-18 on Dominoes}. We plot test/train accuracy/loss curves along different connectivity paths and see thorough corroboration of our claims in the main text: Mechanistically dissimilar minimizers can be connected via nonlinear paths on a given dataset, but behave different on counterfactuals, indicating lack of mechanistic connectivity.}
\vspace{-10pt}
\end{figure}

\section{Further Results: Lack of Linear Connectivity implies Mechanistic Dissimilarity}
\label{app:lmc_results}
We train VGG-13 and ResNet-18 models on our synthetic CIFAR-10 / CIFAR-100 / Dominoes datasets with cues (see Figs.~\ref{fig:c10viz}, \ref{fig:c100viz}, and \ref{fig:dominoesviz}). Corresponding models are denoted $\theta_{\text{C}}$. These models are then fine-tuned on the original CIFAR-10 / CIFAR-100 datasets that do not have any cue features. Specifically, we use different learning rates (LR) and train for 100 epochs with a step-decay schedule (decay at epoch 40 and 80 by a factor of $0.1$). Corresponding models are denoted $\theta_{\text{FT}}$. In the following, plot titles denote evaluation dataset, including datasets where either the cue is present (denoted w/ Cue), absent (denoted w/o Cue), randomized (denoted Rand.\ Cue), or the underlying image is randomized but the cue remains the same (denoted Rand.\ Image). Line colors denote the proportion of dataset that has contains our synthetically embedded cues. 

Across all our results, we see that using a large enough learning rate or enforcing perfect correlation between the cue and label induces loss barriers along the linear path, i.e., linear mode connectivity does not hold. Correspondingly, we see the models respond differently to counterfactuals, i.e, they are mechanistically dissimilar and not connected. For a small enough learning rate, $\theta_{\text{FT}}$ remains mechanistically similar to $\theta_{\text{C}}$, responding similarly on counterfactuals. Correspondingly, we see linear mode connectivity holds between the models on data with cues. 

\begin{figure}[H]
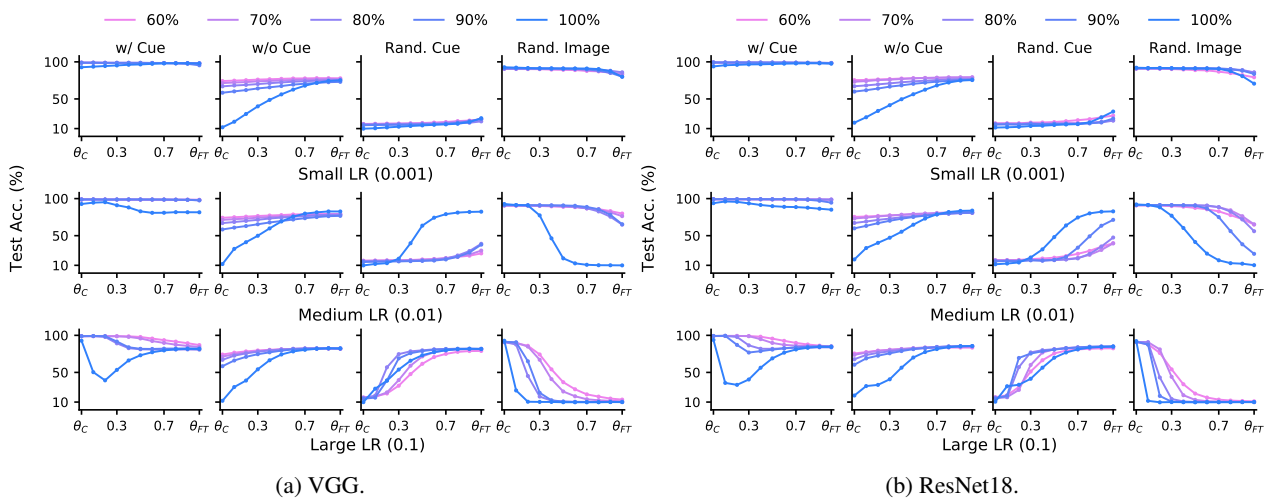

\vspace{10pt}
\centering
\begin{subfigure}[b]{0.49\textwidth}
  \centering
  \centerline{\includegraphics[width=\columnwidth]{images/test_accs_c10_ft_vgg.pdf}}
  \caption{VGG.}
\end{subfigure}%
\begin{subfigure}[b]{0.49\textwidth}
  \centering
  \centerline{\includegraphics[width=\columnwidth]{images/test_accs_c10_ft_res18.pdf}}
  \caption{ResNet18.}
\end{subfigure}
\vspace{-1mm}
\caption{\label{fig:test_ft_c10}\textbf{Fine-tuning of models trained on CIFAR-10 with Box Cue}. We plot test accuracy curves along the linear path between $\theta_{\text{C}}$ and $\theta_{\text{FT}}$ and see thorough corroboration of our claims in the main text: Linearly connected minimizers exhibit mechanistic similarity, behaving identically on counterfactual datasets, indicating mechanistic connectivity.}
\vspace{5pt}
\end{figure}

\begin{figure}[H]
\centering
\begin{subfigure}[b]{0.49\textwidth}
  \centering
  \centerline{\includegraphics[width=\columnwidth]{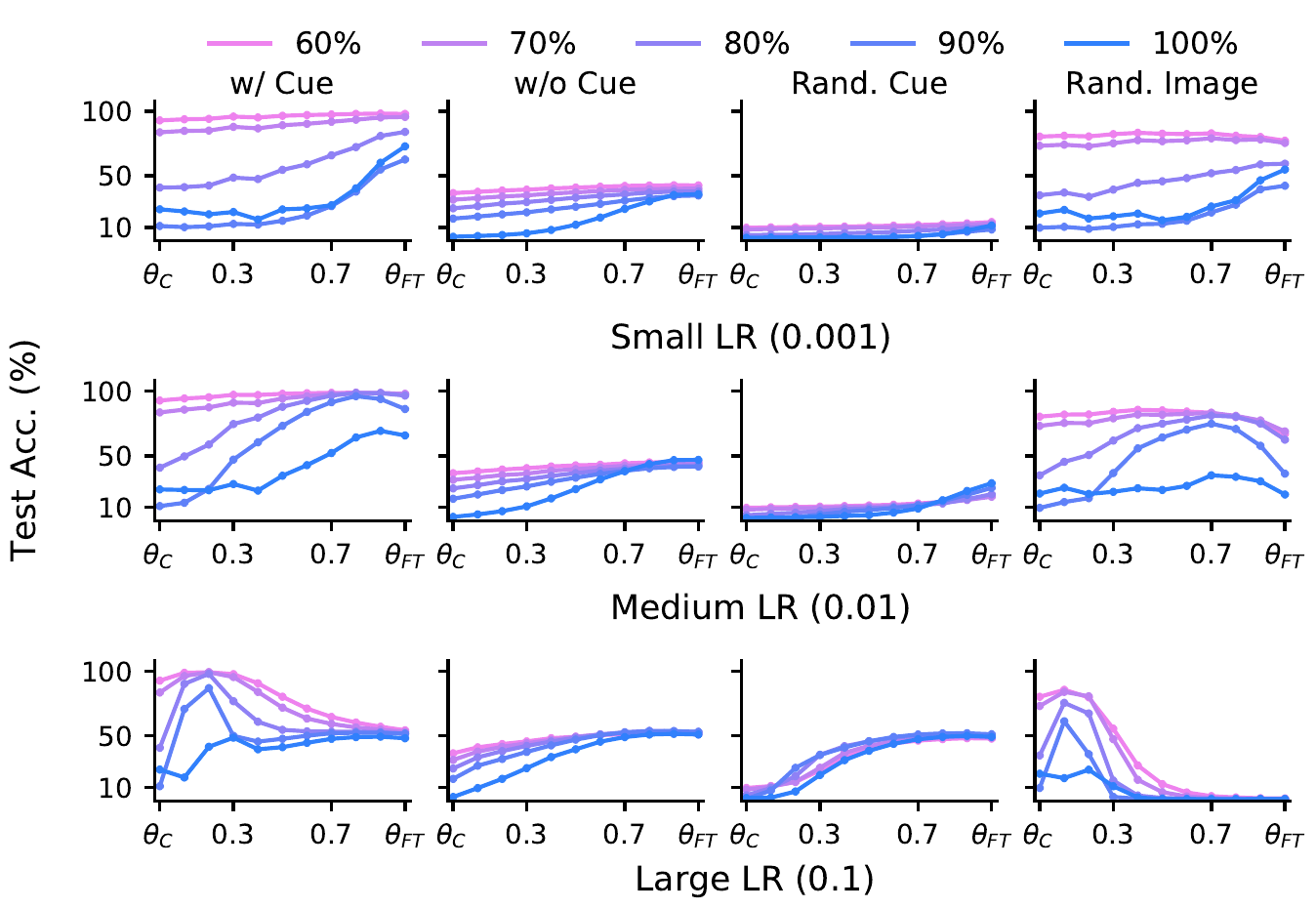}}
  \caption{VGG.}
\end{subfigure}%
\begin{subfigure}[b]{0.49\textwidth}
  \centering
  \centerline{\includegraphics[width=\columnwidth]{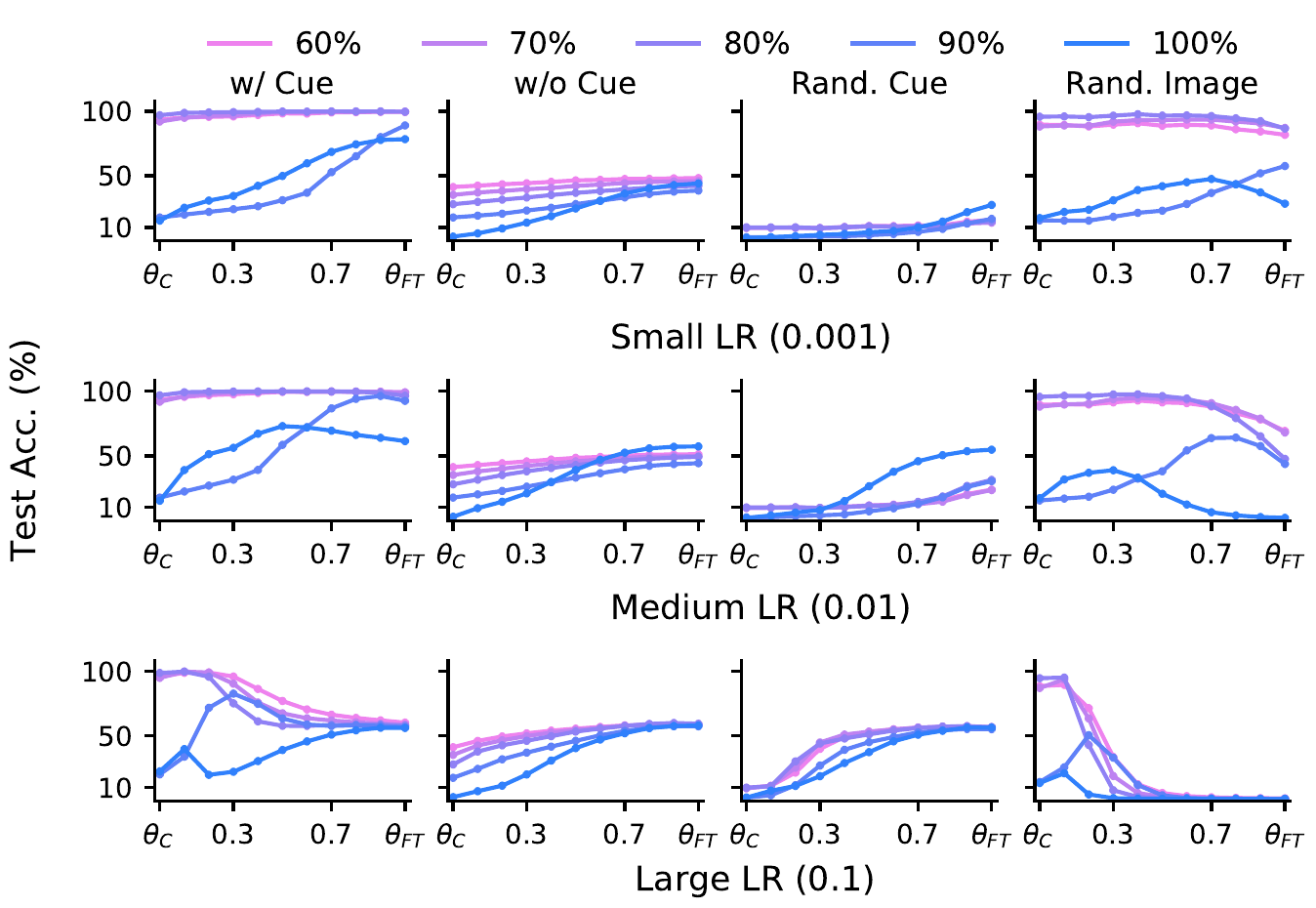}}
  \caption{ResNet18.}
\end{subfigure}
\vspace{-1mm}
\caption{\label{fig:test_ft_c100}\textbf{Fine-tuning of models trained on CIFAR-100 with Box/Color Cue}. We plot test accuracy along the linear path between $\theta_{\text{C}}$ and $\theta_{\text{FT}}$ and see thorough corroboration of our claims in the main text: Linearly connected minimizers exhibit mechanistic similarity, behaving identically on counterfactual datasets, indicating mechanistic connectivity.}
\vspace{5pt}
\end{figure}

\begin{figure}[H]
\centering
\begin{subfigure}[b]{0.49\textwidth}
  \centering
  \centerline{\includegraphics[width=\columnwidth]{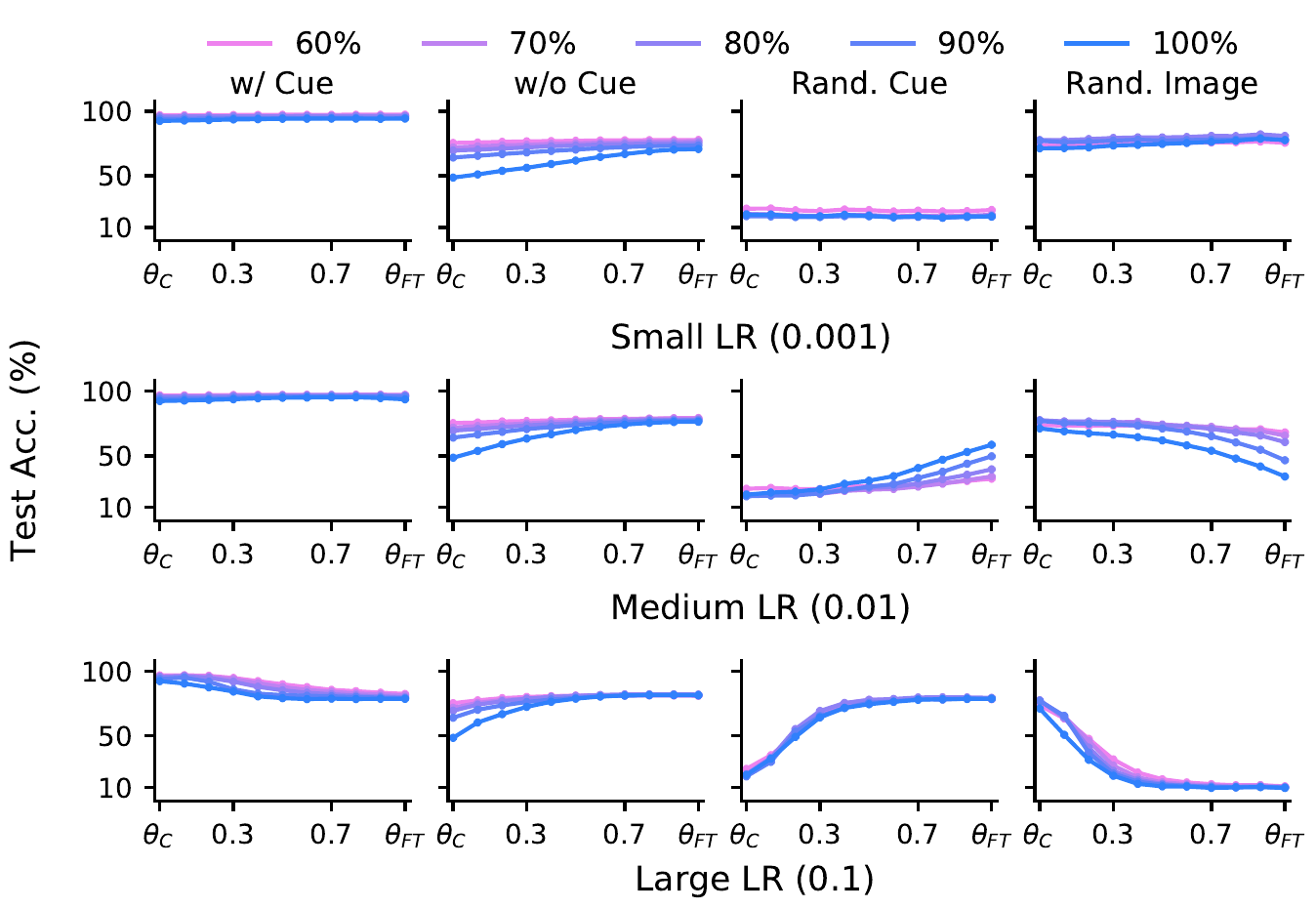}}
  \caption{VGG.}
\end{subfigure}%
\begin{subfigure}[b]{0.49\textwidth}
  \centering
  \centerline{\includegraphics[width=\columnwidth]{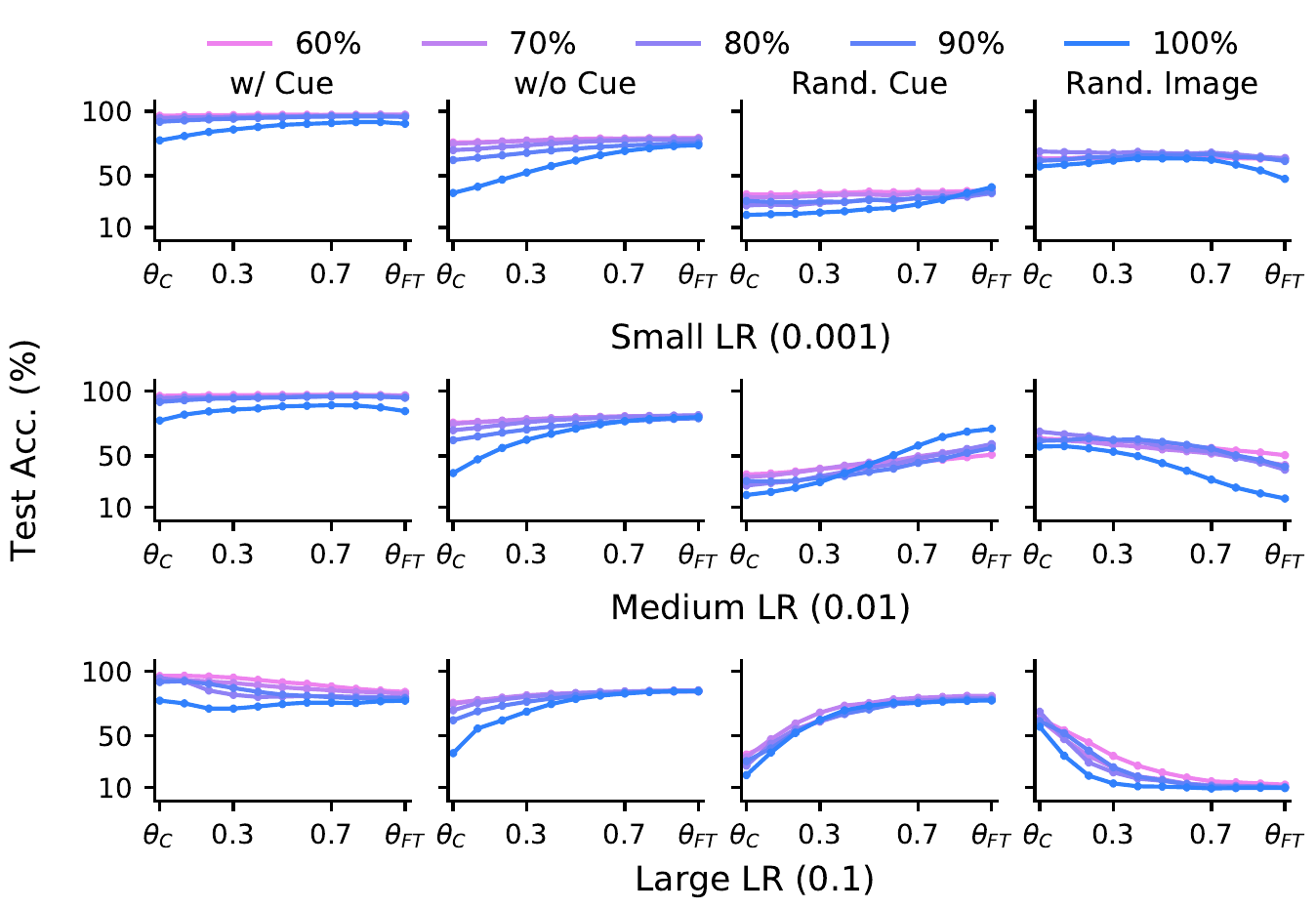}}
  \caption{ResNet18.}
\end{subfigure}
\vspace{-1mm}
\caption{\label{fig:test_ft_dominoes}\textbf{Fine-tuning of models trained on Dominoes}. We plot test accuracy along the linear path between $\theta_{\text{C}}$ and $\theta_{\text{FT}}$ and see thorough corroboration of our claims in the main text: Linearly connected minimizers exhibit mechanistic similarity, behaving identically on counterfactual datasets, indicating mechanistic connectivity.}
\vspace{5pt}
\end{figure}

\begin{figure}[H]
\centering
\begin{subfigure}[b]{0.49\textwidth}
  \centering
  \centerline{\includegraphics[width=\columnwidth]{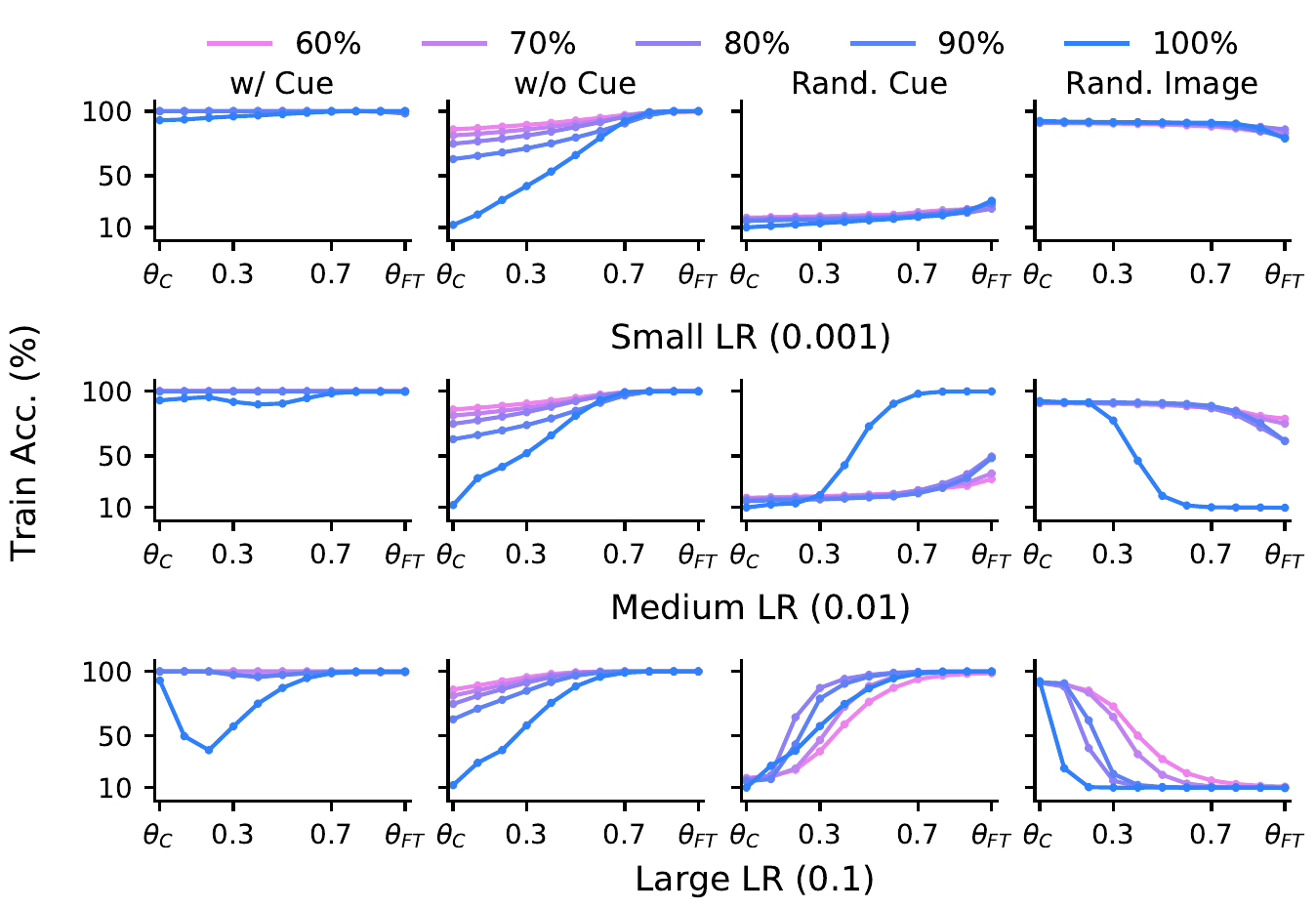}}
  \caption{VGG.}
\end{subfigure}%
\begin{subfigure}[b]{0.49\textwidth}
  \centering
  \centerline{\includegraphics[width=\columnwidth]{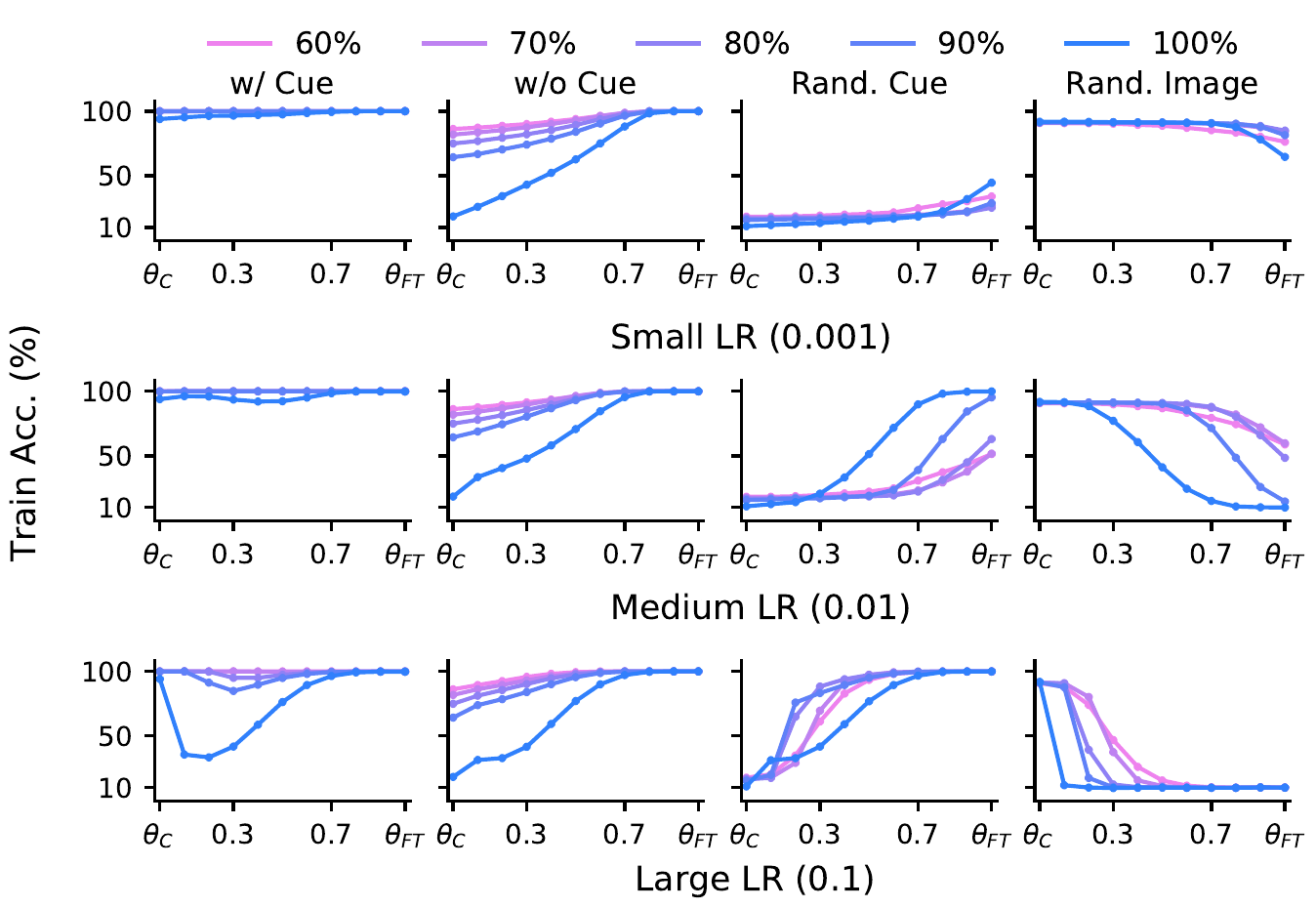}}
  \caption{ResNet18.}
\end{subfigure}
\vspace{-1mm}
\caption{\label{fig:train_ft_c10}\textbf{Fine-tuning of models trained on CIFAR-10 with Box Cue}. We plot train accuracy curves along the linear path between $\theta_{\text{C}}$ and $\theta_{\text{FT}}$ and see thorough corroboration of our claims in the main text: Linearly connected minimizers exhibit mechanistic similarity, behaving identically on counterfactual datasets, indicating mechanistic connectivity.}
\vspace{5pt}
\end{figure}

\begin{figure}[H]
\centering
\begin{subfigure}[b]{0.49\textwidth}
  \centering
  \centerline{\includegraphics[width=\columnwidth]{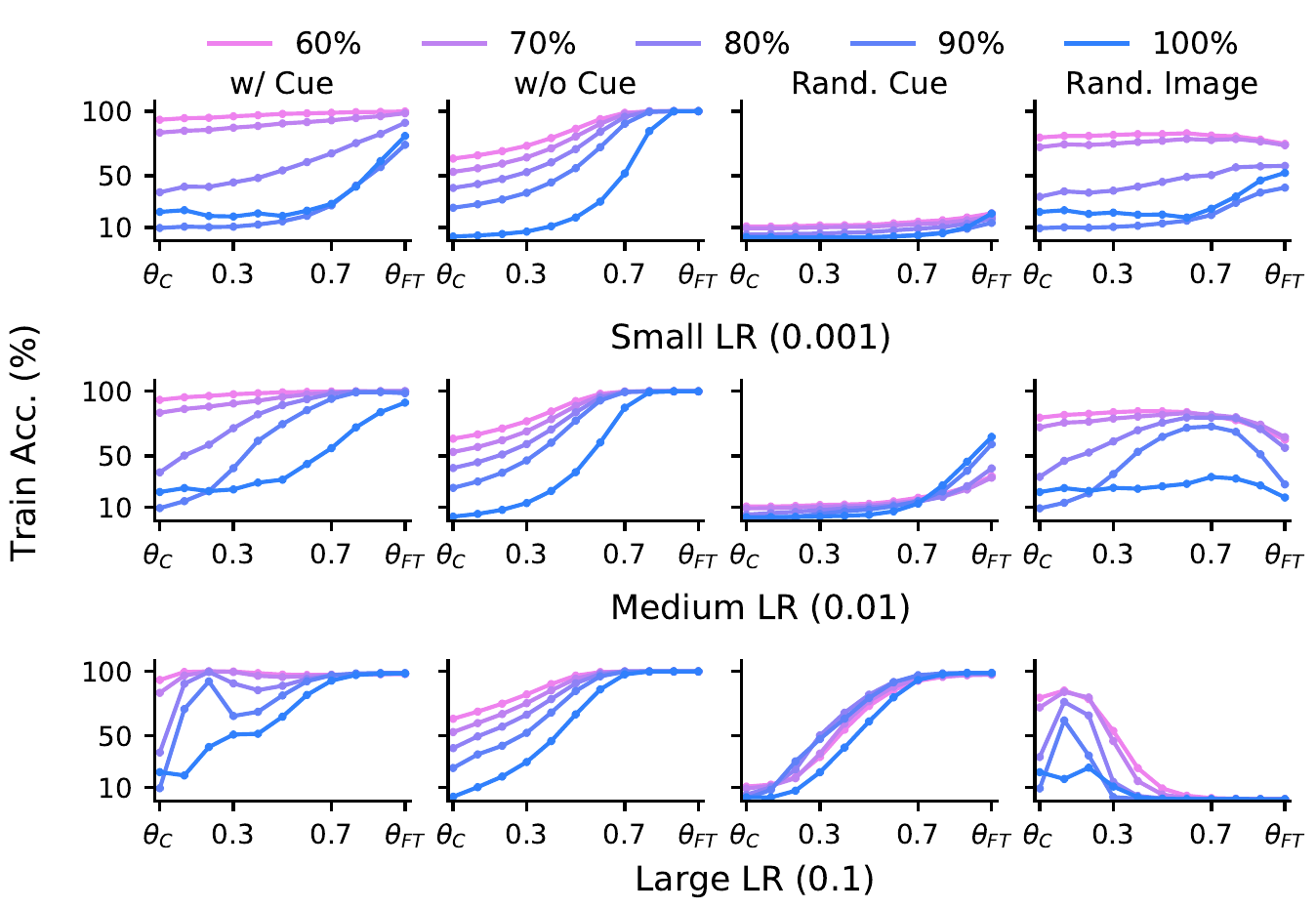}}
  \caption{VGG.}
\end{subfigure}%
\begin{subfigure}[b]{0.49\textwidth}
  \centering
  \centerline{\includegraphics[width=\columnwidth]{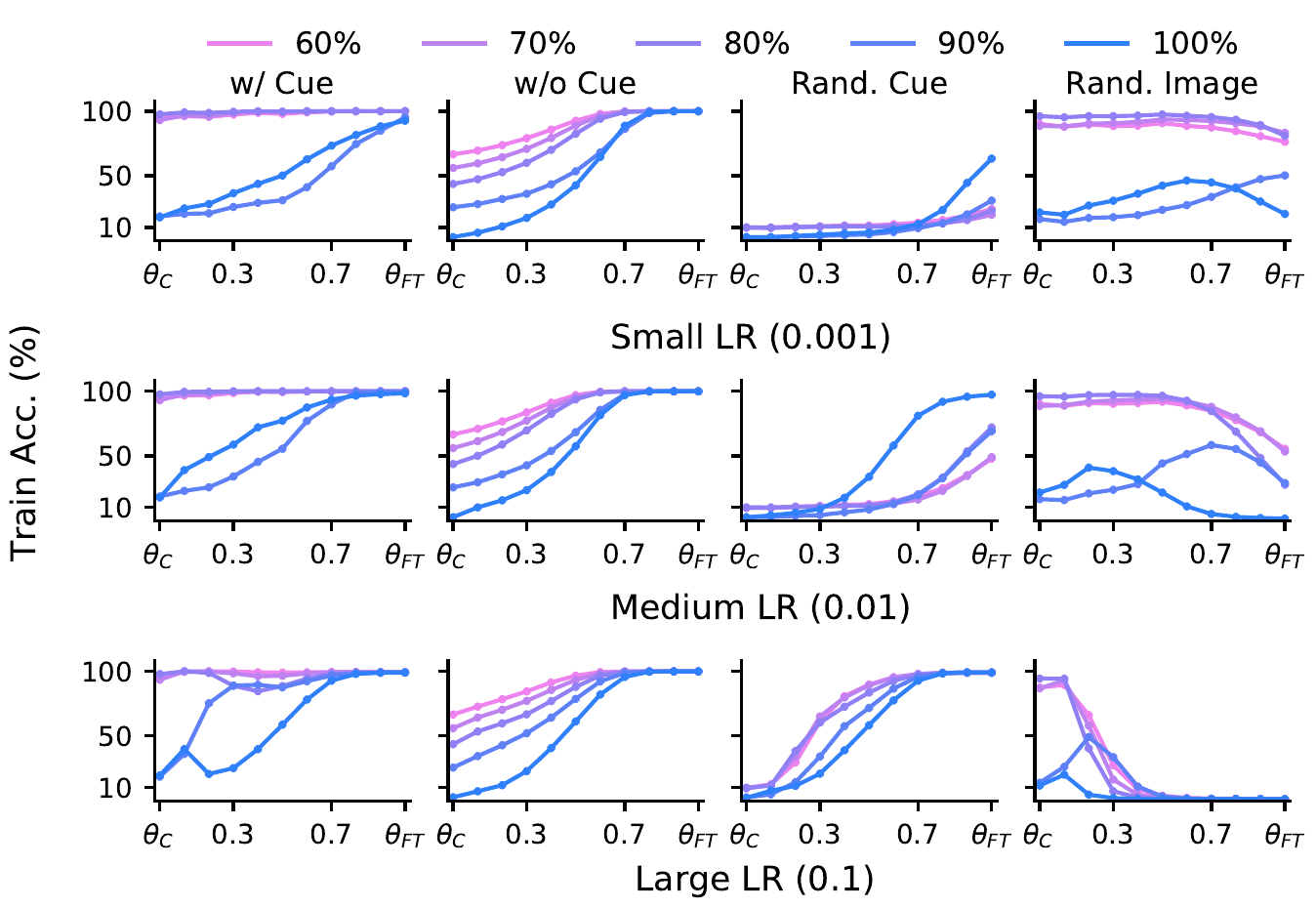}}
  \caption{ResNet18.}
\end{subfigure}
\vspace{-1mm}
\caption{\label{fig:train_ft_c100}\textbf{Fine-tuning of models trained on CIFAR-100 with Box/Color Cue}. We plot train accuracy curves along the linear path between $\theta_{\text{C}}$ and $\theta_{\text{FT}}$ and see thorough corroboration of our claims in the main text: Linearly connected minimizers exhibit mechanistic similarity, behaving identically on counterfactual datasets, indicating mechanistic connectivity.}
\vspace{5pt}
\end{figure}

\begin{figure}[H]
\centering
\begin{subfigure}[b]{0.49\textwidth}
  \centering
  \centerline{\includegraphics[width=\columnwidth]{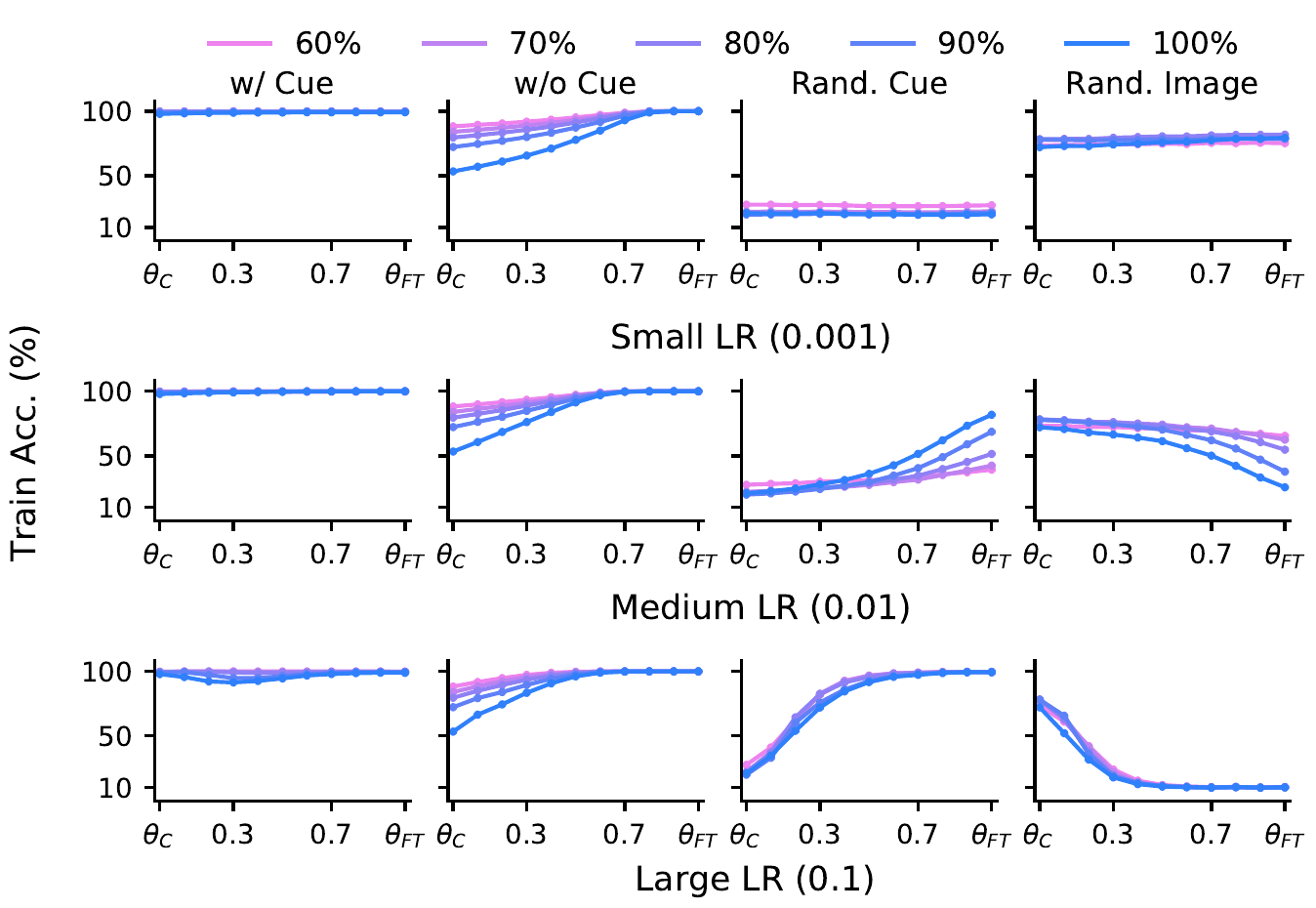}}
  \caption{VGG.}
\end{subfigure}%
\begin{subfigure}[b]{0.49\textwidth}
  \centering
  \centerline{\includegraphics[width=\columnwidth]{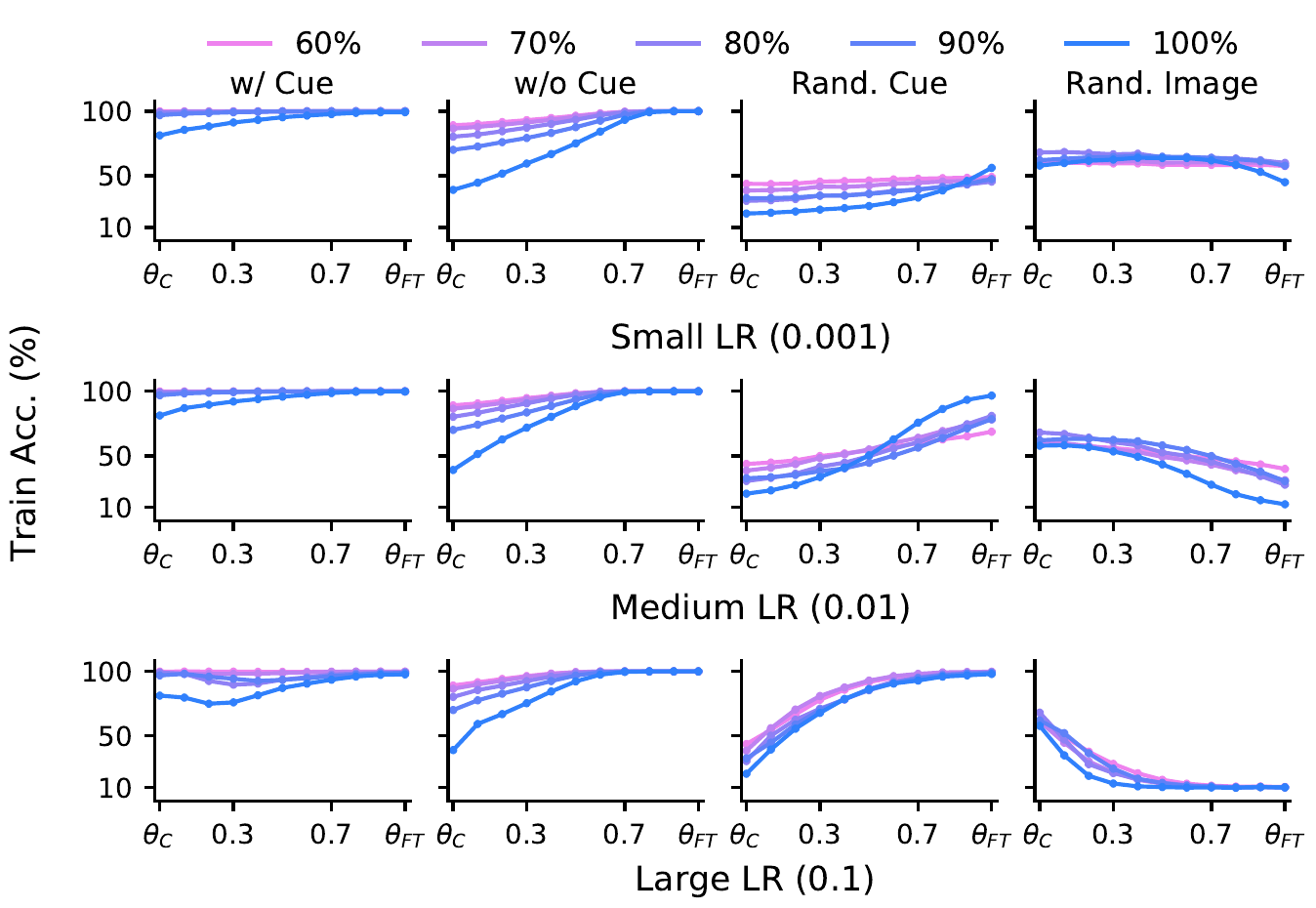}}
  \caption{ResNet18.}
\end{subfigure}
\vspace{-1mm}
\caption{\label{fig:train_ft_dominoes}\textbf{Fine-tuning of models trained on Dominoes}. We plot test accuracy curves along the linear path between $\theta_{\text{C}}$ and $\theta_{\text{FT}}$ and see thorough corroboration of our claims in the main text: Linearly connected minimizers exhibit mechanistic similarity, behaving identically on counterfactual datasets, indicating mechanistic connectivity.}
\vspace{5pt}
\end{figure}

\begin{figure}[H]
\centering
\begin{subfigure}[b]{0.49\textwidth}
  \centering
  \centerline{\includegraphics[width=\columnwidth]{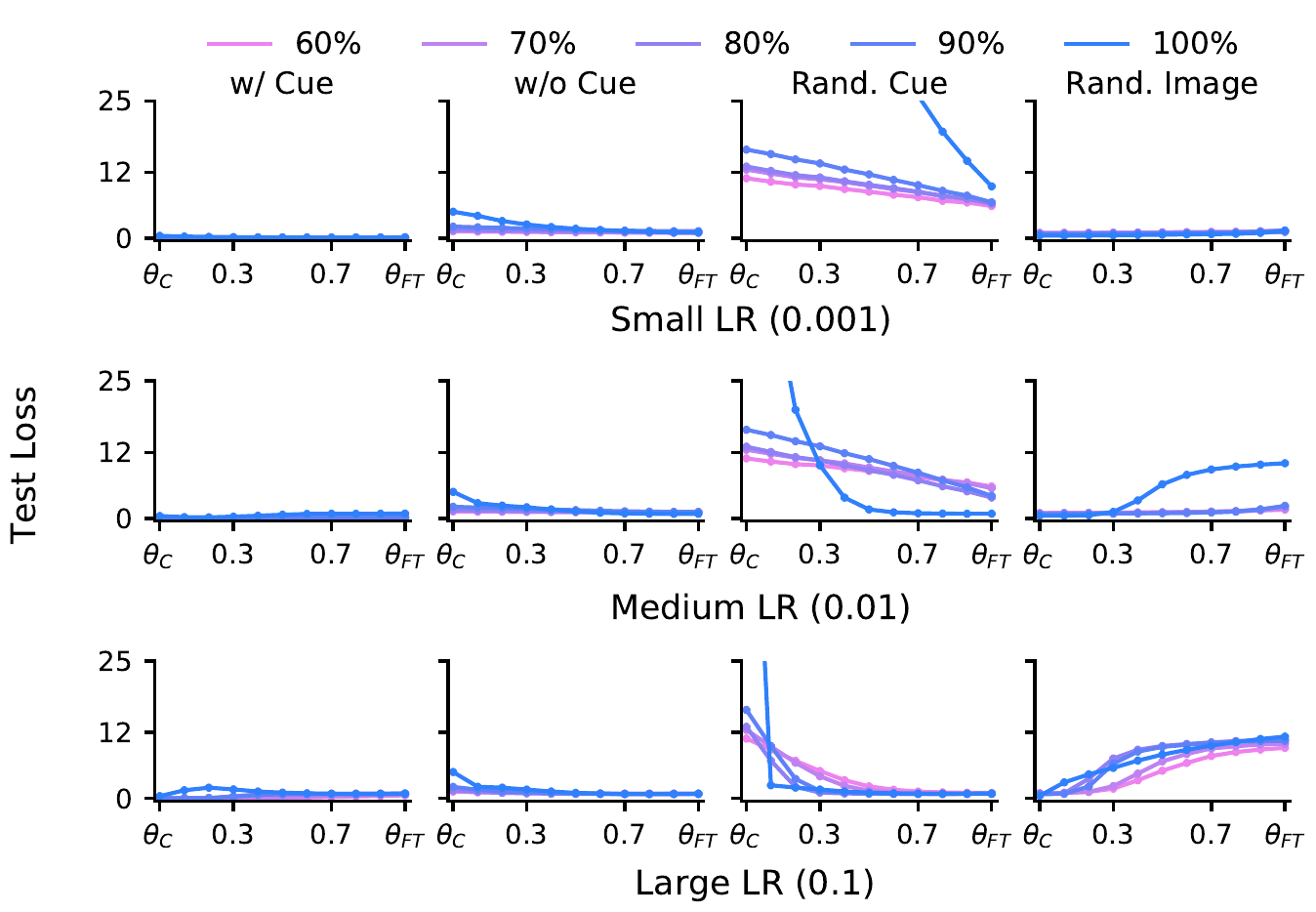}}
  \caption{VGG.}
\end{subfigure}%
\begin{subfigure}[b]{0.49\textwidth}
  \centering
  \centerline{\includegraphics[width=\columnwidth]{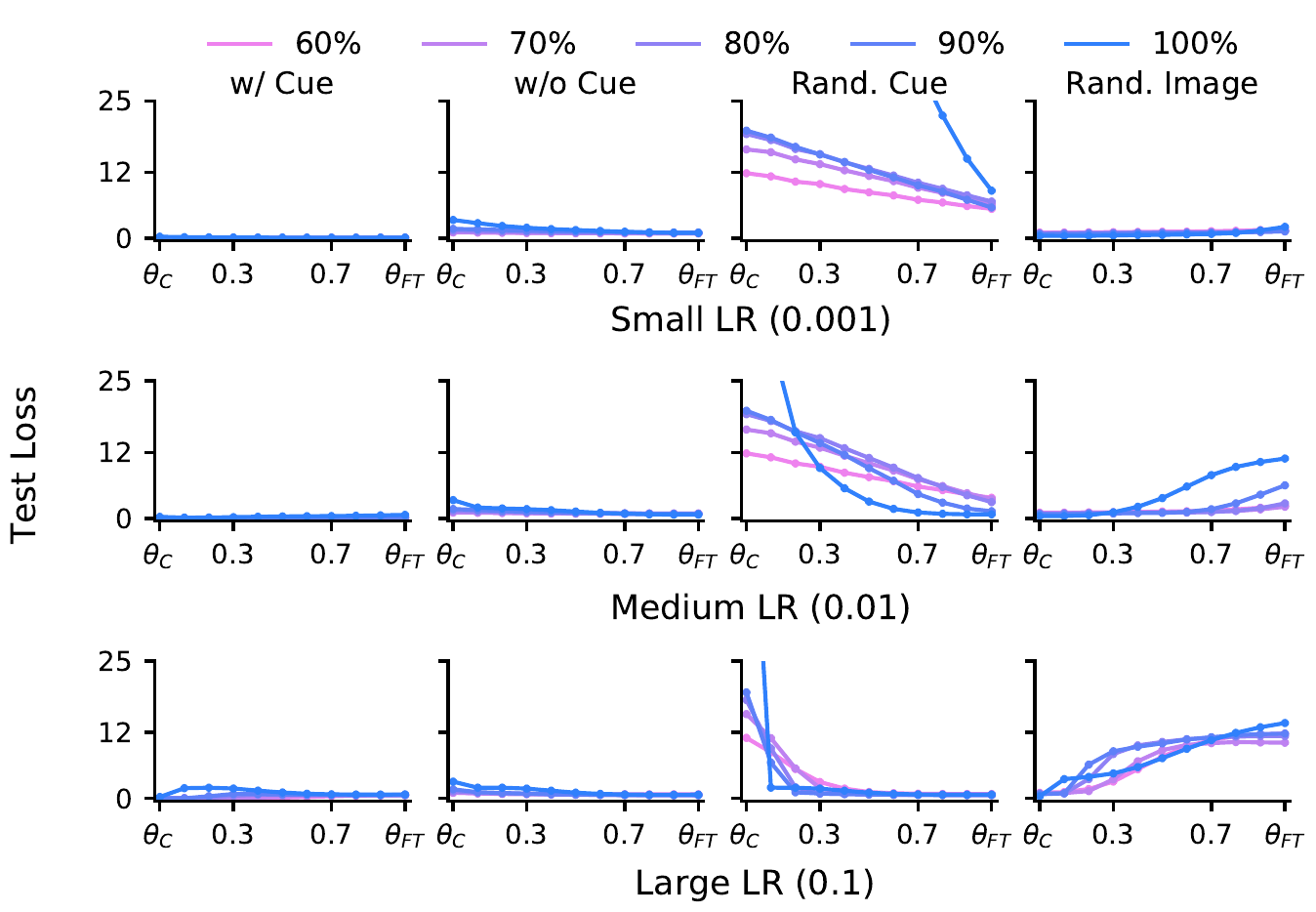}}
  \caption{ResNet18.}
\end{subfigure}
\vspace{-1mm}
\caption{\label{fig:testloss_ft_c10}\textbf{Fine-tuning of models trained on CIFAR-10 with Box Cue}. We plot test loss curves along the linear path between $\theta_{\text{C}}$ and $\theta_{\text{FT}}$ and see thorough corroboration of our claims in the main text: Linearly connected minimizers exhibit mechanistic similarity, behaving identically on counterfactual datasets, indicating mechanistic connectivity.}
\vspace{5pt}
\end{figure}

\begin{figure}[H]
\centering
\begin{subfigure}[b]{0.49\textwidth}
  \centering
  \centerline{\includegraphics[width=\columnwidth]{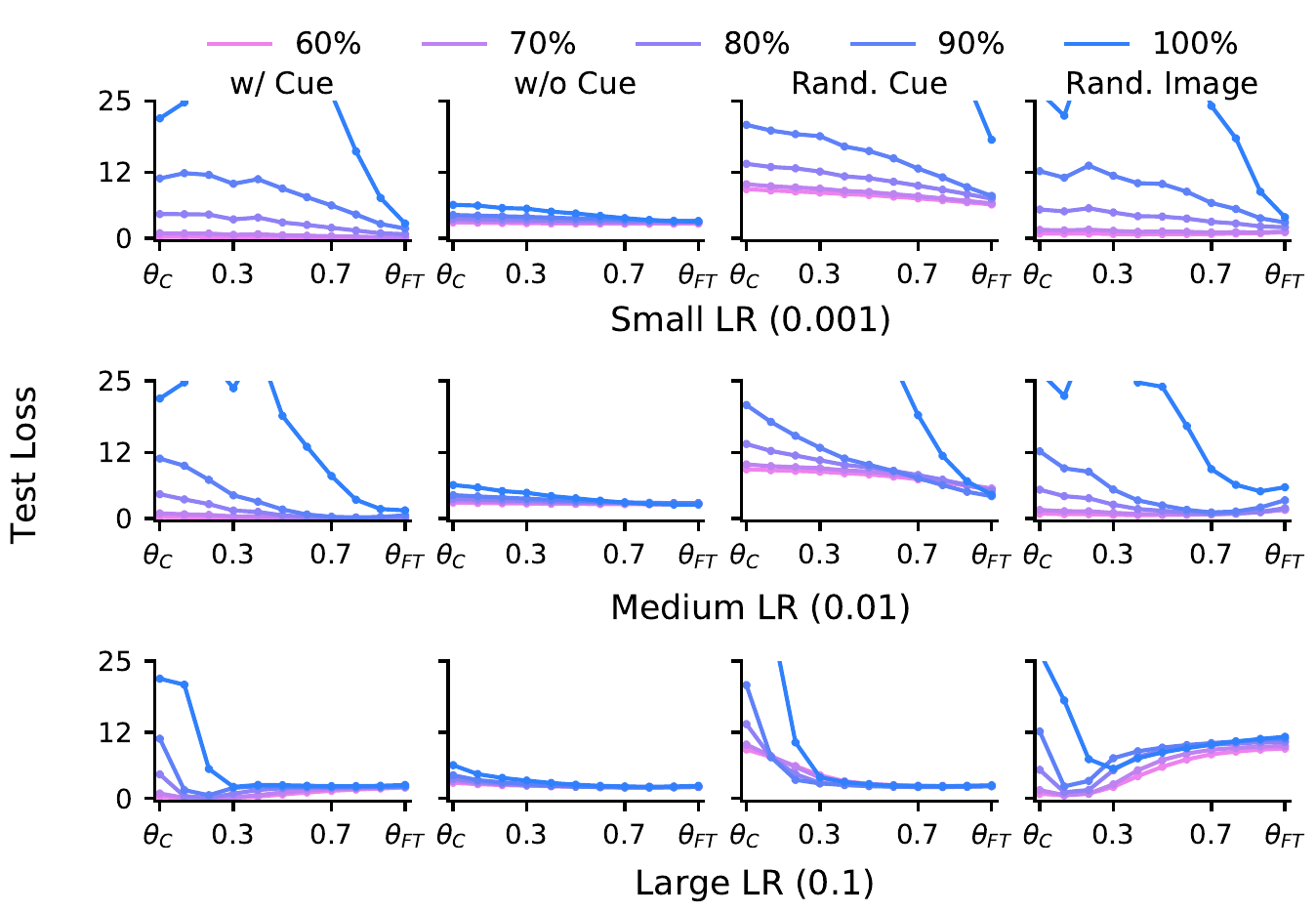}}
  \caption{VGG.}
\end{subfigure}%
\begin{subfigure}[b]{0.49\textwidth}
  \centering
  \centerline{\includegraphics[width=\columnwidth]{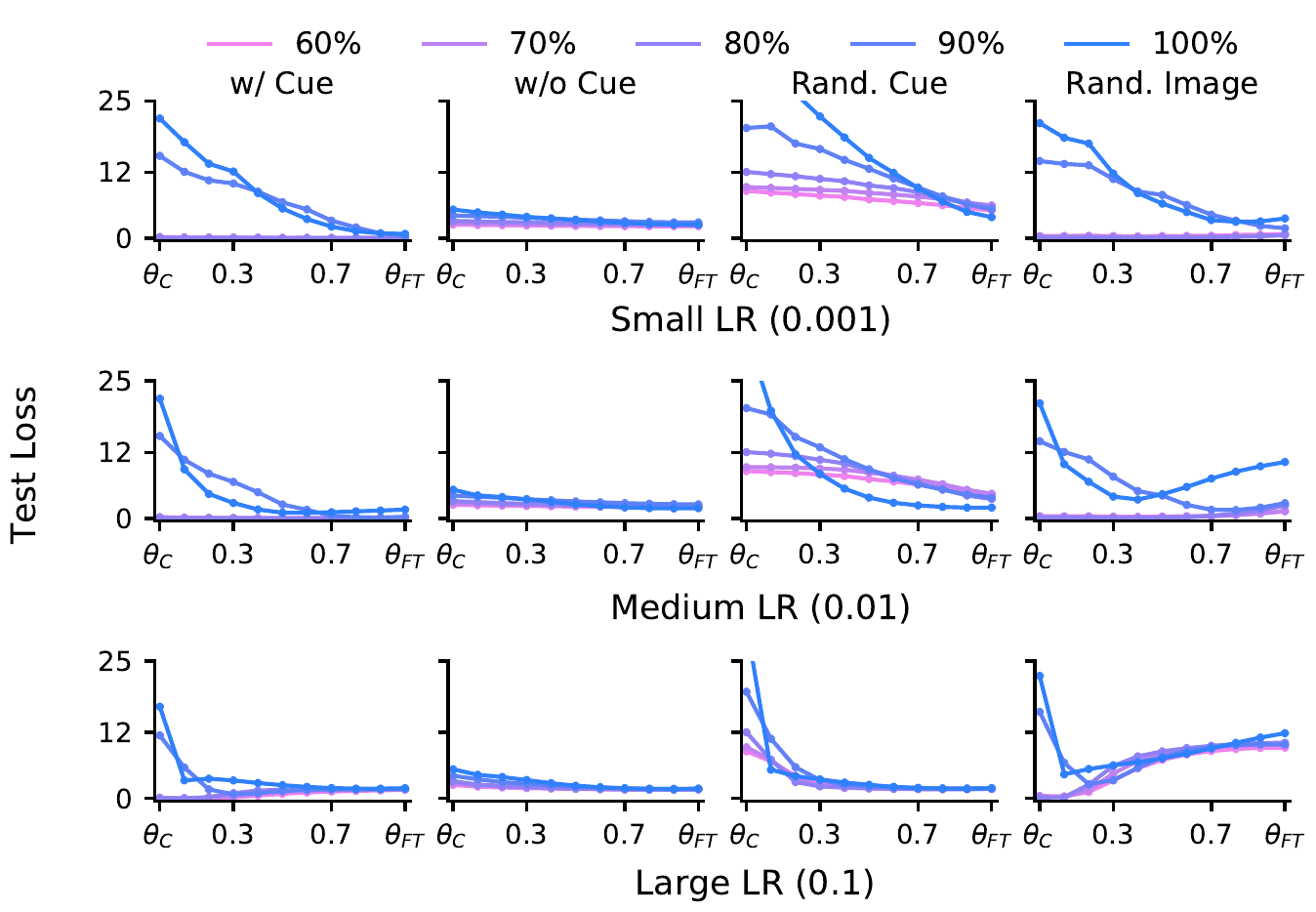}}
  \caption{ResNet18.}
\end{subfigure}
\vspace{-1mm}
\caption{\label{fig:testloss_ft_c100}\textbf{Fine-tuning of models trained on CIFAR-100 with Box / Color Cue}. We plot test loss curves along the linear path between $\theta_{\text{C}}$ and $\theta_{\text{FT}}$ and see thorough corroboration of our claims in the main text: Linearly connected minimizers exhibit mechanistic similarity, behaving identically on counterfactual datasets, indicating mechanistic connectivity.}
\vspace{5pt}
\end{figure}

\begin{figure}[H]
\centering
\begin{subfigure}[b]{0.49\textwidth}
  \centering
  \centerline{\includegraphics[width=\columnwidth]{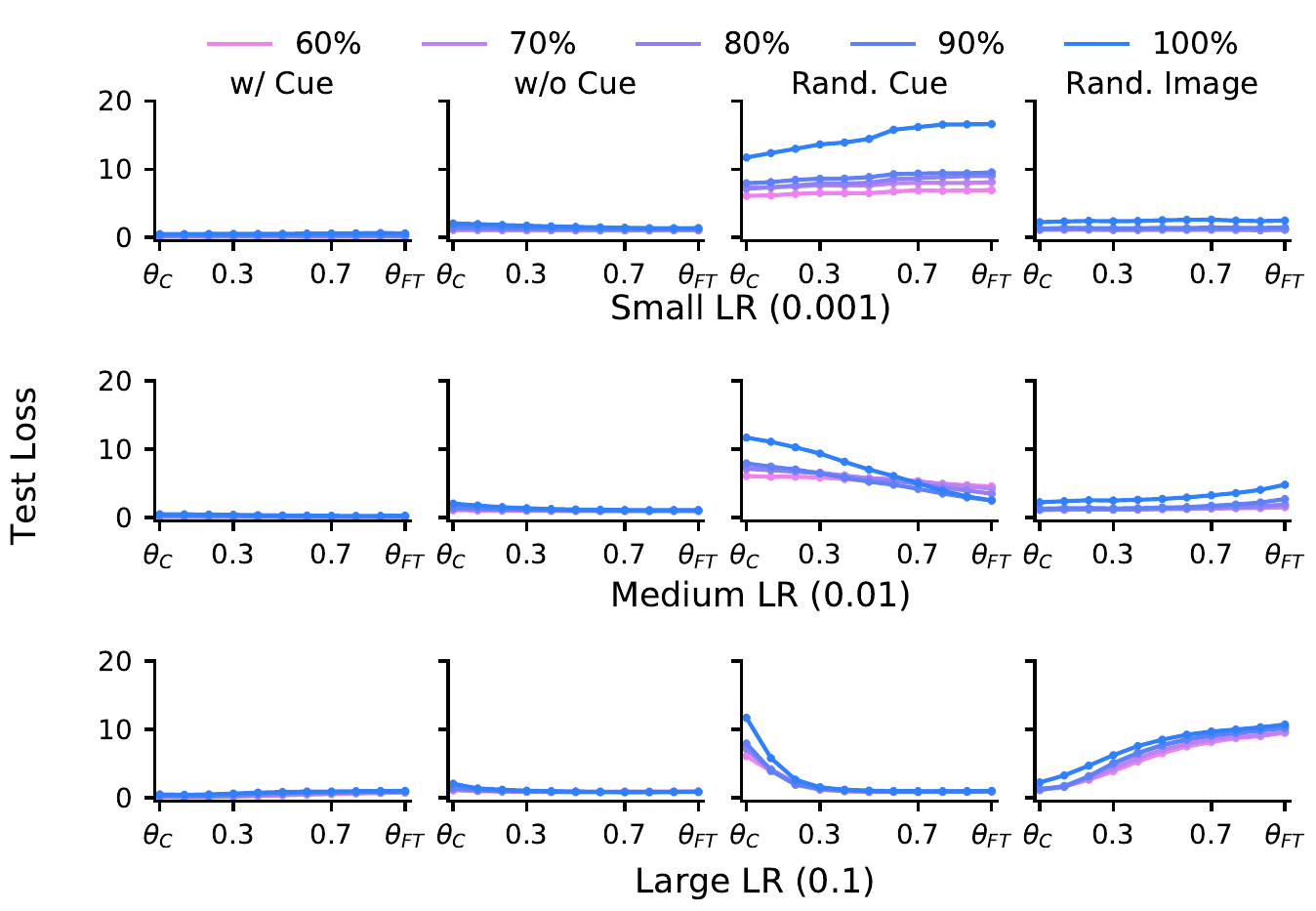}}
  \caption{VGG.}
\end{subfigure}%
\begin{subfigure}[b]{0.49\textwidth}
  \centering
  \centerline{\includegraphics[width=\columnwidth]{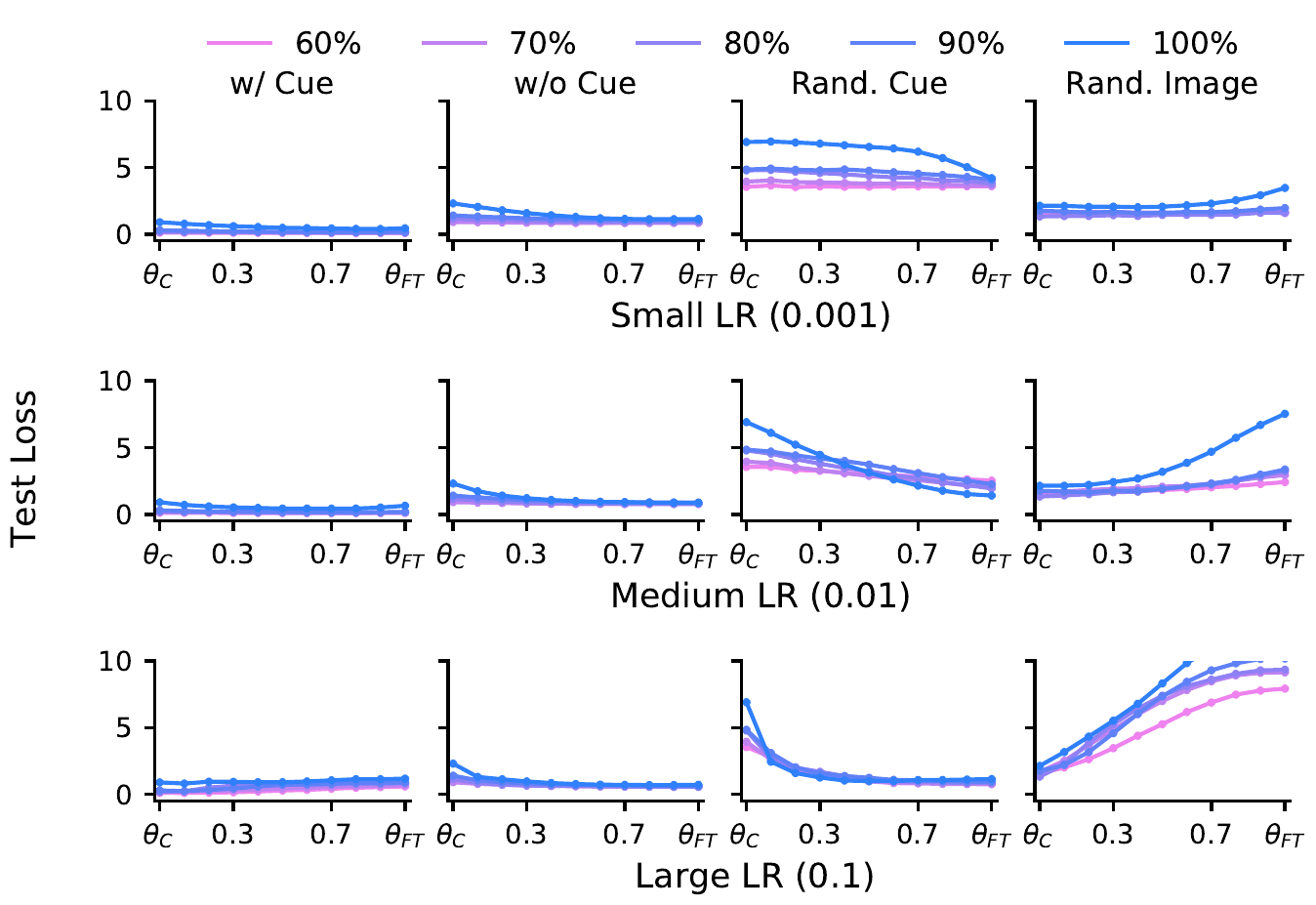}}
  \caption{ResNet18.}
\end{subfigure}
\vspace{-1mm}
\caption{\label{fig:testloss_ft_dominoes}\textbf{Fine-tuning of models trained on Dominoes}. We plot test loss curves along the linear path between $\theta_{\text{C}}$ and $\theta_{\text{FT}}$ and see thorough corroboration of our claims in the main text: Linearly connected minimizers exhibit mechanistic similarity, behaving identically on counterfactual datasets, indicating mechanistic connectivity.}
\vspace{5pt}
\end{figure}

\begin{figure}[H]
\centering
\begin{subfigure}[b]{0.49\textwidth}
  \centering
  \centerline{\includegraphics[width=\columnwidth]{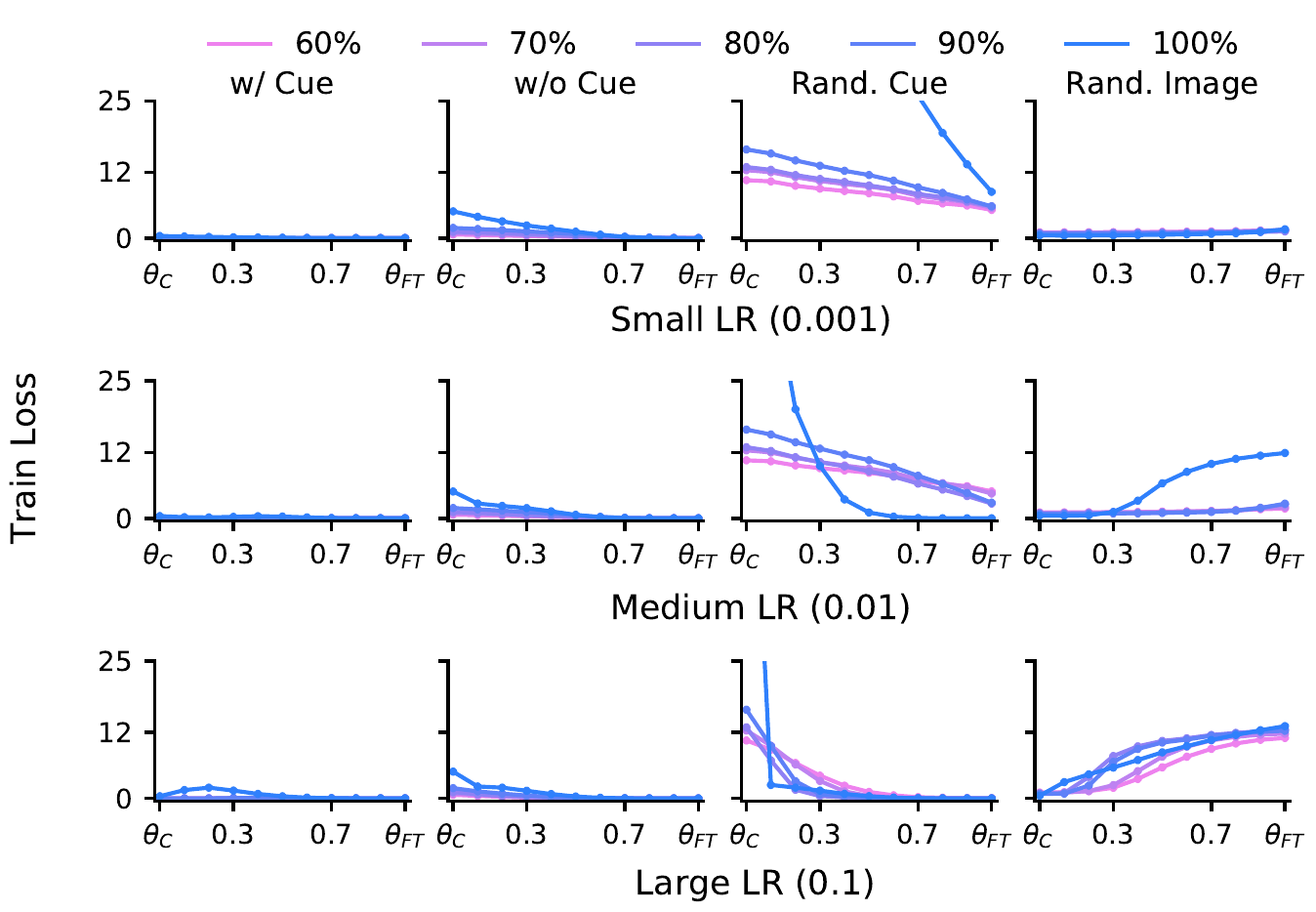}}
  \caption{VGG.}
\end{subfigure}%
\begin{subfigure}[b]{0.49\textwidth}
  \centering
  \centerline{\includegraphics[width=\columnwidth]{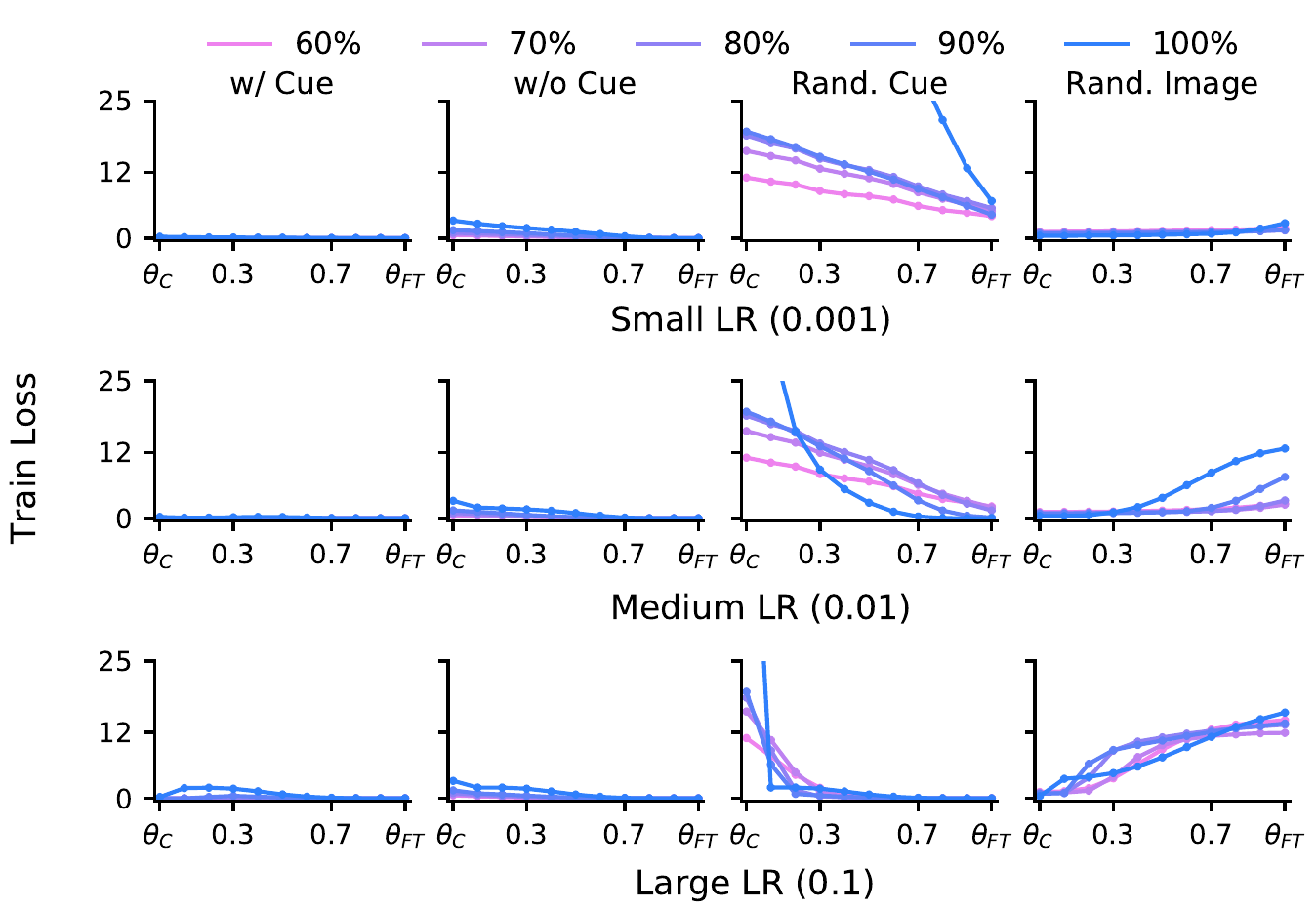}}
  \caption{ResNet18.}
\end{subfigure}
\vspace{-1mm}
\caption{\label{fig:trainloss_ft_c10}\textbf{Fine-tuning of models trained on CIFAR-10 with Box Cue}. We plot train loss curves along the linear path between $\theta_{\text{C}}$ and $\theta_{\text{FT}}$ and see thorough corroboration of our claims in the main text: Linearly connected minimizers exhibit mechanistic similarity, behaving identically on counterfactual datasets, indicating mechanistic connectivity.}
\vspace{5pt}
\end{figure}

\begin{figure}[H]
\centering
\begin{subfigure}[b]{0.49\textwidth}
  \centering
  \centerline{\includegraphics[width=\columnwidth]{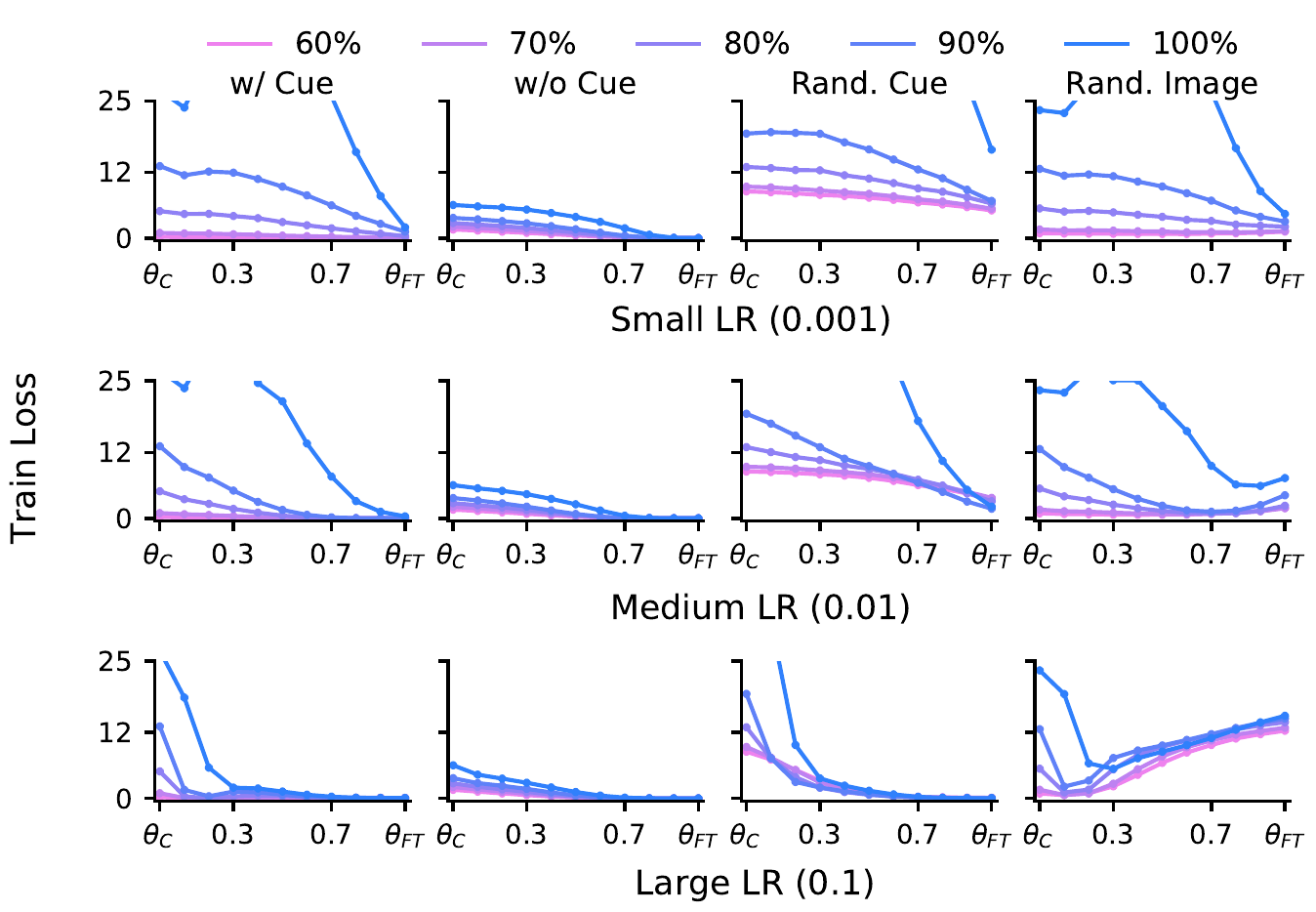}}
  \caption{VGG.}
\end{subfigure}%
\begin{subfigure}[b]{0.49\textwidth}
  \centering
  \centerline{\includegraphics[width=\columnwidth]{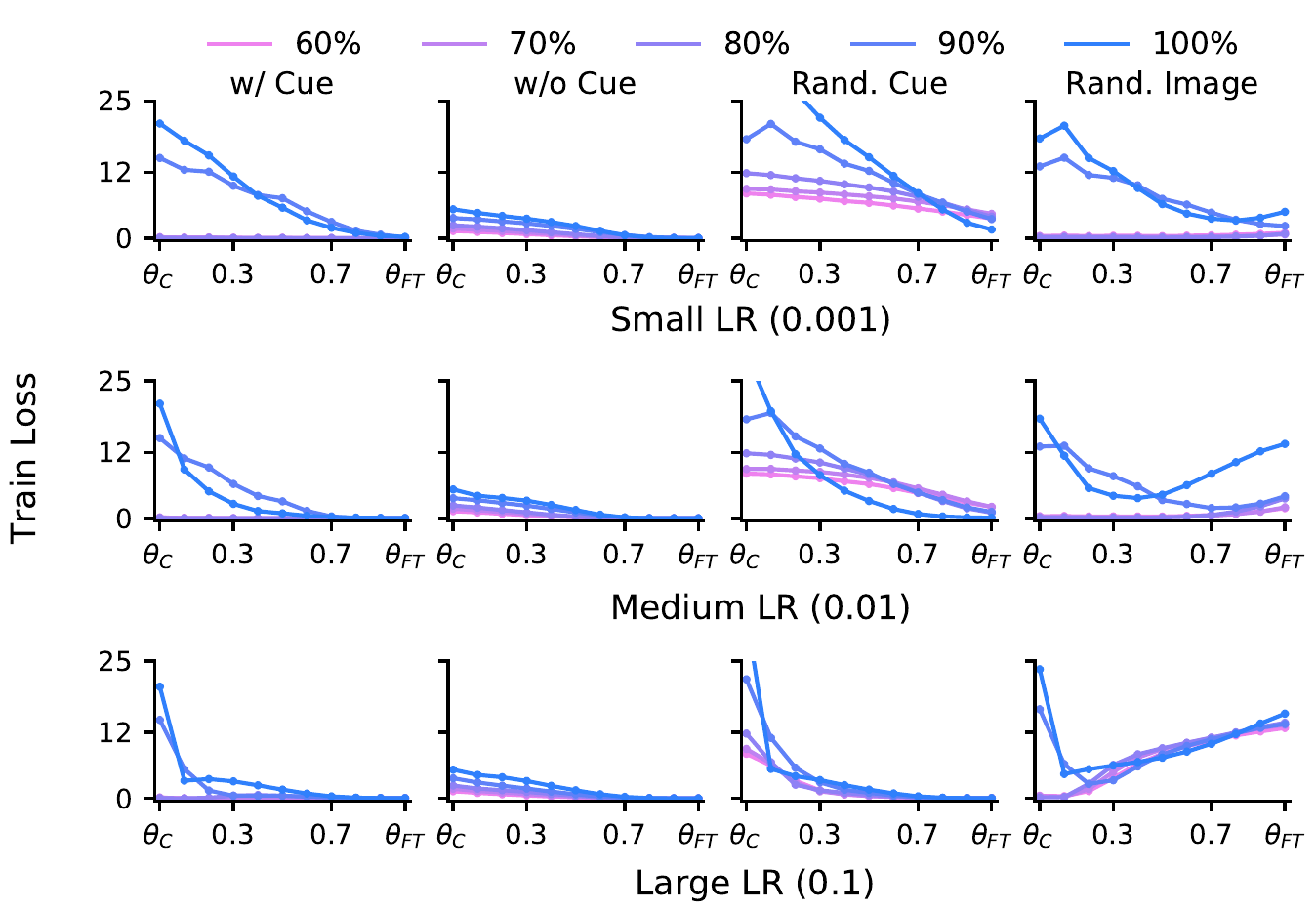}}
  \caption{ResNet18.}
\end{subfigure}
\vspace{-1mm}
\caption{\label{fig:trainloss_ft_c100}\textbf{Fine-tuning of models trained on CIFAR-100 with Box / Color Cue}. We plot train loss curves along the linear path between $\theta_{\text{C}}$ and $\theta_{\text{FT}}$ and see thorough corroboration of our claims in the main text: Linearly connected minimizers exhibit mechanistic similarity, behaving identically on counterfactual datasets, indicating mechanistic connectivity.}
\vspace{5pt}
\end{figure}

\begin{figure}[H]
\centering
\begin{subfigure}[b]{0.49\textwidth}
  \centering
  \centerline{\includegraphics[width=\columnwidth]{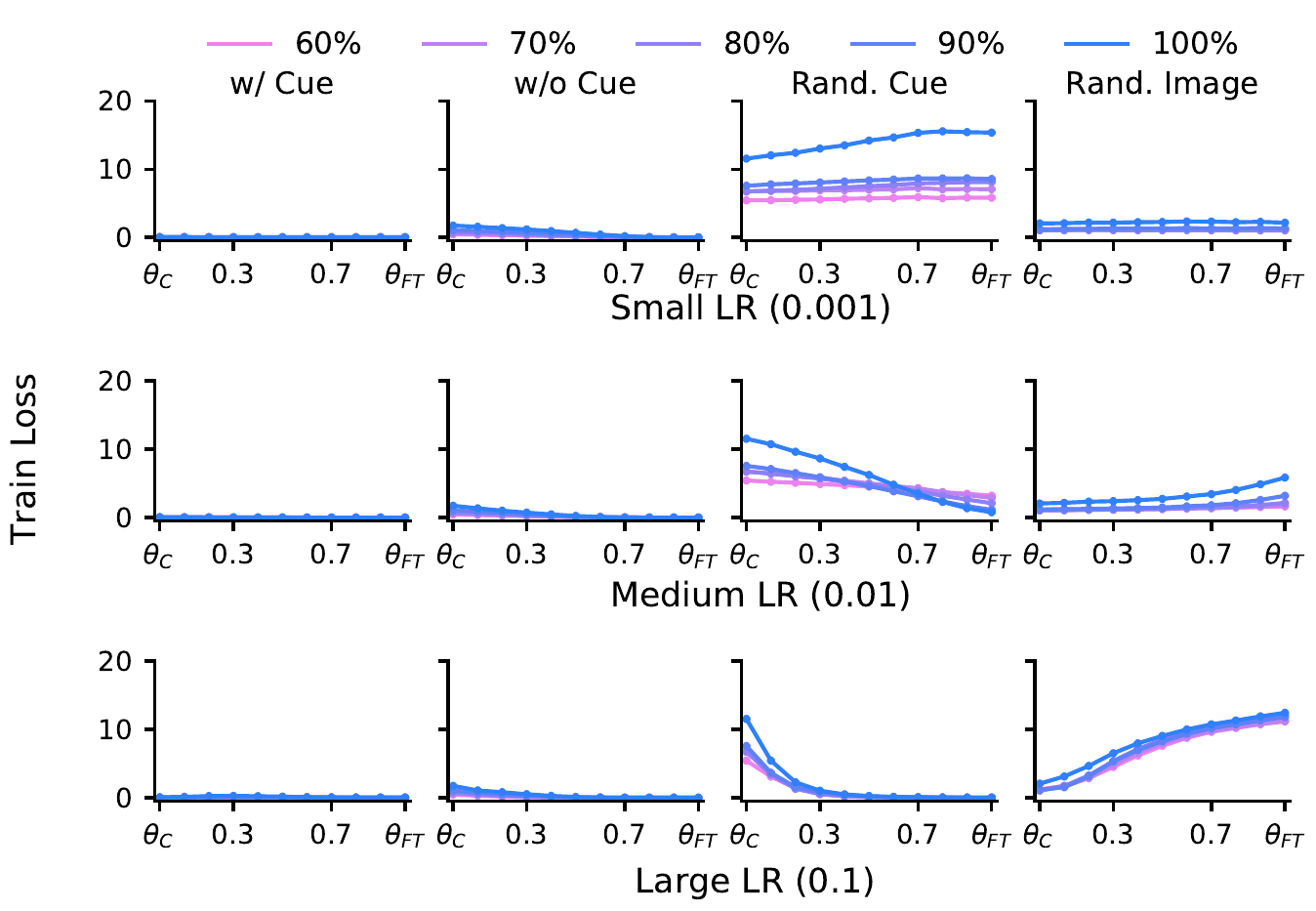}}
  \caption{VGG.}
\end{subfigure}%
\begin{subfigure}[b]{0.49\textwidth}
  \centering
  \centerline{\includegraphics[width=\columnwidth]{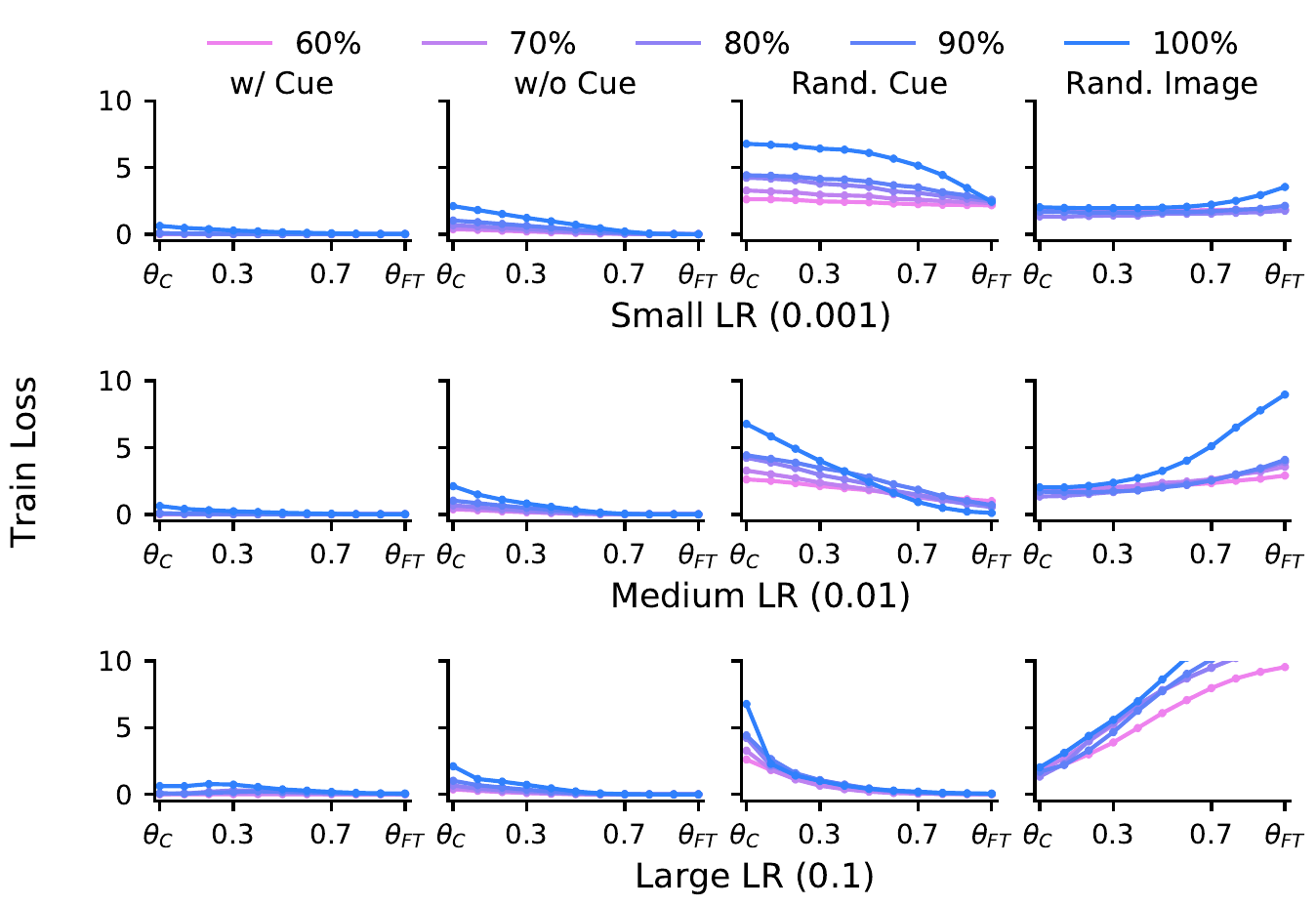}}
  \caption{ResNet18.}
\end{subfigure}
\vspace{-1mm}
\caption{\label{fig:trainloss_ft_dominoes}\textbf{Fine-tuning of models trained on Dominoes}. We plot train loss curves along the linear path between $\theta_{\text{C}}$ and $\theta_{\text{FT}}$ and see thorough corroboration of our claims in the main text: Linearly connected minimizers exhibit mechanistic similarity, behaving identically on counterfactual datasets, indicating mechanistic connectivity.}
\vspace{5pt}
\end{figure}

\end{document}